\documentclass{article}

    \PassOptionsToPackage{numbers, compress, sort}{natbib}



    \usepackage[final]{neurips_2022}


\usepackage[utf8]{inputenc} 
\usepackage[T1]{fontenc}    
\usepackage{hyperref}       
\usepackage{url}            
\usepackage{booktabs}       
\usepackage{amsfonts}       
\usepackage{nicefrac}       
\usepackage{microtype}      
\usepackage{xcolor}         

\usepackage{todonotes}
\usepackage{amsmath,bm,amsthm,amssymb,mathtools}
\usepackage{thmtools,thm-restate}
\usepackage{bbm}
\usepackage{wrapfig}
\usepackage{caption}
\usepackage{subcaption}
\usepackage{mleftright}
\usepackage{enumitem}
\usepackage{multirow}
\usepackage{tcolorbox}
\usepackage{nicematrix}
\usepackage[cmtip,all]{xy}







\newtheorem{theorem}{Theorem}
\newtheorem{lemma}[theorem]{Lemma}
\newtheorem{proposition}[theorem]{Proposition}
\newtheorem{definition}[theorem]{Definition}
\newtheorem{corollary}[theorem]{Corollary}









\def\eqref#1{equation~\ref{#1}}









\def\1{\bm{1}}

\def\eps{{\epsilon}}








\def\va{{\bf{a}}}

\def\ve{{\bf{e}}}

\def\vg{{\bf{g}}}
\def\vh{{\bf{h}}}

\def\vx{{\bf{x}}}
\def\vy{{\bf{y}}}
\def\vz{{\bf{z}}}


\def\mA{{\bf{A}}}
\def\mB{{\bf{B}}}

\def\mD{{\bf{D}}}

\def\mF{{\bf{F}}}

\def\mI{{\bf{I}}}

\def\mL{{\bf{L}}}

\def\mP{{\bf{P}}}

\def\mR{{\bf{R}}}

\def\mV{{\bf{V}}}
\def\mW{{\bf{W}}}
\def\mX{{\bf{X}}}
\def\mY{{\bf{Y}}}



\def\gC{{\mathcal{C}}}

\def\gF{{\mathcal{F}}}
\def\gG{{\mathcal{G}}}
\def\gH{{\mathcal{H}}}

\def\gL{{\mathcal{L}}}

\def\gO{{\mathcal{O}}}
\def\gP{{\mathcal{P}}}



\def\sN{{\mathbb{N}}}

\def\sR{{\mathbb{R}}}








\newcommand{\R}{\mathbb{R}}




\newcommand{\tleq}{\trianglelefteq}

\DeclarePairedDelimiter{\norm}{\lVert}{\rVert}

\DeclareMathOperator{\Tr}{\mathrm{Tr}}

\newcommand{\first}[1]{\mathbf{\textcolor{red}{#1}}}
\newcommand{\second}[1]{\mathbf{\textcolor{blue}{#1}}}
\newcommand{\third}[1]{\mathbf{\textcolor{violet}{#1}}}

\title{Neural Sheaf Diffusion: 
A Topological Perspective on Heterophily and Oversmoothing in GNNs}

%

\author{%
Cristian Bodnar\thanks{Work done as a research intern at Twitter.} \\
University of Cambridge\\
\texttt{cristian.bodnar@cl.cam.ac.uk} \\
\And
Francesco Di Giovanni\thanks{Proved the results in Section~\ref{sec:harmonic_space}.} \\
Twitter \\
\texttt{fdigiovanni@twitter.com} \\
\AND
Benjamin P. Chamberlain \\
Twitter \\
\And
Pietro Li\`{o} \\
University of Cambridge \\
\And
Michael Bronstein \\
University of Oxford \& Twitter \\
}

\begin{document}

\maketitle

\begin{abstract}
Cellular sheaves equip graphs with a ``geometrical'' structure by assigning vector spaces and linear maps to nodes and edges. Graph Neural Networks (GNNs) implicitly assume a graph with a trivial underlying sheaf. This choice is reflected in the structure of the graph Laplacian operator, the properties of the associated diffusion equation, and the characteristics of the convolutional models that discretise this equation. In this paper, we use cellular sheaf theory to show that the underlying geometry of the graph is deeply linked with the performance of GNNs in heterophilic settings and their oversmoothing behaviour. By considering a hierarchy of increasingly general sheaves, we study how the ability of the sheaf diffusion process to achieve linear separation of the classes in the infinite time limit expands. At the same time, we prove that when the sheaf is non-trivial, discretised parametric diffusion processes have greater control than GNNs over their asymptotic behaviour. On the practical side, we study how sheaves can be learned from data. The resulting sheaf diffusion models have many desirable properties that address the limitations of classical graph diffusion equations (and corresponding GNN models) and obtain competitive results in heterophilic settings. Overall, our work provides new connections between GNNs and algebraic topology and would be of interest to both fields.
\end{abstract}

\let\thefootnote\relax\footnotetext{Our code is available at \url{https://github.com/twitter-research/neural-sheaf-diffusion}.}
\begin{minipage}{0.5\textwidth}
\centering
    \includegraphics[width=0.8\textwidth]{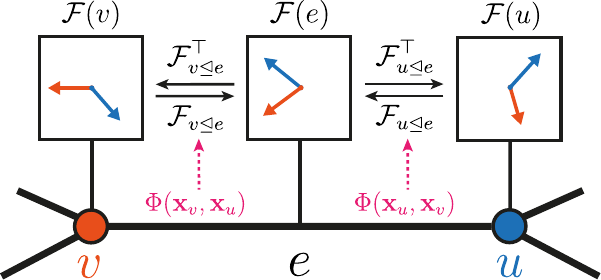}
    \captionof{figure}{A sheaf $(G, \gF)$ shown for a single edge of the graph. The \emph{stalks} are isomorphic to $\sR^2$. The \emph{restriction maps} $\gF_{v \tleq e}$, $\gF_{u \tleq e}$ and their adjoints move the vector features between these spaces. In practice, we learn the sheaf (i.e. the restrictions maps) from data via a parametric function $\Phi$.}
    \label{fig:sheaf}
\end{minipage}
\hspace{11pt}
\begin{minipage}{0.44\textwidth}
\centering
    \includegraphics[width=0.62\textwidth]{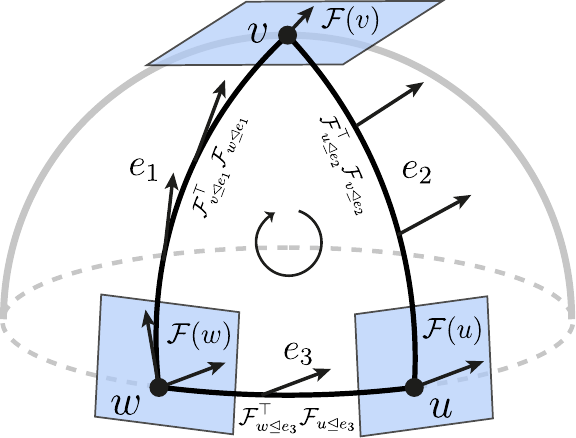}
    \captionof{figure}{Analogy between parallel transport on a sphere and transport on a discrete vector bundle (cellular sheaf). A tangent vector is moved from $\gF(w) \to \gF(v) \to \gF(u)$ and back. Because the vector returns in a different position, the transport is not path-independent.}
    \label{fig:sheaf_transport}
\end{minipage}

\section{Introduction}

Graph Neural Networks (GNNs) 
\cite{sperduti1994encoding,goller1996learning,gori2005new,scarselli2008graph,bruna2013spectral,defferrard2016convolutional,kipf2017graph,gilmer2017neural}
have recently become very popular in the ML community as a model of choice to deal with relational and interaction data due to their multiple successful applications in domains ranging from social science and particle physics to structural biology and drug design.  In this work, we focus on two main problems often observed in GNNs: their poor performance in heterophilic graphs \cite{zhu2020beyond} and their oversmoothing behaviour \cite{os1,os2}. The former arises from the fact that many GNNs are built on the strong assumption of {\em homophily}, i.e., that nodes tend to connect to other similar nodes. The latter refers to a phenomenon of some deeper GNNs producing features that are too smooth to be useful.  

\textbf{Contributions. } We show that these two fundamental problems are linked by a common cause: the underlying ``geometry'' of the graph (used here in a very loose sense). When this geometry is trivial, as is typically the case, the two phenomena described above emerge. We make these statements precise through the lens of (cellular) sheaf theory \citep{curry2014sheaves, shepard1985cellular, bredon2012sheaf, maclane2012sheaves, rosiak2021sheaf, ghrist2014elementary}, a subfield of algebraic topology and geometry. Intuitively, a cellular sheaf associates a vector space to each node and edge of a graph, and a linear map between these spaces for each incident node-edge pair (Figure \ref{fig:sheaf}).

In Section \ref{sec:diffusion_power_big}, we analyse how by considering a hierarchy of increasingly general sheaves, starting from a trivial one, a diffusion equation based on the sheaf Laplacian \citep{hansen2019toward} can solve increasingly more complicated node-classification tasks in the infinite time limit. In this regime, we show that oversmoothing and problems due to heterophily can be avoided by equipping the graph with the right sheaf structure for the task. In Section \ref{sec:oversmoothing}, we study the behaviour of a non-linear, parametric, and discrete version of this process. This results in a {\em Sheaf Convolutional Network}~\citep{hansen2020sheaf} that generalises Graph Convolutional Networks~\citep{kipf2017graph}. We prove that this discrete diffusion process is more flexible and has greater control over its asymptotic behaviour than GCNs~\citep{cai2020note, oono2019graph}. 
All these results are based on the properties of the harmonic space of the sheaf Laplacian, which we study from a spectral perspective in Section \ref{sec:harmonic_space}. We provide a new Cheeger-type inequality for the spectral gap of the sheaf Laplacian and note that these results might be of independent interest for spectral sheaf theory~\citep{hansen2019toward}. Finally, in Section \ref{sec:sheaf_learning}, we apply our theory to designing simple and practical GNN models.  
We describe how to construct Sheaf Neural Networks by learning sheaves from data, thus making these types of models applicable beyond the toy experimental setting where they were originally introduced~\citep{hansen2020sheaf}. The resulting models obtain competitive results both in heterophilic and homophilic graphs.

\section{Background}

\textbf{Cellular Sheaves. } A \emph{cellular sheaf} \citep{curry2014sheaves, shepard1985cellular} over a graph (Figure \ref{fig:sheaf}) is a mathematical object associating a vector space to each node and edge in the graph and a map between these spaces for each incident node-edge pair. We define this formally below:

\begin{definition}
A cellular sheaf $(G, \gF)$ on an undirected graph $G = (V, E)$ consists of:
\begin{itemize}[leftmargin=10mm, topsep=0pt,itemsep=-0.4ex]
    \item A vector space $\gF(v)$ for each $v \in V$.
    \item A vector space $\gF(e)$ for each $e \in E$.
    \item A linear map $\gF_{v \trianglelefteq e}: \gF(v) \to \gF(e)$ for each incident $v \trianglelefteq e$ node-edge pair.
\end{itemize}
\end{definition}
\vspace{-5pt}
The vector spaces of the nodes and edges are called \emph{stalks}, while the linear maps are referred to as \emph{restriction maps}. The space formed by all the spaces associated with the nodes of the graph is called the space of 0-{\em cochains} $C^0(G;\gF) :=\bigoplus_{v \in V}\gF(v)$, where $\bigoplus$ denotes the direct sum of vector spaces.
For a 0-cochain $\vx \in C^0(G;\gF)$, we use $\vx_v$ to refer to the vector in $\gF(v)$ of node $v$. \citet{hansen2021opinion} have constructed a convenient mental model for these objects based on opinion dynamics. In this context, $\vx_v$ is the `private opinion' of node $v$, while $\gF_{v \tleq e}\vx_v$ expresses how that opinion manifests publicly in a `discourse space' formed by $\gF(e)$. A particularly important subspace of $C^0(G;\gF)$ is the space of \emph{global sections} $H^0(G; \gF) := \{ \vx \in C^0(G; \gF) : \gF_{v \tleq e} \vx_v = \gF_{u \tleq e}\vx_u\}$ containing those private opinions $\vx$ for which all neighbours $(v, u)$ agree with each other in the discourse space. 
Given a cellular sheaf $(G, \gF)$, we can define a {\em sheaf Laplacian} operator \citep{hansen2019toward} measuring the aggregated `disagreement of opinions' at each node:
\begin{definition}\label{def:laplacian}
The sheaf Laplacian of a sheaf $(G, \gF)$ is a linear map $L_{\gF}: C^0(G, \gF) \to C^0(G, \gF)$ defined node-wise as $L_\gF(\vx)_v := \sum_{v, u \tleq e}\gF_{v \tleq e}^\top(\gF_{v \tleq e} \vx_v - \gF_{u \tleq e}\vx_u)$. 
\end{definition}
\vspace{-5pt}
The sheaf Laplacian is a positive semi-definite block matrix (Figure \ref{fig:sheaf_laplacian}). The diagonal blocks are ${L_{\gF}}_{vv} = \sum_{v \tleq e} \gF_{v \tleq e}^\top \gF_{v \tleq e}$, while the non-diagonal blocks ${L_{\gF}}_{vu} = - \gF_{v \tleq e}^\top \gF_{u \tleq e}$. Denoting by $D$ the block-diagonal of $L_\gF$, the normalised sheaf Laplacian is given by $\Delta_\gF := D^{-1/2}L_\gF D^{-1/2}$. For simplicity, we assume that all the stalks have a fixed dimension $d$. In that case, the sheaf Laplacian is a $nd \times nd$ real matrix, where $n$ is the number of nodes of $G$. When the vector spaces are set to $\sR$ (i.e., $d=1$ ) and the linear maps to the identity map over $\sR$, the underlying sheaf is trivial and one recovers the well-known $n \times n$ graph Laplacian matrix and its normalised version $\Delta_0$. In general, $\Delta_\gF$ is preferred to $L_\gF$ for most practical purposes due to its bounded spectrum and, therefore, we focus on the former. A cochain $\vx$ is called {\em harmonic} if $L_\gF\vx = 0$ or, equivalently, if $\vx \in \mathrm{ker}(L_\gF)$. This means harmonic cochains are characterised by zero disagreements along all the edges of the graph, and it is not difficult to see that, in fact, $H^0(G; \gF)$ and $\mathrm{ker}(L_\gF)$ are isomorphic as vector spaces~\citep{hansen2021opinion}.

\begin{figure}[t]
    \centering
    \includegraphics[width=0.9\textwidth]{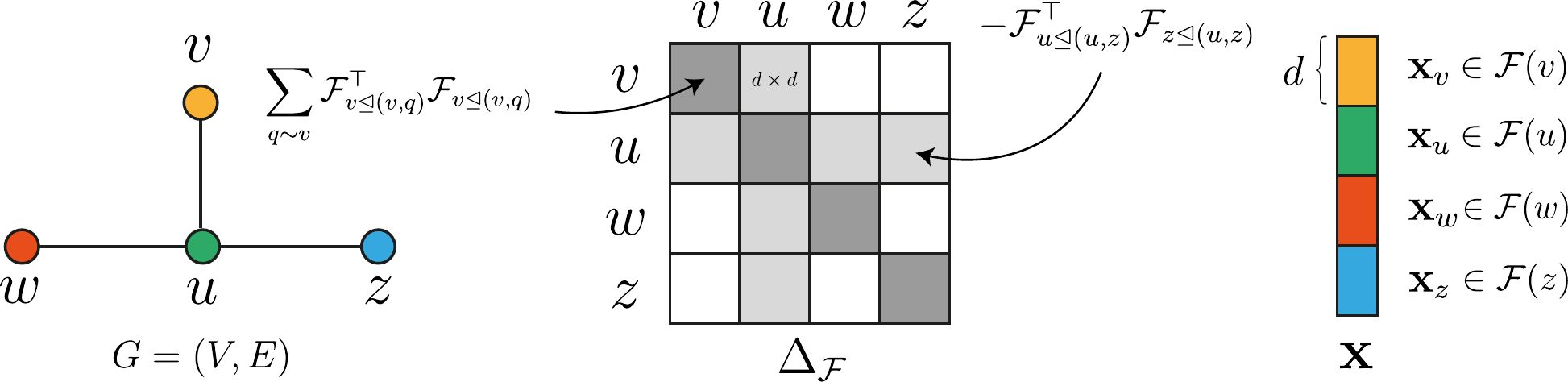}
    \caption{A graph \textit{(left)}, the Laplacian matrix of a sheaf with $d$-dimensional stalks over the graph \textit{(middle)} , and a $0$-cochain $\vx$ represented as a block-vector stacking the vectors of all nodes  \textit{(right)}.}
    \label{fig:sheaf_laplacian}
    \vspace{-10pt}
\end{figure}

The sheaves with orthogonal maps (i.e. $\gF_{v \tleq e} \in O(d)$ the Lie group of $d\times d$ orthogonal matrices) provide a more geometric interpretation of sheaves and play an important role in our analysis. Such sheaves are called \emph{discrete} $O(d)$ \emph{bundles} and can be seen as a discrete version of vector bundles~\citep{zein2009local, scoccola2021approximate, gao2021geometry} from differential geometry~\citep{tu2011manifolds}. Intuitively, these objects describe vector spaces attached to the points of a manifold. In our discrete case, the role of the manifold is played by the graph, and the sheaf Laplacian describes how the elements of a vector space are transported via rotations in another neighbouring vector space similarly to how tangent vectors are moved across a manifold via parallel transport (connection; see Figure \ref{fig:sheaf_transport}). Due to this analogy, the sheaf Laplacian on $O(d)$ \emph{bundles} is also referred to as {\em connection Laplacian} \citep{singer2012vector}.

\textbf{Heat Diffusion and GCNs. } Consider a graph with adjacency matrix $\mA$, diagonal degree matrix $\mD$, normalised graph Laplacian $\Delta_0 := \mI - \mD^{-1/2}\mA\mD^{-1/2}$, and an $n \times f$ feature matrix $\mX$. We can define the heat diffusion equation and its Euler discretisation with a unit step as follows: 
\begin{equation}
    \dot{\mX}(t) = -\Delta_0 \mX(t) \xymatrix{{}\ar@{~>}[r]&{}} \mX(t + 1) = \mX(t) - \Delta_0\mX(t) = {\color{red}(\mI - \Delta_0)\mX(t)}.
\end{equation}
Comparing this with the Graph Convolutional Network~\citep{kipf2017graph} model, we observe that GCN is an augmented heat diffusion process with an additional $f \times f$ weight matrix $\mW$ and a nonlinearity $\sigma$:
\begin{equation}\label{eq:gcn_diffusion}
    \mathrm{GCN}(\mX, \mA) :=  \sigma (\mD^{-1/2}\mA\mD^{-1/2}\mX\mW) = \sigma ({\color{red}(\mI - \Delta_0)\mX}\mW).
\end{equation}
From this perspective, it is perhaps not surprising that GCN is particularly affected by heterophily and oversmoothing since heat diffusion makes the features of neighbouring nodes increasingly smooth. In what follows, we consider a much more general and powerful family of (sheaf) diffusion processes leading to more expressive sheaf convolutions. 

\section{The Expressive Power of Sheaf Diffusion}\label{sec:diffusion_power_big}

\paragraph{Preliminaries.} Let us now assume $G$ to be a graph 
with $d$-dimensional node feature vectors $\vx_v \in \gF(v)$. The features of all nodes are represented as a single vector $\vx \in C^0(G; \gF)$ stacking all the individual $d$-dimensional vectors (Figure \ref{fig:sheaf_laplacian}). Additionally, if we allow for $f$ feature channels, everything can be represented as a matrix $\mX \in \sR^{(nd) \times f}$, whose columns are vectors in $C^0(G; \gF)$. We are interested in the spatially discretised {\em sheaf diffusion} process governed by the following PDE:
\begin{align}\label{eq:diffusion}
    \mX(0) = \mX,
    \quad 
    \dot{\mX}(t) = -\Delta_\gF \mX(t).\vspace{-2mm}
\end{align}
It can be shown that in the time limit, each feature channel is projected into $\mathrm{ker}(\Delta_\gF)$ \citep{hansen2019toward}. As described above (up to a $D^{-1/2}$ normalisation), this space contains the signals that agree with the restriction maps of the sheaf along all the edges. Thus, sheaf diffusion can be seen as a `synchronisation' process over the graph, where all the private opinions converge towards global agreement. 

In this section, we investigate the expressive power of this process within the infinite time limit. Because the asymptotic behaviour of sheaf diffusion is determined by the properties of $\mathrm{ker}(\Delta_\gF)$, in Section \ref{sec:harmonic_space}, we investigate when this subspace is non-trivial (i.e. it contains more than just the zero vector). In Section \ref{sec:diffusion_power}, we use this characterisation of the harmonic space to study what sort of sheaf diffusion processes will asymptotically produce projections into $\mathrm{ker}(\Delta_\gF)$ that can linearly separate the classes for various kinds of graphs and initial conditions. Since diffusion converges exponentially fast, the following results are also relevant for models with finite integration time or layers. 

\subsection{Harmonic Space of Sheaf Laplacians}\label{sec:harmonic_space}

A major role in the analysis below is played by discrete vector bundles, and we concentrate on this case. We note though that our results below generalise to the general linear group $\mathcal{F}_{v\tleq e}\in GL(d)$, the Lie group of $d\times d$ invertible matrices, provided we can also control the norm of the restriction maps from below. Given a discrete $O(d)$-bundle, $\gF_{v\tleq e}^\top\gF_{v\tleq e} = \mI_d$ and the block diagonal of $L_{\mathcal{F}}$ has a diagonal structure since ${L_{\mathcal{F}}}_{vv} = d_{v}\mI_{d}$, where $d_v$ is the degree of node $v$. Accordingly, if a signal $\tilde{\vx}\in \mathrm{ker}(L_\gF)$, then the signal $\vx:v\mapsto \sqrt{d_{v}}\tilde{\vx}_{v}\in\mathrm{ker}(\Delta_\gF)$ and similarly for the inverse transformation.  

Key to our analysis is studying \emph{transport} operators induced by the restriction maps of the sheaf. Given nodes $v,u \in V$ and a path $\gamma_{v\rightarrow u} = (v, v_{1}, \ldots, v_{\ell}, u)$ from $v$ to $u$, we consider a notion of \emph{transport} from the stalk $\mathcal{F}(v)$ to the stalk $\mathcal{F}(u)$, 
constructed by composing restriction maps (and their transposes) along the edges: \vspace{-1mm}
\[
\mP^{\gamma}_{v\rightarrow u}:=(\mathcal{F}_{u\tleq e}^\top\mathcal{F}_{v_{\ell}\tleq e})\ldots(\mathcal{F}_{v_{1}\tleq e}^\top\mathcal{F}_{v\tleq e}): \mathcal{F}(v)\rightarrow \mathcal{F}(u).
\]
For general sheaf structures, the graph transport is \emph{path dependent}, meaning that how the vectors are transported across two nodes depends on the path between them (see Figure \ref{fig:sheaf_transport}). In fact, we show that this property characterises the {\em spectral gap} of a sheaf Laplacian, i.e. the smallest eigenvalue of $\Delta_{\mathcal{F}}$. 
\begin{restatable}{proposition}{UpperBoundEigenv}\label{prop:upper_bound_eigenv} If $\mathcal{F}$ is a discrete $O(d)$ bundle over a connected graph and $ r := \max_{\gamma_{v\rightarrow u}, \gamma'_{v\rightarrow u}} \lvert\lvert \mP_{v\rightarrow u}^{\gamma} - \mP_{v\rightarrow u}^{\gamma'}\rvert\rvert,$ then we have 
$\lambda_{0}^{\mathcal{F}} \leq r^{2}/{2}$.
\end{restatable}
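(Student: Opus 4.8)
The plan is to bound the smallest eigenvalue $\lambda_0^{\mathcal{F}}$ of $\Delta_{\mathcal{F}}$ by plugging a single, cheaply-constructed test cochain into its variational characterisation. First I would rewrite the Rayleigh quotient in the unnormalised frame: since $\Delta_{\mathcal{F}} = D^{-1/2} L_{\mathcal{F}} D^{-1/2}$, the substitution $\vx = D^{1/2}\vy$ gives
\[
\lambda_0^{\mathcal{F}} = \min_{\vy \neq 0} \frac{\vy^\top L_{\mathcal{F}} \vy}{\vy^\top D \vy}
= \min_{\vy \neq 0} \frac{\sum_{e = (u,v)} \lVert \mathcal{F}_{v \tleq e} \vy_v - \mathcal{F}_{u \tleq e} \vy_u \rVert^2}{\sum_{v} d_v \lVert \vy_v \rVert^2},
\]
where I used the Dirichlet-energy form of $L_{\mathcal{F}}$ and, for an $O(d)$ bundle, the diagonal structure $D = \mathrm{diag}(d_v \mI_d)$ noted in the preliminaries. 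Any nonzero choice of $\vy$ then yields an upper bound on $\lambda_0^{\mathcal{F}}$.

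Next I would build such a $\vy$ by parallel-transporting a single unit vector along a spanning tree. Fix a spanning tree $T$ of the connected graph, a root $v_0$, and a unit vector $a \in \mathcal{F}(v_0)$; for each node $u$ set $\vy_u := \mP^{T}_{v_0 \to u}\, a$, the transport of $a$ along the unique tree path. Because every restriction map lies in $O(d)$, each transport operator is orthogonal, so $\lVert \vy_u \rVert = 1$ for all $u$ and the denominator equals $\sum_v d_v = 2|E|$. The crucial point is that tree edges contribute nothing to the numerator: if $e=(u,v)$ is a tree edge with $u$ the child of $v$, then by construction $\vy_u = \mathcal{F}_{u \tleq e}^\top \mathcal{F}_{v \tleq e}\vy_v$, whence $\mathcal{F}_{u \tleq e}\vy_u = \mathcal{F}_{u \tleq e}\mathcal{F}_{u \tleq e}^\top \mathcal{F}_{v \tleq e}\vy_v = \mathcal{F}_{v \tleq e}\vy_v$, using $\mathcal{F}_{u \tleq e}\mathcal{F}_{u \tleq e}^\top = \mI_d$.

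It then remains to bound each non-tree edge. For such an edge $e=(u,v)$, orthogonality of $\mathcal{F}_{u \tleq e}^\top$ lets me rewrite the summand as $\lVert \mathcal{F}_{u \tleq e}^\top \mathcal{F}_{v \tleq e}\vy_v - \vy_u \rVert$. Here $\mathcal{F}_{u \tleq e}^\top \mathcal{F}_{v \tleq e}\vy_v = \mP^{\gamma'}_{v_0 \to u}\, a$ is the transport of $a$ along the path $\gamma'$ that follows the tree to $v$ and then crosses $e$, while $\vy_u = \mP^{\gamma}_{v_0 \to u}\, a$ is transport along the tree path $\gamma$; both $\gamma$ and $\gamma'$ run between the same endpoints $v_0$ and $u$. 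Hence the summand equals $\lVert (\mP^{\gamma'}_{v_0 \to u} - \mP^{\gamma}_{v_0 \to u})\, a \rVert \leq r$, by the definition of $r$ together with $\lVert a \rVert = 1$. Summing over the $|E| - n + 1$ non-tree edges bounds the numerator by $(|E| - n + 1)\, r^2$, and dividing by $2|E|$ gives $\lambda_0^{\mathcal{F}} \leq \frac{|E|-n+1}{2|E|}\, r^2 \leq r^2/2$, since $n \geq 1$.

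The one step requiring care — and the place most likely to hide an adjoint or sign slip — is the bookkeeping that recasts $\mathcal{F}_{u \tleq e}^\top \mathcal{F}_{v \tleq e}$ composed with the tree transport as a genuine transport operator $\mP^{\gamma'}_{v_0 \to u}$ along a well-defined path with the \emph{same} endpoints, so that the definition of $r$ legitimately applies. This identification, and the vanishing of the tree-edge terms, both rely on all stalks having equal dimension $d$ with square orthogonal restriction maps (so that $\mathcal{F}\mathcal{F}^\top = \mathcal{F}^\top\mathcal{F} = \mI_d$), which is exactly the $O(d)$-bundle hypothesis; this is also where I expect the extension to $GL(d)$ to need the stated lower bound on the norms of the restriction maps. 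Everything else is either the standard Rayleigh-quotient inequality or the elementary count of non-tree edges.
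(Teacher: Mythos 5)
Your proof is correct and follows essentially the same route as the paper's: both plug a test cochain obtained by parallel-transporting a single fixed vector from a root along a chosen system of paths into the Rayleigh quotient (your substitution $\vx = D^{1/2}\vy$ is exactly the paper's $\sqrt{d_u}$ rescaling), and both hinge on recognising $\mathcal{F}_{u\tleq e}^\top\mathcal{F}_{v\tleq e}$ composed with the chosen transport to $v$ as a genuine transport $\mP^{\gamma'}_{v_0\rightarrow u}$ between the same endpoints, so that the definition of $r$ applies. The only difference is bookkeeping: the paper uses shortest paths and bounds every edge term by $r^2$, whereas your spanning tree makes the tree-edge terms vanish exactly, yielding the marginally sharper constant $\tfrac{|E|-n+1}{2|E|}\leq\tfrac{1}{2}$ (which, pleasantly, also recovers $\lambda_0^{\mathcal{F}}=0$ for trees), though this refinement is not needed for the stated bound.
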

A consequence of this result is that there is always a non-trivial harmonic space (i.e. $\lambda_{0}^{\mathcal{F}} = 0$) if the transport maps generated by an orthogonal sheaf are \emph{path-independent} (i.e. $r = 0$). Next, we address the opposite direction.
\begin{restatable}{proposition}{CycleHarmonic}\label{prop:cycle_harmonic}
If $\mathcal{F}$ is a discrete $O(d)$ bundle over a connected graph and $\vx\in H^{0}(G,\mathcal{F})$, then for any cycle $\gamma$ based at $v\in V$ we have 
$\vx_{v}\in \mathrm{ker}(\mP^{\gamma}_{v\rightarrow v} - \mI)$.
\end{restatable}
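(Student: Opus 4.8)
The plan is to reduce the statement to how a single global section transports across one edge and then to compose these elementary transports around the cycle. The key observation is a \emph{single-edge transport identity}: if $\vx \in H^0(G, \gF)$ and $e$ is an edge incident to both $v$ and $u$, then the elementary transport sends $\vx_v$ \emph{exactly} to $\vx_u$, i.e. $\gF_{u \tleq e}^\top \gF_{v \tleq e}\, \vx_v = \vx_u$. To prove this I would begin from the defining condition of a global section, $\gF_{v \tleq e}\vx_v = \gF_{u \tleq e}\vx_u$, multiply on the left by $\gF_{u \tleq e}^\top$, and invoke orthogonality $\gF_{u \tleq e}^\top \gF_{u \tleq e} = \mI_d$ to collapse the right-hand side to $\vx_u$. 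This is precisely the step where the $O(d)$-bundle hypothesis is essential: for a general $GL(d)$ sheaf one would only obtain $\gF_{u \tleq e}^\top \gF_{u \tleq e}\vx_u$ on the right, which need not equal $\vx_u$.

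Given the edge identity, I would write the cycle as a node sequence $v = w_0, w_1, \ldots, w_k = v$, with consecutive nodes $w_i, w_{i+1}$ joined by an edge $e_i$, so that $\mP^\gamma_{v \rightarrow v}$ is the ordered composition of the elementary transports $\gF_{w_{i+1} \tleq e_i}^\top \gF_{w_i \tleq e_i}$. An induction on $i$ then shows that the partial transport from $w_0$ to $w_i$ carries $\vx_v$ to $\vx_{w_i}$: the base case $i = 0$ is immediate, and the inductive step is a single application of the edge identity, which turns $\vx_{w_i}$ into $\vx_{w_{i+1}}$. Taking $i = k$ and using $w_k = v$ yields $\mP^\gamma_{v \rightarrow v}\vx_v = \vx_v$, equivalently $(\mP^\gamma_{v \rightarrow v} - \mI)\vx_v = 0$, which is the claim.

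There is no serious analytic obstacle in this argument; the only point requiring care is the orthogonality step, which is what converts the static agreement condition defining $H^0(G, \gF)$ into a genuine dynamical transport of $\vx_v$ onto each $\vx_{w_i}$. Conceptually, the result is the discrete analogue of the fact that a flat (parallel) section of a vector bundle must be invariant under the holonomy around every loop, and it pairs with Proposition~\ref{prop:upper_bound_eigenv} to pin down how path-(in)dependence of the transport controls the harmonic space.
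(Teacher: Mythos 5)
Your proof is correct and follows essentially the same route as the paper's: both derive the single-edge identity $\vx_u = \gF_{u \tleq e}^\top \gF_{v \tleq e}\vx_v$ from the agreement condition defining $H^0(G,\gF)$ and then compose these elementary transports around the cycle to get $\mP^\gamma_{v \rightarrow v}\vx_v = \vx_v$. If anything, your write-up is slightly more explicit, since you spell out the orthogonality step ($\gF_{u \tleq e}^\top \gF_{u \tleq e} = \mI_d$) that the paper's implication uses silently, and you formalise the composition around the cycle by induction.
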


This proposition highlights the interplay between the graph and the sheaf structure. A simple consequence of this result is that for any cycle-free subset $S\subset V$, we have that any sheaf (or connection-) Laplacian restricted to $S$ always admits a non-trivial harmonic space. 
A natural question connected to the previous result is whether a Cheeger-like inequality holds in the other direction. This turns out to be the case:

\begin{restatable}{proposition}{LowerBoundEigen}\label{prop:lower_bound_eigen}
Let $\mathcal{F}$ be a discrete $O(d)$ bundle over a connected graph $G$ with $n$ nodes and let $\lvert \lvert (\mP_{v\rightarrow v}^{\gamma} - \mI)\vx_v\rvert\rvert \geq \eps \lvert\lvert \vx_v\rvert\rvert$ for all cycles $\gamma_{v\rightarrow v}$. Then 
$\lambda_{0}^{\mathcal{F}} \geq \eps^{2} (2\mathrm{diam}(G) n\,d_{\text{max}})^{-1}$.
\end{restatable}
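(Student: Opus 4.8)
The plan is to lower-bound the Rayleigh quotient of $\Delta_\gF$ directly. Writing $\lambda_0^{\mathcal{F}} = \min_{\vx\neq 0}\langle \vx,\Delta_\gF\vx\rangle/\lVert\vx\rVert^2$ and setting $\tilde{\vx}_v := \vx_v/\sqrt{d_v}$ (so that $\vx = D^{1/2}\tilde{\vx}$, where $D = \mathrm{diag}(d_v\mI_d)$ since the bundle is orthogonal), the normalisation cancels and the Dirichlet energy reduces to the unnormalised form $\langle \vx,\Delta_\gF\vx\rangle = \langle\tilde{\vx},L_\gF\tilde{\vx}\rangle = \sum_{e=(u,v)}\lVert \mathcal{F}_{v\tleq e}\tilde{\vx}_v - \mathcal{F}_{u\tleq e}\tilde{\vx}_u\rVert^2 =: \sum_e \lVert\mathbf{r}_e\rVert^2$, where $\mathbf{r}_e$ is the per-edge ``disagreement''. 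The goal is then to show $\sum_e \lVert\mathbf{r}_e\rVert^2 \geq \eps^2(2\,\mathrm{diam}(G)\,n\,d_{\text{max}})^{-1}\lVert\vx\rVert^2$ for every $\vx$.

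First I would relate the \emph{holonomy deficit} at a node to these edge residuals along a cycle through it. The key single-edge identity uses orthogonality ($\mathcal{F}_{u\tleq e}^\top\mathcal{F}_{u\tleq e}=\mI_d$): transporting across $e=(v,u)$ gives $\mathcal{F}_{u\tleq e}^\top\mathcal{F}_{v\tleq e}\tilde{\vx}_v - \tilde{\vx}_u = \mathcal{F}_{u\tleq e}^\top\mathbf{r}_e$, whose norm equals $\lVert\mathbf{r}_e\rVert$ because $\mathcal{F}_{u\tleq e}^\top$ is an isometry. Composing along a cycle $\gamma=(v=v_0,\dots,v_\ell=v)$ and telescoping $\mP^{\gamma}_{v\rightarrow v}\tilde{\vx}_v - \tilde{\vx}_v = \sum_{k=0}^{\ell-1}\mP_{v_{k+1}\rightarrow v}\big(\mP_{v_k\rightarrow v_{k+1}}\tilde{\vx}_{v_k} - \tilde{\vx}_{v_{k+1}}\big)$, and using that every partial transport $\mP_{v_{k+1}\rightarrow v}$ is an isometry, the triangle inequality then yields the clean bound $\lVert(\mP^{\gamma}_{v\rightarrow v}-\mI)\tilde{\vx}_v\rVert \leq \sum_{e\in\gamma}\lVert\mathbf{r}_e\rVert$.

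Then I would invoke the hypothesis. Since $\lVert(\mP^{\gamma}_{v\rightarrow v}-\mI)\vx_v\rVert \geq \eps\lVert\vx_v\rVert$ is homogeneous, it applies verbatim to $\tilde{\vx}_v$, giving $\eps\lVert\tilde{\vx}_v\rVert \leq \sum_{e\in\gamma}\lVert\mathbf{r}_e\rVert$. Choosing for each node a simple cycle $\gamma$ through it of length $\ell \leq 2\,\mathrm{diam}(G)$ and applying Cauchy--Schwarz gives $\eps^2\lVert\tilde{\vx}_v\rVert^2 \leq \ell\sum_{e\in\gamma}\lVert\mathbf{r}_e\rVert^2 \leq 2\,\mathrm{diam}(G)\sum_e\lVert\mathbf{r}_e\rVert^2$, where the last step uses that a simple cycle contributes a subsum of the total energy. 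Summing over all $n$ nodes and using $\lVert\tilde{\vx}\rVert^2 = \sum_v \lVert\vx_v\rVert^2/d_v \geq \lVert\vx\rVert^2/d_{\text{max}}$ produces $\eps^2\lVert\vx\rVert^2/d_{\text{max}} \leq 2\,\mathrm{diam}(G)\,n\sum_e\lVert\mathbf{r}_e\rVert^2$, which rearranges to the claimed inequality after minimising over $\vx$.

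The analytic crux is the telescoping isometry bound of the second paragraph; once that is established the rest is Cauchy--Schwarz and summation. The step I expect to be the genuine obstacle is \emph{combinatorial}: ensuring that each node lies on a (simple) cycle of length at most $2\,\mathrm{diam}(G)$, so that the per-node estimate is available. This also pins down the scope of the statement, since for a node lying on no cycle the hypothesis is vacuous and one indeed has $\lambda_0^{\mathcal{F}}=0$ by the corollary to Proposition~\ref{prop:cycle_harmonic}; thus the proposition is really a statement about graphs in which every vertex sits on a short cycle, and I would make the required cycle-length bound precise as a small graph-theoretic lemma before assembling the estimate.
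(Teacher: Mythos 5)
Your analytic core coincides with the paper's own proof: rescale by the degrees, expand the holonomy deficit along a cycle as a telescoping sum of per-edge disagreements, use orthogonality so that each term has norm exactly equal to the edge residual, then apply the triangle inequality, Cauchy--Schwarz, and compare the cycle sum with the total Dirichlet energy. Where you differ is the assembly. You lower-bound the Rayleigh quotient of an \emph{arbitrary} $\vx$ by summing the per-node inequality over all $n$ vertices and using $\lVert\tilde{\vx}\rVert^2 \geq \lVert\vx\rVert^2/d_{\text{max}}$; the paper instead evaluates everything on a bottom unit eigenvector at a single vertex $v$ of maximal stalk norm (so $\lVert\vx_v\rVert^2 \geq 1/n$), and obtains the cycle it needs at that one vertex not combinatorially but algebraically: if every cycle at $v$ fixed $\vx_v$, then $\vx_v \neq 0$ would extend path-independently to a nonzero harmonic section, contradicting $\lambda_0^{\gF} > 0$ (which follows from Proposition~\ref{prop:cycle_harmonic} together with the hypothesis). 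Both assemblies yield the same constant; yours needs a suitable cycle at \emph{every} vertex, the paper's only at one vertex it does not get to choose.

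The step you defer is a genuine gap, and the lemma you propose cannot be proved as stated: in an odd cycle $C_{2k+1}$ the diameter is $k$, yet the unique simple cycle through any vertex has length $2k+1 = 2\,\mathrm{diam}(G)+1 > 2\,\mathrm{diam}(G)$. What one can prove is that the fundamental cycles of a BFS tree rooted at $v$ have length at most $2\,\mathrm{diam}(G)+1$, and, since backtracking cancels for orthogonal maps, every closed-walk holonomy at $v$ is a product of these generators and their inverses --- so if all of them fixed $\vx_v$, all holonomies would, producing a harmonic section; this repairs existence but only with $2\,\mathrm{diam}(G)+1$ (and extra bookkeeping, as fundamental cycles need not be edge-disjoint from themselves). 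You should know that the paper commits exactly the same unjustified step: its final display silently invokes $L \leq 2\,\mathrm{diam}(G)$ for a cycle over whose length its existence argument gives no control, so you have correctly isolated a defect the paper glosses over. Separately, your closing claim that a vertex lying on no cycle forces $\lambda_0^{\gF} = 0$ is false: attach a pendant vertex to a triangle carrying holonomy $-\mI$; the pendant vertex lies on no cycle, yet any harmonic section must vanish on the triangle and then, by injectivity of the restriction maps, at the pendant vertex as well, so $\lambda_0^{\gF} > 0$. What is true is only that the hypothesis is vacuous at such vertices --- which does break your per-node summation there, and shows the proposition itself fails on trees (vacuous hypothesis, $\lambda_0^{\gF}=0$) --- but it is not a fact you can invoke via the corollary you cite; a correct treatment would propagate the estimate from vertices that do lie on cycles rather than restrict the scope.
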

While the bound above is of little use in practice, it shows how the spectral gap of a 
sheaf Laplacian is indeed related to the deviation of the transport maps from being path-independent, as measured by $\eps$. We note that the Cheeger-like inequality presented here is not unique, and other types of bounds on $\lambda_0^{\gF}$ have been derived \citep{bandeira2013cheeger}.  
We conclude this section by further analysing the dimensionality of the harmonic space of discrete $O(d)$-bundles:

\begin{restatable}{lemma}{BundleHDim}\label{lemma:bundle_h0_dim}
Let $\gF$ be a discrete $O(d)$ bundle over a connected graph $G$. Then $\mathrm{dim}(H^0) \leq d$ and $\mathrm{dim}(H^{0}) = d$ if and only if the transport is path-independent. 
\end{restatable}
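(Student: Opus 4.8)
The plan is to study the linear evaluation map $\mathrm{ev}\colon H^{0}(G,\gF)\to \gF(w)$ sending a global section $\vx$ to its value $\vx_{w}$ at a fixed basepoint $w\in V$. The key observation is that because the bundle is orthogonal, $\gF_{u\tleq e}^{\top}\gF_{u\tleq e}=\mI$, so the section condition $\gF_{v\tleq e}\vx_{v}=\gF_{u\tleq e}\vx_{u}$ on an edge $e=(v,u)$ rearranges to $\vx_{u}=\gF_{u\tleq e}^{\top}\gF_{v\tleq e}\vx_{v}=\mP^{e}_{v\rightarrow u}\vx_{v}$; that is, adjacent stalk values are linked by the one-step transport. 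Iterating along any path $\gamma_{w\rightarrow u}$ gives $\vx_{u}=\mP^{\gamma}_{w\rightarrow u}\vx_{w}$. First I would use this to show $\mathrm{ev}$ is injective: if $\vx_{w}=0$ then $\vx_{u}=0$ for every $u$ by connectivity, so $\vx=0$. This immediately yields $\dim(H^{0})\le \dim\gF(w)=d$.

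Next I would identify the image of $\mathrm{ev}$ to handle the equality case. Choosing a spanning tree $T$ of $G$ and a vector $a\in\gF(w)$, I would define a candidate $\vx_{u}:=\mP^{T}_{w\rightarrow u}a$ via the unique tree path. This candidate satisfies the section condition on every tree edge by construction; for a non-tree edge $e$ the condition is equivalent to the holonomy $\mP^{c_{e}}_{w\rightarrow w}$ of the associated fundamental cycle $c_{e}$ fixing $a$. Since fundamental cycles generate all cycles and $\gamma\mapsto\mP^{\gamma}_{w\rightarrow w}$ is multiplicative under concatenation — with path reversal giving the transpose, hence inverse, because each one-step transport is orthogonal — the vector $a$ extends to a genuine global section if and only if $a\in\ker(\mP^{\gamma}_{w\rightarrow w}-\mI)$ for every cycle $\gamma$ based at $w$. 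The inclusion that a section must satisfy this is exactly Proposition~\ref{prop:cycle_harmonic}. Consequently
\[
\dim(H^{0})=\dim\Bigl(\textstyle\bigcap_{\gamma}\ker(\mP^{\gamma}_{w\rightarrow w}-\mI)\Bigr),
\]
which equals $d$ if and only if $\mP^{\gamma}_{w\rightarrow w}=\mI$ for every cycle $\gamma$ based at $w$.

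Finally I would translate triviality of holonomy into path-independence. In the forward direction, path-independence forces any cycle at $w$ to have the same transport as the trivial length-zero path, namely $\mI$. For the converse, I would transfer triviality from $w$ to an arbitrary node $v$ by conjugation: picking a path $\delta$ from $w$ to $v$, any cycle $c$ at $v$ gives a cycle $\delta$–$c$–$\bar\delta$ at $w$ with holonomy $(\mP^{\delta}_{w\rightarrow v})^{-1}\mP^{c}_{v\rightarrow v}\mP^{\delta}_{w\rightarrow v}=\mI$, forcing $\mP^{c}_{v\rightarrow v}=\mI$. Then for any two paths $\gamma,\gamma'$ from $v$ to $u$, their concatenation $\gamma$ with the reverse of $\gamma'$ is a cycle at $v$ whose trivial holonomy reads $(\mP^{\gamma'}_{v\rightarrow u})^{-1}\mP^{\gamma}_{v\rightarrow u}=\mI$, i.e. $\mP^{\gamma}_{v\rightarrow u}=\mP^{\gamma'}_{v\rightarrow u}$, which is precisely path-independence.

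I expect the main obstacle to be the equality case rather than the dimension bound: specifically, the well-definedness of the image characterization (arguing that a basepoint vector fixed by all cycle holonomies really does extend consistently across all paths, which the spanning-tree plus fundamental-cycle reduction is designed to control) and the conjugation step that promotes single-basepoint holonomy triviality to global path-independence. Orthogonality is used essentially throughout — to invert the restriction maps and to identify reversed transport with the inverse transport — which is why, as the surrounding text notes, extending beyond $O(d)$ to $GL(d)$ requires the additional lower bound on the norms of the restriction maps.
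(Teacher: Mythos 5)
Your proof is correct and follows essentially the same route as the paper's: a global section is determined by its value at a basepoint via the transport maps (giving $\mathrm{dim}(H^{0})\leq d$), and equality holds exactly when every cycle holonomy fixes the whole stalk, i.e.\ when the transport is path-independent. The paper asserts the extension-by-transport step and the passage between one basepoint and all of $G$ without detail, whereas your spanning-tree/fundamental-cycle construction and the conjugation argument fill in precisely those gaps --- a tightening of the same argument, not a different approach.
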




\subsection{The Linear Separation Power of Sheaf Diffusion}\label{sec:diffusion_power}

In what follows, we use the results above to analyse the ability of certain classes of sheaves to linearly separate the features in the limit of the diffusion processes they induce. We utilise this as a proxy for the capacity of certain diffusion processes to avoid oversmoothing.  

\begin{definition}
A hypothesis class of sheaves with $d$-dimensional stalks $\gH^d$ has {\em linear separation power} over a family of graphs $\gG$ if for any labelled graph $G = (V, E) \in \gG$, there is a sheaf $(\gF, G) \in \gH^d$ that can linearly separate the classes of $G$ in the time limit of Equation \ref{eq:diffusion} for 
almost all initial conditions. 
\end{definition}

Note that the restriction to almost all initial conditions is necessary because, in the limit, diffusion behaves like a projection in the harmonic space and there will always be degenerate initial conditions (e.g. the zero matrix) that will yield a zero projection. We will now show how the choice of the sheaf impacts the behaviour of the diffusion process. For this purpose, we will consider a hierarchy of increasingly general classes of sheaves. 
\paragraph*{Symmetric invertible.} 
$\gH_{\mathrm{sym}}^d := \{ (\gF, G) : \gF_{v \tleq e} = \gF_{u \tleq e},\ \mathrm{det}(\gF_{v \tleq e}) \neq 0 \}$.
We note that for $d=1$, the sheaf Laplacians induced by this class of sheaves coincides with the set of the well-known weighted graph Laplacians with strictly positive weights, which also includes the usual graph Laplacian (see proof in Appendix \ref{app:diffusion_power}). Therefore, this hypothesis class is of particular interest since it includes those graph Laplacians typically used by graph convolutional models such as GCN \citep{kipf2017graph} and ChebNet \citep{defferrard2016convolutional}. We first show that this class of sheaf Laplacians can linearly separate the classes in binary classification settings under certain homophily assumptions:

\begin{restatable}{proposition}{HOneSymHomophily}\label{prop:h1_sym_homophily}
Let $\gG$ be the set of connected graphs $G = (V, E)$ with two classes $A, B \subset V$ such that for each $v \in A$, there exists $u \in A$ and an edge $(v, u) \in E$. Then $\gH_{\mathrm{sym}}^1$ has linear separation power over $\gG$.  
\end{restatable}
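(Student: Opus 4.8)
The plan is to reduce the abstract linear-separation claim to an elementary statement about weighted degrees, exploiting that for $d=1$ the class $\gH_{\mathrm{sym}}^1$ consists precisely of weighted graph Laplacians with strictly positive edge weights. Writing $w_e := \gF_{v\tleq e}^2 > 0$ for the scalar restriction map on edge $e$, Definition \ref{def:laplacian} gives $(L_\gF)_{vv} = \sum_{e \ni v} w_e =: d_v$ and $(L_\gF)_{vu} = -w_e$, so $L_\gF$ is the weighted Laplacian and $\vx^\top L_\gF \vx = \sum_e w_e (x_v - x_u)^2$. Since $G$ is connected and all $w_e > 0$, this quadratic form vanishes exactly on constants, so $\mathrm{ker}(L_\gF) = \mathrm{span}(\mathbf{1})$ and hence $\mathrm{ker}(\Delta_\gF) = D^{1/2}\mathrm{ker}(L_\gF) = \mathrm{span}(\vy)$ with $\vy_v = \sqrt{d_v}$; the harmonic space is one-dimensional.

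Next I would make the diffusion limit explicit. Because $\Delta_\gF$ is symmetric positive semidefinite, $e^{-t\Delta_\gF}$ converges as $t \to \infty$ to the orthogonal projector onto $\mathrm{ker}(\Delta_\gF)$; applied channel-wise to $\mX$ this yields $\mX(\infty)_v = \sqrt{d_v}\,\vb$, where $\vb = \tfrac{1}{\lVert \vy\rVert^2}\vy^\top \mX(0) \in \sR^f$. The vector $\vb$ vanishes only when $\vy^\top \mX(0) = 0$, a measure-zero linear condition on the initial features; thus for almost all initial conditions $\vb \neq 0$, and every node's limiting feature is the positive multiple $\sqrt{d_v}$ of the common direction $\vb$. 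Consequently the points $\{\mX(\infty)_v\}$ all lie on a single ray through the origin, and a hyperplane with normal $\vb$ separates the two classes if and only if some threshold separates the scalars $\{\sqrt{d_v}\}$, since $\langle \sqrt{d_v}\,\vb, \vb\rangle = \sqrt{d_v}\lVert\vb\rVert^2$ is strictly increasing in $\sqrt{d_v}$. Linear separation of $A$ and $B$ is therefore equivalent to separating them by weighted degree.

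It remains to choose the sheaf so that the weighted degrees of $A$ and $B$ are cleanly split, and this is where the hypothesis on $\gG$ enters. Let $d_{\max}$ be the maximum unweighted degree of $G$. I would set $w_e = W$ for every edge $e$ with both endpoints in $A$ (equivalently $\gF_{v\tleq e} = \gF_{u \tleq e} = \sqrt{W}$) and $w_e = 1$ on all remaining edges, for a constant $W > d_{\max}$; every such map is nonzero, so the sheaf lies in $\gH_{\mathrm{sym}}^1$. By hypothesis each $v \in A$ is incident to at least one intra-$A$ edge, giving $d_v \geq W$. On the other hand, any edge incident to a node $u \in B$ has $u$ as an endpoint and so is never an intra-$A$ edge; hence all its incident weights equal $1$ and $d_u = \deg(u) \leq d_{\max} < W$. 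Thus $\sqrt{d_v} \geq \sqrt{W} > \sqrt{d_{\max}} \geq \sqrt{d_u}$ for all $v \in A$, $u \in B$, and any threshold in $(\sqrt{d_{\max}}, \sqrt{W})$ separates the classes, establishing linear separation power over $\gG$.

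I expect the only genuinely delicate point to be the reduction in the second paragraph: verifying that one-dimensionality of the harmonic space forces all limiting features onto a common ray, so that linear separability collapses to separability of the scalar weighted degrees. Once this is in place the weighting construction is the natural and essentially forced choice, with the intra-$A$ edge hypothesis guaranteeing that every $A$-node can be boosted while no $B$-node is affected.
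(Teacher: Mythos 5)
Your proof is correct and takes essentially the same route as the paper's: the identical sheaf construction (weight $W$ on intra-$A$ edges, $1$ elsewhere), the same identification of the harmonic eigenvector as the square root of the weighted degree, and the same projection-onto-kernel convergence argument. Your version is in fact slightly more careful than the paper's, since you make the ray-reduction step explicit and give a concrete threshold $W > d_{\max}$ yielding strict separation, where the paper only asserts separation ``for sufficiently large $\alpha$.''
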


\begin{figure}[t]
    \centering
    \includegraphics[width=\textwidth]{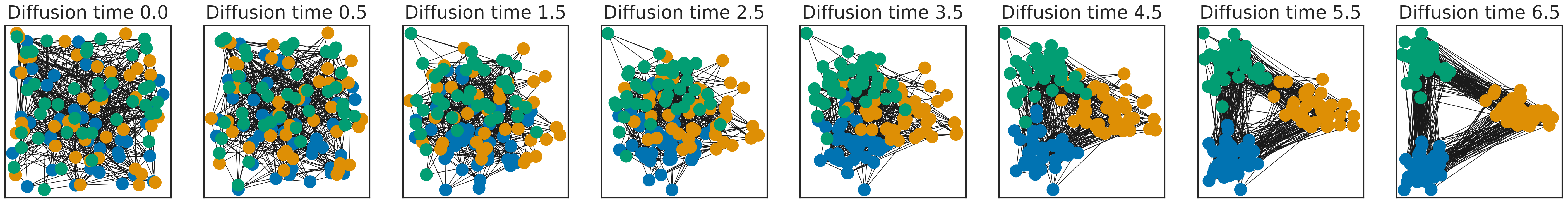}
    \caption{Diffusion process on $O(2)$-bundles progressively separates the classes of the graph.}
    \vspace{-15pt} 
    \label{fig:d2_diff}
\end{figure}

In contrast, under certain heterophilic conditions, this hypothesis class is not powerful enough to linearly separate the two classes no matter what the initial conditions are: 

\begin{restatable}{proposition}{HOneSymHeterophily}\label{prop:h1_sym_heterophily}
Let $\gG$ be the set of connected bipartite graphs $G = (A, B, E)$, with partitions $A, B$ forming two classes and $|A| = |B|$. Then  $\gH_{\mathrm{sym}}^1$ cannot linearly separate the classes of any graph in $\gG$ for any initial conditions $\mX(0) \in \sR^{n \times f}$.  
\end{restatable}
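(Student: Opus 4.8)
The plan is to show that every sheaf in $\gH_{\mathrm{sym}}^1$ forces the diffusion limit to place all node embeddings on a single ray through the origin, and then to use the bipartite structure together with $|A|=|B|$ to foreclose any threshold separation along that ray. First I would use that $\Delta_\gF$ is symmetric positive semi-definite, so $\mX(t)=e^{-t\Delta_\gF}\mX(0)\to P\mX(0)$ as $t\to\infty$, where $P$ is the orthogonal projection onto $\mathrm{ker}(\Delta_\gF)$. For $d=1$ and $\gF\in\gH_{\mathrm{sym}}^1$, each edge $e=(v,u)$ carries a single nonzero scalar $a_e=\gF_{v\tleq e}=\gF_{u\tleq e}$, so $L_\gF$ is exactly the weighted graph Laplacian with strictly positive weights $a_e^2$ (as already noted for this class). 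Since $G$ is connected, $\mathrm{ker}(L_\gF)=\mathrm{span}(\1)$, and the relation $\Delta_\gF\vx=0\iff D^{-1/2}\vx\in\mathrm{ker}(L_\gF)$ makes $\mathrm{ker}(\Delta_\gF)$ one-dimensional, spanned by $y$ with $y_v=\sqrt{D_{vv}}$, where $D_{vv}=\sum_{e\ni v}a_e^2$ is the weighted degree.

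Projecting each feature column of $\mX(0)$ onto this line then yields $\vx_v(\infty)=\beta_v\,\bm\alpha$ for every node $v$, where $\beta_v=\sqrt{D_{vv}}/\lVert y\rVert>0$ and $\bm\alpha\in\sR^f$ is a single vector shared by all nodes (its $j$-th entry is $\langle y,\mX(0)_{\cdot j}\rangle/\lVert y\rVert$). Hence all node embeddings are positive multiples of one fixed vector and lie on a common ray. Any linear functional $\vw^\top\vx_v(\infty)+b$ equals $(\vw^\top\bm\alpha)\beta_v+b$, which is monotone in $\beta_v$ and therefore in $D_{vv}$; separating $A$ from $B$ would thus require a threshold $\tau$ with (say) $D_{vv}<\tau$ for all $v\in A$ and $D_{vv}>\tau$ for all $v\in B$. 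The degenerate case $\bm\alpha=\vzero$ collapses every embedding to the origin and is trivially non-separable.

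The crux — and the step I expect to require the most care to state cleanly — is ruling out this threshold separation from the structure of $\gG$. Because $G$ is bipartite, every edge joins $A$ to $B$, so each weight $a_e^2$ is counted once on each side, giving $\sum_{v\in A}D_{vv}=\sum_{e\in E}a_e^2=\sum_{v\in B}D_{vv}$. If threshold separation held, then with $|A|=|B|=m$ we would obtain $\sum_{v\in A}D_{vv}<m\tau<\sum_{v\in B}D_{vv}$, contradicting the equality of the two sums (the case with the roles of $A,B$ swapped is identical). Since this argument is independent of the choice of $\gF\in\gH_{\mathrm{sym}}^1$ and of $\mX(0)$, no sheaf in the class can separate the classes, which is the claim. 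Both hypotheses are genuinely used: the equal weighted-degree sums come from bipartiteness, while the impossibility of inserting a threshold between two sets of equal sum and equal cardinality comes from $|A|=|B|$ — relaxing either would break the averaging argument, so this combinatorial identity is where the content lies, the spectral reduction being routine.
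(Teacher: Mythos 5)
Your proof is correct and follows essentially the same route as the paper's: both reduce the diffusion limit to the one-dimensional harmonic space spanned by the square roots of the weighted degrees, recast linear separability as a threshold condition on those degrees, and derive a contradiction from the bipartite identity $\sum_{v\in A} D_{vv} = \sum_{e\in E} a_e^2 = \sum_{u\in B} D_{uu}$ combined with $|A|=|B|$. Your treatment of the multi-channel projection (the shared vector $\bm{\alpha}$) and the degenerate case is somewhat more explicit than the paper's, but the substance is identical.
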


\paragraph*{Non-symmetric invertible.}
$\gH^d := \{ (\gF, G) : \mathrm{det}(\gF_{v \tleq e}) \neq 0\}$. This larger hypothesis class addresses the above limitation by allowing non-symmetric relations:

\begin{restatable}{proposition}{HOneLinearSeparation}\label{prop:h1_linear_separation}
Let $\gG$ contain all the connected graphs $G = (V, E)$ with two classes $A, B \subseteq V$. Consider a sheaf $(\gF; G) \in \gH^1$ with $\gF_{v \tleq e} = -\alpha_e$ if $v \in A$ and $\gF_{u \tleq e} = \alpha_e$ if $u \in B$ with $\alpha_e > 0$ for all $e \in E$. Then the diffusion induced by $(\gF; G)$ can linearly separate the classes of $G$ for almost all initial conditions, and $\gH^1$ has linear separation power over $\gG$. 
\end{restatable}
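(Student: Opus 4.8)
The plan is to exploit the one-dimensional structure of the harmonic space of this $GL(1)$-bundle and then read off the limit of the diffusion as a projection onto it. Write $\epsilon_v := -1$ for $v \in A$ and $\epsilon_v := +1$ for $v \in B$, so the prescribed restriction maps take the compact form $\gF_{v \tleq e} = \alpha_e\,\epsilon_v$ with $\alpha_e > 0$; since each $\gF_{v \tleq e} = \pm\alpha_e \neq 0$, the sheaf indeed lies in $\gH^1$. The hard part, and the one genuinely using the hypotheses, is to establish that the sign pattern $\epsilon$ behaves as a gauge transformation that trivialises the bundle, so that the harmonic space is always nontrivial (exactly one-dimensional) and carries no cycle obstruction of the kind in Proposition~\ref{prop:cycle_harmonic}; everything after that is the standard diffusion-to-projection argument together with a genericity claim.

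First I would characterise $H^0(G;\gF) = \mathrm{ker}(L_\gF)$. For $d=1$ the global-section condition on an edge $e=(v,u)$ reads $\gF_{v \tleq e}\tilde{\vx}_v = \gF_{u \tleq e}\tilde{\vx}_u$, i.e. $\alpha_e\,\epsilon_v\tilde{\vx}_v = \alpha_e\,\epsilon_u\tilde{\vx}_u$; cancelling $\alpha_e>0$ gives $\epsilon_v\tilde{\vx}_v = \epsilon_u\tilde{\vx}_u$. Hence the gauged quantity $\epsilon_v\tilde{\vx}_v$ is constant across every edge and, by connectivity of $G$, constant on all of $V$. Every harmonic cochain is therefore of the form $\tilde{\vx}_v = c\,\epsilon_v$ for a single scalar $c$, so $\mathrm{ker}(L_\gF) = \mathrm{span}(\epsilon)$ is exactly one-dimensional and always nonzero. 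Note that no consistency obstruction arises precisely because the assignment $\tilde{\vx}_v = c\,\epsilon_v$ satisfies all edge constraints simultaneously.

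Next I would transfer this to the normalised operator. Since $\Delta_\gF = D^{-1/2}L_\gF D^{-1/2}$ with $D$ the positive diagonal matrix $D_{vv} = \sum_{e\ni v}\alpha_e^2 > 0$, the invertible map $D^{1/2}$ sends $\mathrm{ker}(L_\gF)$ bijectively onto $\mathrm{ker}(\Delta_\gF)$, which is thus spanned by $\psi := D^{1/2}\epsilon$ with entries $\psi_v = \sqrt{D_{vv}}\,\epsilon_v$ retaining the sign $\epsilon_v$ (strictly negative on $A$, strictly positive on $B$). Because $\Delta_\gF$ is symmetric positive semi-definite, the solution $\mX(t) = e^{-t\Delta_\gF}\mX(0)$ of Equation~\ref{eq:diffusion} converges to the orthogonal projection $P_0\mX(0)$ onto this one-dimensional space. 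Writing $\hat\psi := \psi/\|\psi\|$ and $c_j := \langle \mX(0)_{:,j}, \hat\psi\rangle$, the limiting feature of node $v$ is the row $\mX(\infty)_{v,:} = \sqrt{D_{vv}}\,\epsilon_v\,(c_1,\dots,c_f)$.

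Finally I would conclude linear separation. For any readout $w \in \sR^f$ with $w^\top c \neq 0$, where $c := (c_1,\dots,c_f)$, the score $w^\top \mX(\infty)_{v,:}^\top = \sqrt{D_{vv}}\,\epsilon_v\,(w^\top c)$ has the sign of $\epsilon_v(w^\top c)$ since $\sqrt{D_{vv}}>0$, so thresholding at $0$ separates $A$ from $B$. Such a $w$ exists exactly when $c \neq 0$, i.e. when at least one channel of $\mX(0)$ is not orthogonal to $\hat\psi$; the degenerate set $\{\mX(0) : c = 0\}$ is the proper linear subspace $(\hat\psi^{\perp})^{\oplus f}$ of $\sR^{n\times f}$ and hence has Lebesgue measure zero. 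This yields separation for almost all initial conditions, and since the construction applies verbatim to every $G \in \gG$, the class $\gH^1$ has linear separation power over $\gG$.
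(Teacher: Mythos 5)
Your proof is correct and follows essentially the same route as the paper's: identify the one-dimensional harmonic space spanned by the class-signed vector $D^{1/2}\epsilon$, observe that the diffusion converges to the orthogonal projection onto it, and conclude separation for all initial conditions outside a measure-zero set. The only differences are cosmetic: where the paper cites Lemma~\ref{lemma:bundle_h0_dim} (in its weighted-bundle form) for one-dimensionality and Lemma~\ref{lemma:harmonic_convergence} for the projection limit, you derive both facts directly via the gauge computation and the spectral decomposition of $e^{-t\Delta_\gF}$, and your explicit readout $w$ makes the multi-channel separation step somewhat more careful than the paper's terse sign argument.
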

\vspace{-5pt}
Since $\gF_{v \tleq e}^\top \gF_{u \tleq e} = \pm \alpha_e^2$, the type of sheaf above can be interpreted as a discrete $O(1)$-bundle over a weighted graph with edge weights $\alpha_e^2$ and transport maps $\gF^\top_{v \tleq e}\gF_{u \tleq e} = -1$ for the inter-class edges and $+1$ for the intra-class edges. Intuitively, this type of transport, which is path-independent, polarises the features of the two classes and forces them to take opposite signs in the infinite limit. This provides a sheaf-theoretic explanation for why negatively-weighted edges have been widely adopted in heterophilic settings~\citep{yan2021two, chien2021adaptive, fagcn2021}. 

So far we have only studied the effects of changing the type of sheaves in dimension one. We now consider the effects of adjusting the dimension of the stalks and begin by stating a fundamental limitation of (sheaf) diffusion when $d=1$. 
\begin{restatable}{proposition}{ImpossibleSeparation}\label{prop:impossible_separation}
Let $G$ be a connected graph with $C \geq 3$ classes. Then, $\gH^1$ cannot linearly separate the classes of $G$ for any initial conditions $\mX(0) \in \sR^{n \times f}$. 
\end{restatable}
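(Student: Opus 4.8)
The plan is to show that for any sheaf in $\gH^1$ over a connected graph the diffusion of Equation~\ref{eq:diffusion} drives the features of every node onto a single line through the origin, and that no (homogeneous) linear classifier can carve such a line into three or more classes.

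First I would recall that $\Delta_\gF$ is symmetric positive semi-definite, so the solution $\mX(t) = e^{-t\Delta_\gF}\mX(0)$ converges to $P\,\mX(0)$, where $P$ is the orthogonal projection onto $\mathrm{ker}(\Delta_\gF)$; in particular every column of $\mX(\infty)$ lies in $\mathrm{ker}(\Delta_\gF)$. Since the block-diagonal $D$ is invertible, $\mathrm{ker}(\Delta_\gF)$ and $\mathrm{ker}(L_\gF)\cong H^0(G,\gF)$ have the same dimension. For $d=1$ the restriction maps are nonzero scalars $a_{v,e}:=\gF_{v\tleq e}$, and the condition defining $H^0$ reads $a_{v,e}\vx_v = a_{u,e}\vx_u$ for every edge $(v,u)$. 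As each $a_{v,e}\neq 0$, fixing $\vx$ at one node determines it at every neighbour, and by connectivity (propagating along a spanning tree) determines $\vx$ on all of $V$. Hence $\dim H^0 \le 1$ and $\mathrm{ker}(\Delta_\gF)=\mathrm{span}(\vh)$ for some $\vh\in\sR^{n}$ (with $\vh=0$ when the harmonic space is trivial).

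Consequently each column of $\mX(\infty)$ is a multiple of $\vh$, so the limiting feature of node $v$ is $\mX(\infty)_v=\vh_v\,\vc$ for a fixed $\vc\in\sR^{f}$ depending only on $\mX(0)$. All node features thus lie on the line $\{t\vc:t\in\sR\}$ through the origin. To conclude I would use that a homogeneous linear classifier, assigning to $\vx$ the label $\mathrm{argmax}_i\,\vw_i^\top\vx$, is invariant under positive scaling of $\vx$; it is therefore constant along each of the two open rays of this line and yields only a tie at the origin. Any such classifier hence assigns at most two labels with a strict margin to the points $\{\vh_v\vc\}_{v\in V}$, whereas correctly separating $C\ge 3$ nonempty classes demands at least three. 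This contradiction holds for every $\mX(0)$, establishing the claim.

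The main obstacle is the final geometric step: a line through the origin cannot be split into three strictly separated classes \emph{only} because the classifier is bias-free, so that the decision depends solely on the sign of $\vh_v$; allowing an affine offset would let three intervals be separated and the statement would fail. I would therefore make explicit that linear separation here means homogeneous separation (consistent with the sign-based separation used in Proposition~\ref{prop:h1_linear_separation}), and dispose of the degenerate nodes with $\vh_v=0$ by appealing to the strictness of the margin.
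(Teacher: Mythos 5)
Your first three steps are correct and coincide with the paper's proof: convergence of the diffusion to the orthogonal projection onto $\mathrm{ker}(\Delta_\gF)$ (Lemma \ref{lemma:harmonic_convergence}), the bound $\mathrm{dim}\,H^0(G,\gF)\le 1$ for invertible scalar restriction maps on a connected graph (your spanning-tree propagation argument is a clean direct substitute for the paper's appeal to Lemma \ref{lemma:bundle_h0_dim}), and the observation that every node's limiting feature is $\vh_v\,\vc$ for a fixed $\vc\in\sR^f$, i.e.\ all limiting features lie on a single line.

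The genuine gap is in your final step. You rule out only \emph{homogeneous} (bias-free) classifiers, and you defend this restriction by asserting that with an affine offset three collinear classes could be separated, so that the statement ``would fail.'' Under the notion of separation the paper actually uses --- each class must be separable by a hyperplane, affine allowed, from the union of the remaining classes, cf.\ Lemma \ref{lemma:convex_hull_separability} --- that assertion is false, and the proposition holds in the stronger affine sense. The paper's argument: pick nodes $v,u,w$ in three distinct classes with $\vh_v\le\vh_u\le\vh_w$; then $\vh_u=\alpha\vh_v+(1-\alpha)\vh_w$ for some $\alpha\in[0,1]$, hence $\vx_u(\infty)=\alpha\,\vx_v(\infty)+(1-\alpha)\,\vx_w(\infty)$, so the middle node's feature lies in the convex hull of features of nodes from the other classes, and by Lemma \ref{lemma:convex_hull_separability} no hyperplane --- with or without a bias term --- can separate that class from the rest. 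What an affine offset buys is pairwise (or argmax-style) separation of disjoint intervals on a line, but never one-vs-rest separation of the \emph{middle} interval, and the latter is what ``linearly separate the classes'' means here; indeed the paper's positive results (e.g.\ Proposition \ref{prop:h1_sym_homophily}) certify separation via thresholds, so the impossibility result must withstand affine classifiers to be comparable. Replacing your sign-based argument with this convexity (``sandwich'') argument closes the gap; everything preceding it can stand as written.
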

This is essentially a consequence of  $\mathrm{dim}\big(\mathrm{ker}(\Delta_\gF)\big)\leq 1$ 
in this case, by virtue of Lemma \ref{lemma:bundle_h0_dim}. From a GNN perspective, this means that in the infinite depth setting, sufficient \emph{stalk width} (i.e., dimension $d$) is needed in order to solve tasks involving more than two classes. Note that 
$d$ is different from the classical notion of feature channels $f$. As the result above shows, the latter has no effect on the linear separability of the classes in $d=1$. Next, we will see that the former does.

\paragraph*{Diagonal invertible.}
$\gH^d := \{ (\gF, G) : \mathrm{diagonal\ } \gF_{v \tleq e}, \, \mathrm{det}(\gF_{v \tleq e}) \neq 0\}$. 
%
The sheaves in this class can be seen as $d$ independent sheaves from $\gH^1$ encoded in the $d$-dimensional diagonals of their restriction maps. This perspective allows us to generalise Proposition \ref{prop:h1_linear_separation} to a multi-class setting: 
\begin{restatable}{proposition}{DiagSeparation}\label{prop:diag_separation}
Let $\gG$ be the set of connected graphs with nodes belonging to $C \geq 3$ classes. Then for $d \geq C$, $\gH_{\mathrm{diag}}^d$ has linear separation power over $\gG$.  
\end{restatable}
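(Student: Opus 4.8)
The plan is to exploit the fact that a diagonal sheaf decouples into $d$ independent one-dimensional sheaves, and then to run the scalar separation result of Proposition~\ref{prop:h1_linear_separation} on $C$ separate coordinates in a one-vs-rest fashion. First I would establish the decoupling. If every restriction map $\gF_{v \tleq e}$ is diagonal, write $\gF^{(i)}_{v \tleq e} := (\gF_{v \tleq e})_{ii}$ for its $i$-th diagonal entry; this defines a one-dimensional sheaf $\gF^{(i)}$ on $G$. Since products and transposes of diagonal matrices are diagonal, every block of $L_\gF$ is diagonal: the diagonal blocks $\sum_{v \tleq e}\gF_{v \tleq e}^\top \gF_{v \tleq e}$ and the off-diagonal blocks $-\gF_{v \tleq e}^\top \gF_{u \tleq e}$. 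Hence, after permuting the $nd$ coordinates to group them by stalk dimension $i$, both $L_\gF$ and the (diagonal) degree operator $D$ split, so that $\Delta_\gF = D^{-1/2}L_\gF D^{-1/2} \cong \bigoplus_{i=1}^{d} \Delta_{\gF^{(i)}}$. Consequently Equation~\ref{eq:diffusion} separates into $d$ independent scalar diffusions, $\mathrm{ker}(\Delta_\gF) = \bigoplus_{i=1}^{d} \mathrm{ker}(\Delta_{\gF^{(i)}})$, and the infinite-time limit of the full process is the stack of the $d$ scalar limits.

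Next, for each class $c \in \{1, \dots, C\}$ I would dedicate coordinate $c$ (this is where $d \geq C$ is used) to a one-vs-rest sheaf built exactly as in Proposition~\ref{prop:h1_linear_separation}, taking $A$ to be class $c$ and $B := V \setminus A$ to be the union of all remaining classes: set $\gF^{(c)}_{v \tleq e} = -\alpha_e$ when $v \in A$ and $+\alpha_e$ when $v \in B$, with $\alpha_e > 0$. These entries are nonzero, so the assembled diagonal sheaf lies in $\gH_{\mathrm{diag}}^d$ (surplus coordinates $c > C$ may be set to the identity). Because $A$ and $B$ partition $V$, every cycle crosses the $A$--$B$ boundary an even number of times, so the product of transports around any cycle is $+1$; the induced transport is therefore path-independent, and by Proposition~\ref{prop:h1_linear_separation} together with Lemma~\ref{lemma:bundle_h0_dim} in dimension one, $\mathrm{ker}(\Delta_{\gF^{(c)}})$ is one-dimensional, spanned by a harmonic signal carrying one sign on class $c$ and the opposite sign on its complement. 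Thus, for almost all initial conditions on coordinate $c$, the scalar limit is a nonzero multiple of this signal, which linearly separates class $c$ from the rest.

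Finally I would assemble the coordinates. In the limit, a node in class $c^\ast$ has a feature vector whose $c^\ast$-th coordinate carries the ``class'' sign while every other coordinate among $1,\dots,C$ carries the ``complement'' sign; distinct classes therefore produce distinct sign patterns, a one-hot-type encoding that is linearly separable, since coordinate $c$ alone furnishes a hyperplane isolating class $c$. The degenerate initial conditions are exactly those for which some coordinate's projection onto its harmonic signal vanishes; this is a finite union over the $C$ coordinates of measure-zero sets, hence itself measure zero, so separation holds for almost all $\mX(0)$. This establishes that $\gH_{\mathrm{diag}}^d$ has linear separation power over $\gG$ whenever $d \geq C$.

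The step I expect to require the most care is verifying the decoupling rigorously, namely that diagonality of the restriction maps survives the normalisation $D^{-1/2}L_\gF D^{-1/2}$ so that the kernel genuinely splits as a direct sum over coordinates, together with the clean bookkeeping of the ``almost all initial conditions'' clause across the $C$ coordinates (and, if one tracks $f$ feature channels, across channels as well). The per-coordinate separation is essentially handed to us by Proposition~\ref{prop:h1_linear_separation}; the only genuinely new ingredient is the one-vs-rest packing that turns $C$ binary separations into a single multiclass separation, so I would take care to argue that the resulting sign patterns are pairwise distinct and jointly separable.
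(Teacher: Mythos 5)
Your proposal is correct and takes essentially the same route as the paper's proof: diagonality decouples the diffusion into $d$ independent one-dimensional processes, the first $C$ coordinates are each given a one-vs-rest sheaf as constructed in Proposition~\ref{prop:h1_linear_separation}, and the degenerate initial conditions form a negligible set. The differences are only cosmetic—you spell out the block decoupling, the path-independence of the induced transport, and the separability of the resulting sign patterns, and you phrase genericity via measure zero where the paper argues via density—so nothing needs fixing.
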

This result illustrates the benefits of using higher-dimensional stalks while maintaining a simple and computationally convenient class of diagonal restriction maps. Next, with more complex restriction maps, we can show that lower-dimensional stalks can be used to achieve linear separation in the presence of even more classes. 

\paragraph*{Orthogonal.}

$\gH_{\mathrm{orth}}^d := \{ (\gF, G) : \gF_{v \tleq e} \in O(d)\} \nonumber$ is the class of $O(d)$-bundles. 
Orthogonal maps are able to make more efficient use of the space available to them than diagonal restriction maps: 

\begin{restatable}{proposition}{OrthSeparation}\label{theo:orth_separation}
Let $\gG$ be the class of connected graphs with $C \leq 2d$ classes. Then, for all $d \in \{2, 4\}$, $\gH_{\mathrm{orth}}^{d}$ has linear separation power over $\gG$.  
\end{restatable}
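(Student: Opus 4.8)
The plan is to prove the statement by an explicit construction of a flat (path-independent) $O(d)$-bundle whose diffusion limit collapses each class onto a single ray, reducing linear separation to a finite, purely algebraic inequality. I would identify each stalk $\gF(v)\cong\sR^d$ with the normed division algebra $\sC$ when $d=2$ and $\sH$ when $d=4$, and pick $C$ distinct elements $q_1,\dots,q_C$ from the $2d$-element group of signed units $\{\pm1,\pm i\}$ or $\{\pm1,\pm i,\pm j,\pm k\}$; these are precisely the $\pm$ standard basis vectors of $\sR^d$. For every incident pair $v\tleq e$ I set $\gF_{v\tleq e}:=L_{q_{c(v)}}$, left-multiplication by the unit assigned to the class $c(v)$ of $v$; multiplicativity of the norm gives $L_{q}\in O(d)$, so this is a genuine element of $\gH_{\mathrm{orth}}^d$. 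Because the restriction maps depend only on the node, the edge transports telescope and the bundle is path-independent, so Lemma~\ref{lemma:bundle_h0_dim} gives $\mathrm{dim}(H^0)=d$.

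Next I read off the diffusion limit. For a path-independent $O(d)$-bundle the harmonic condition $\gF_{v\tleq e}\tilde\vx_v=\gF_{u\tleq e}\tilde\vx_u$ forces $O_v\tilde\vx_v$ to be a node-independent vector $\vw$, so $\tilde\vx_v=O_v^\top\vw$; rescaling to $\Delta_\gF$ via $\vx_v=\sqrt{d_v}\,\tilde\vx_v$ and using $L_q^\top=L_{\bar q}$ gives $\vx_v=\sqrt{d_v}\,\overline{q_{c(v)}}\,\vw$. Thus every node of class $c$ lands on the common ray spanned by $\overline{q_c}\vw$. Since the diffusion limit is the linear orthogonal projection onto the harmonic space, for almost all initial conditions $\mX(0)$ its image is nonzero (the bad set is a proper subspace, hence of measure zero), and I may fix a nonzero column $\vw$ of that image to classify on.

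The crux, and the step where $d\in\{2,4\}$ is essential, is separating the $C$ directions $\overline{q_c}\vw$. I would classify by $\arg\max_c\langle\overline{q_c}\vw,\cdot\rangle$ and evaluate the scores at a true-class point using the composition-algebra identity $\langle a\vw,b\vw\rangle=\lvert\vw\rvert^2\,\mathrm{Re}(\bar a b)$, valid because $\sC$ and $\sH$ are associative; this yields $\langle\overline{q_c}\vw,\overline{q_{c'}}\vw\rangle=\lvert\vw\rvert^2\,\mathrm{Re}(\overline{q_c}q_{c'})=\lvert\vw\rvert^2\,\langle q_c,q_{c'}\rangle$. For distinct signed units $\langle q_c,q_{c'}\rangle\le 0$, whereas the diagonal score is $\lvert\vw\rvert^2>0$, so the correct class strictly wins; the positive factors $\sqrt{d_v}$ are irrelevant to the $\arg\max$, giving correct separation of all $C\le 2d$ classes for every $\vw\ne 0$. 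The main obstacle is exactly producing $2d$ orthogonal maps whose orbit of a generic vector is pairwise obtuse: the clean source is the signed-unit group of a normed \emph{associative} division algebra, which exists only for $d\in\{1,2,4\}$ — $d=1$ recovering the two-class case and octonionic $d=8$ failing because non-associativity breaks $L_aL_b=L_{ab}$ and thus both the telescoping of the transport and the score identity above.
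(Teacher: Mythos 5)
Your proposal is correct and is essentially the paper's own construction: your signed units $\{\pm 1,\pm i\}$ (resp.\ $\{\pm 1,\pm i,\pm j,\pm k\}$) acting by left multiplication are precisely the matrices $\pm\mR_1,\pm\mR_2$ (resp.\ $\pm\mR_1,\dots,\pm\mR_4$) that the paper assigns per class to build a flat bundle in Lemma~\ref{lemma:orth_separation_d2} and the proof of Proposition~\ref{theo:orth_separation}, and your composition-algebra computation showing distinct classes have non-positive inner products in the diffusion limit is the same cross-term cancellation the paper obtains via Lemma~\ref{lemma:swap_rotations} and the quaternion anticommutation relations. The differences are purely presentational: you phrase the construction in the language of $\sC$ and $\sH$ with an explicit one-vs-rest classifier, whereas the paper derives the required algebraic relations as necessary conditions and then exhibits their matrix representations.
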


Figure \ref{fig:d2_diff} includes an example diffusion process over an $O(2)$-bundle. 

\begin{tcolorbox}[boxsep=0mm,left=2.5mm,right=2.5mm]
\textbf{Summary:} Different sheaf classes give rise to different behaviours of the diffusion process and, consequently, to different separation capabilities. Taken together, these results show that solving any node classification task can be reduced to performing diffusion with the right sheaf.    
\end{tcolorbox}
\vspace{-5pt}

\section{Expressive Power of Sheaf Convolutions}\label{sec:oversmoothing}

Analogously to how GCN augments heat diffusion, we can construct a \textbf{Sheaf Convolutional Network (SCN)} augmenting the sheaf diffusion process. In this section, we analyse the capacity of SCNs to change, \emph{if necessary}, their asymptotic behaviour compared to the base diffusion process. Since the sheaf structure will be ultimately learned from data, this is particularly important for the common setting when the learned sheaf is different from the ``ground truth'' sheaf for the task to be solved.  

The continous diffusion process from Equation \ref{eq:diffusion} has the Euler discretisation with unit step-size $\mX(t + 1) = \mX(t) - \Delta_\gF \mX(t) = (\mI_{nd} - \Delta_\gF) \mX(t)$.
Assuming $\mX \in \sR^{nd \times f_1}$, we can equip the right side with weight matrices $\mW_1 \in \sR^{d \times d}, \mW_2 \in \sR^{f_1 \times f_2}$ and a non-linearity $\sigma$ to arrive at the following model originally proposed by \citet{hansen2020sheaf}:
\begin{equation}\label{eq:scn}
 \mY = \sigma \Big(\big(\mI_{nd} - \Delta_\gF \big)(\mI_{n} \otimes \mW_1) \mX \mW_2 \Big) \in \sR^{nd \times f_2},
\end{equation}
where $f_1, f_2$ are the number of input and output feature channels, and $\otimes$ denotes the Kronecker product. Here, $\mW_1$ multiplies from the left the vector feature of all the nodes in all channels (i.e. $\mW_1 \vx^i_v$ for all $v$ and channels $i$), while $\mW_2$ multiplies the features from the right and can adjust the number of feature channels, just like in GCNs. As one would expect, when using a trivial sheaf, $\Delta_\gF = \Delta_0$, $\mW_1$ becomes a scalar and one recovers the GCN of \citet{kipf2017graph}. To see how SCNs behave compared to their base diffusion process, we investigate how SCN layers affect the \emph{sheaf Dirichlet energy} $E_\gF(\vx)$, which sheaf diffusion is known to minimise over time. 
\begin{definition}\label{def:energy}
$E_\gF(\vx) :=
\vx^\top \Delta_\gF \vx = \frac{1}{2} \sum_{e:=(v, u)} \norm{\gF_{v \tleq e}D_v^{-1/2}\vx_v - \gF_{u \tleq e}D_u^{-1/2}\vx_u}_2^2 $
\end{definition}
Similarly, for multiple channels the energy is $E_\gF(\mX) := \mathrm{trace}(\mX^\top \Delta_\gF \mX)$. This is a measure of how close a signal $\vx$ is to $\mathrm{ker}(\Delta_\gF)$ and it is easy to see that $\vx \in \mathrm{ker}(\Delta_\gF) \Leftrightarrow E_\gF(\vx) = 0$. We begin by studying the sheaves for which the energy decreases and representations end up asymptotically in $\mathrm{ker}(\Delta_\gF)$. Let $\lambda_* := \max_{i > 0}(\lambda^{\gF}_i - 1)^2 \leq 1$ and denote by $\gH^1_\mathrm{+} := \{ (\gF, G) \mid \gF_{v \tleq e} \gF_{u \tleq e} > 0 \}$.  

\begin{restatable}{theorem}{OneDimOversmoothing}\label{theo:1dim_oversmoothing}
For $(\gF, G) \in \gH^1_+$ and $\sigma$ being (Leaky)ReLU, $E_\gF(\mY) \leq\lambda_* \norm{\mW_1}_2^2 \norm{\mW_2^\top}_2^2 E_\gF(\mX)$.
\end{restatable}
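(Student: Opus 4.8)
The plan is to track how a single SCN layer transforms the sheaf Dirichlet energy by decomposing the map $\mX\mapsto\mY$ into three successive operations: (i) the feature transformation $\mX\mapsto(\mI_n\otimes\mW_1)\mX\mW_2$, (ii) the linear diffusion step $\vz\mapsto(\mI_{nd}-\Delta_\gF)\vz$, and (iii) the pointwise nonlinearity $\sigma$. For each operation I would prove a separate contraction estimate on $E_\gF$, showing respectively that the energy is multiplied by at most $\norm{\mW_1}_2^2\norm{\mW_2^\top}_2^2$, by at most $\lambda_*$, and by at most $1$. Since $E_\gF$ is non-negative, composing the three bounds in the order the operations are applied yields $E_\gF(\mY)\le\lambda_*\norm{\mW_1}_2^2\norm{\mW_2^\top}_2^2E_\gF(\mX)$.

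The main obstacle is step (iii): showing that $\sigma$ does not increase the energy, which is exactly where the hypothesis $(\gF,G)\in\gH^1_+$ enters. Writing the energy as a sum over edges, $E_\gF(\vx)=\tfrac12\sum_{e=(v,u)}\big(\gF_{v\tleq e}D_v^{-1/2}x_v-\gF_{u\tleq e}D_u^{-1/2}x_u\big)^2$, it suffices to prove the per-edge inequality $(p\,\sigma(a)-q\,\sigma(b))^2\le(p a-q b)^2$ whenever $p:=\gF_{v\tleq e}D_v^{-1/2}$ and $q:=\gF_{u\tleq e}D_u^{-1/2}$ have the same sign; summing over edges (and channels) then gives $E_\gF(\sigma(\vz))\le E_\gF(\vz)$. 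The condition $\gF_{v\tleq e}\gF_{u\tleq e}>0$ defining $\gH^1_+$ guarantees precisely that $pq>0$. I would establish the per-edge inequality by a short case analysis on the signs of $a$ and $b$: when both are nonnegative $\sigma$ acts as the identity and the two sides coincide; for mixed signs one term drops to a value of smaller magnitude while the other only moves further away; and when both are negative each term shrinks by the (leaky) slope $\le 1$. This uses only that (Leaky)ReLU is monotone, $1$-Lipschitz, and fixes $0$. It is worth noting that the same-sign assumption is essential — a two-node example with $\gF_{v\tleq e}\gF_{u\tleq e}<0$ makes the inequality fail — which explains why the statement is restricted to $\gH^1_+$.

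For step (ii), I would diagonalise $\Delta_\gF$ and use that it commutes with $\mI_{nd}-\Delta_\gF$: expanding a channel in an orthonormal eigenbasis $\{\phi_i\}$ with eigenvalues $\lambda_i^\gF$ gives $E_\gF\big((\mI_{nd}-\Delta_\gF)\vx\big)=\sum_i c_i^2(1-\lambda_i^\gF)^2\lambda_i^\gF$ while $E_\gF(\vx)=\sum_i c_i^2\lambda_i^\gF$. The harmonic components ($\lambda_i^\gF=0$) contribute nothing to either sum, so the ratio is controlled by the maximum of $(1-\lambda_i^\gF)^2$ over the non-harmonic modes; since a connected $\gH^1_+$ sheaf has a kernel of dimension at most one (accounted for by the excluded index $i=0$), this maximum is $\lambda_*$. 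For step (i), since $d=1$ the factor $\mI_n\otimes\mW_1$ is scalar multiplication by $\mW_1$ and pulls out $\norm{\mW_1}_2^2$; for $\mW_2$ I would write $E_\gF(\mX\mW_2)=\mathrm{trace}\big(\mW_2^\top(\mX^\top\Delta_\gF\mX)\mW_2\big)=\mathrm{trace}\big((\mX^\top\Delta_\gF\mX)\mW_2\mW_2^\top\big)$ and apply the standard bound $\mathrm{trace}(MN)\le\lambda_{\max}(N)\,\mathrm{trace}(M)$ for the PSD matrices $M=\mX^\top\Delta_\gF\mX$ and $N=\mW_2\mW_2^\top$, with $\lambda_{\max}(\mW_2\mW_2^\top)=\norm{\mW_2^\top}_2^2$. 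Chaining the bounds for (i)–(iii) closes the proof.
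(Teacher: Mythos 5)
Your proposal is correct and follows essentially the same route as the paper's proof: the same decomposition into three per-operation contraction bounds (eigendecomposition giving the $\lambda_*$ factor for $\mI_{nd}-\Delta_\gF$, the cyclic-trace/PSD bound giving $\norm{\mW_2^\top}_2^2$, and the scalar observation for $\mW_1$ in $d=1$), composed in the same order. The only cosmetic difference is in the nonlinearity step, where the paper derives the per-edge inequality from positive homogeneity $c\,\sigma(x)=\sigma(cx)$ for $c>0$ together with $1$-Lipschitzness of (Leaky)ReLU, whereas you obtain the same inequality by a sign case analysis; both arguments hinge identically on the $\gH^1_+$ same-sign hypothesis.
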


This generalises existent results for GCNs~\citep{cai2020note, oono2019graph} and proves that SCNs using this family of Laplacians, which includes all weighted graph Laplacians, exponentially converge to $\mathrm{ker}(\Delta_\gF)$ if $\lambda_* \norm{\mW_1}_2^2 \norm{\mW_2^\top}_2^2 < 1$. In particular, if $E_\gF(\mX) = 0$, then $E_\gF(\mY) = 0$ and the representations remain trapped inside the kernel no matter what the norm of the weights is. Therefore, in settings as those described by Propositions \ref{prop:h1_sym_heterophily} and \ref{prop:impossible_separation}, the linear separation capabilities of this class of models are severely limited (see Corollaries \ref{cor:gcn_one}, \ref{cor:gcn_two} in Appendix \ref{app:diffusion_power}). 

Finally, the Theorem also extends to bundles with symmetric maps, $\gH^d_\mathrm{orth, sym} := \gH^d_\mathrm{orth} \cap \gH^d_\mathrm{sym}$:
\begin{restatable}{theorem}{GraphOversmoothing}\label{theo:graph_oversmoothing}
If $(\gF, G) \in \gH^d_\mathrm{orth, sym}$ and $\sigma= \text{(Leaky)ReLU}$, $E_\gF(\mY) \leq \lambda_* \norm{\mW_1}_2^2 \norm{\mW_2^\top}_2^2 E_\gF(\mX)$. 
\end{restatable}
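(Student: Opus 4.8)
The plan is to reduce the claim to the already-established $d=1$ case (Theorem \ref{theo:1dim_oversmoothing}) by first exploiting the rigidity that the combined symmetric and orthogonal assumption forces on $\Delta_\gF$, and then tracking the sheaf Dirichlet energy multiplicatively through the four operations that compose a single SCN layer. The structural observation I would establish first is that for $(\gF,G)\in\gH^d_{\mathrm{orth,sym}}$ every edge $e=(v,u)$ satisfies $\gF_{v\tleq e}=\gF_{u\tleq e}=:O_e\in O(d)$, so the off-diagonal blocks of $L_\gF$ are ${L_\gF}_{vu}=-O_e^\top O_e=-\mI_d$ and the diagonal blocks are ${L_\gF}_{vv}=d_v\mI_d$. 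Hence $L_\gF=L\otimes\mI_d$ and $\Delta_\gF=\Delta_0\otimes\mI_d$. This single fact delivers everything I need: the nonzero spectrum of $\Delta_\gF$ coincides with that of $\Delta_0\subseteq[0,2]$ (so $\lambda_*\leq 1$), the orthogonal factor in $E_\gF$ cancels, and $\Delta_\gF$ interacts cleanly with $\mI_n\otimes\mW_1$.

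Next I would bound the energy through each operation in $\mY=\sigma\big((\mI_{nd}-\Delta_\gF)(\mI_n\otimes\mW_1)\mX\mW_2\big)$. For the stalk map $\mW_1$, I would use $(\mI_n\otimes\mW_1)^\top\Delta_\gF(\mI_n\otimes\mW_1)=\Delta_0\otimes(\mW_1^\top\mW_1)$; since $\mW_1^\top\mW_1\preceq\norm{\mW_1}_2^2\,\mI_d$ and $\Delta_0\succeq 0$, Kronecker monotonicity gives $\Delta_0\otimes(\mW_1^\top\mW_1)\preceq\norm{\mW_1}_2^2\,\Delta_\gF$, so taking traces yields $E_\gF\big((\mI_n\otimes\mW_1)\mZ\big)\leq\norm{\mW_1}_2^2\,E_\gF(\mZ)$. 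For the channel-mixing map $\mW_2$, a cyclic-trace rearrangement $E_\gF(\mZ\mW_2)=\mathrm{trace}\big((\mZ^\top\Delta_\gF\mZ)(\mW_2\mW_2^\top)\big)$ together with $\mathrm{trace}(AB)\leq\lambda_{\max}(B)\mathrm{trace}(A)$ for PSD $A$ gives the factor $\norm{\mW_2^\top}_2^2$. For the diffusion step, a spectral decomposition $\Delta_\gF=\sum_i\lambda_i^\gF\phi_i\phi_i^\top$ shows $E_\gF\big((\mI-\Delta_\gF)\mZ\big)=\sum_{i}\lambda_i^\gF(1-\lambda_i^\gF)^2\langle\phi_i,\cdot\rangle^2$, where the zero-eigenvalue terms vanish and the rest are controlled by $\lambda_*=\max_{i>0}(\lambda_i^\gF-1)^2$, giving the factor $\lambda_*$. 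Finally, for the nonlinearity I would use that, thanks to symmetry and orthogonality, each edge term of $E_\gF$ collapses to $\norm{d_v^{-1/2}\vx_v-d_u^{-1/2}\vx_u}_2^2$ (the $O_e$ cancels); positive homogeneity of (Leaky)ReLU lets me pull $D_v^{-1/2}=d_v^{-1/2}\mI_d$ through $\sigma$, and entrywise $1$-Lipschitzness then gives $E_\gF(\sigma(\mZ))\leq E_\gF(\mZ)$. Chaining the four bounds in order yields $E_\gF(\mY)\leq\lambda_*\norm{\mW_1}_2^2\norm{\mW_2^\top}_2^2\,E_\gF(\mX)$.

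I expect the main obstacle to be the two steps where the stalk dimension genuinely enters, namely the $\mW_1$ bound and the ReLU contraction, since these are exactly the places that force the symmetric hypothesis on top of orthogonality. A general $O(d)$-bundle does not yield $\Delta_\gF=\Delta_0\otimes\mI_d$, so $\mI_n\otimes\mW_1$ no longer interacts with $\Delta_\gF$ through a clean Kronecker identity, and the differing restriction maps $\gF_{v\tleq e}\neq\gF_{u\tleq e}$ prevent factoring out a common orthogonal matrix, which is what makes the per-edge term non-expansive under $\sigma$. Establishing the Kronecker structure at the outset is therefore the linchpin, and it also explains why the statement is restricted to $\gH^d_{\mathrm{orth,sym}}$ rather than all of $\gH^d_{\mathrm{orth}}$.
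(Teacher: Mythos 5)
Your proof is correct, and its overall architecture is the same as the paper's: decompose the SCN layer into the four operations (right multiplication by $\mW_2$, left multiplication by $\mI_n\otimes\mW_1$, the diffusion step $\mI-\Delta_\gF$, and the nonlinearity $\sigma$), prove a multiplicative energy bound for each with constants $\norm{\mW_2^\top}_2^2$, $\norm{\mW_1}_2^2$, $\lambda_*$, and $1$, then chain them — the scheme the paper borrows from \citet{cai2020note}. Your spectral argument for the diffusion factor, the cyclic-trace argument for $\mW_2$, and the ReLU step (factor out the common orthogonal map, use positive homogeneity to pull the scalar degrees through $\sigma$, then entrywise $1$-Lipschitzness) match the paper's lemmas essentially verbatim. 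Where you genuinely diverge is the $\mW_1$ bound: the paper proves $E_\gF\big((\mI_n\otimes\mW_1)\vx\big)\le\norm{\mW_1}_2^2E_\gF(\vx)$ edge-by-edge, writing each edge term as $\norm{\gF_e\mW_1(d_v^{-1/2}\vx_v-d_u^{-1/2}\vx_u)}_2^2$, dropping $\gF_e$ by orthogonality, applying submultiplicativity, and re-inserting $\gF_e$; you instead establish the global identity $\Delta_\gF=\Delta_0\otimes\mI_d$ for $\gH^d_{\mathrm{orth,sym}}$ and conclude from $(\mI_n\otimes\mW_1)^\top\Delta_\gF(\mI_n\otimes\mW_1)=\Delta_0\otimes(\mW_1^\top\mW_1)\preceq\norm{\mW_1}_2^2\,\Delta_\gF$ in the Loewner order. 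That Kronecker factorisation is nowhere stated explicitly in the paper (its Proposition \ref{prop:harmonicbundle} is the closest analogue), and it buys real clarity: it exposes the theorem as the GCN-type result of \citet{cai2020note} applied to $d$ decoupled copies of the ordinary normalised Laplacian, which the paper only gestures at informally after the theorem statement. What the paper's edge-wise route buys in exchange is flexibility: the same local computation is reused for Theorem \ref{theo:1dim_oversmoothing}, where the restriction maps are merely same-signed scalars rather than orthogonal, and, as the paper remarks, for the non-normalised Laplacian it degrades gracefully to a bound depending on the condition numbers of non-singular $\gF_e$ — situations in which your clean Kronecker identity is no longer available.
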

In some sense, this is not surprising because, for this class, $\mathrm{ker}(\Delta_\gF)$ contains the same information as the kernel of the classical normalised graph Laplacian (see Proposition \ref{prop:harmonicbundle} in Appendix \ref{app:energy_flow}). 

More generally, SCNs with sheaves outside $\gH^d_\mathrm{sym}$, are much more flexible and can easily increase the Dirichlet energy using an arbitrarily small linear transformation $\mW_1$:
\begin{restatable}{proposition}{SCNEnergyIncrease}\label{prop:scn_energy_increase}
For any connected graph $G$ and $\varepsilon > 0$, there exist a sheaf $(G, \gF) \notin \gH^d_\mathrm{sym}$ , $\mW_1$ with $\norm{\mW_1}_2 < \varepsilon$ and feature vector $\vx$ such that $E_\gF((\mI\otimes \mW_1) \vx) > E_\gF(\vx)$.
\end{restatable}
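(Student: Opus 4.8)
The plan is to exploit the fact that, unlike for symmetric sheaves (Theorem~\ref{theo:graph_oversmoothing}), the harmonic space $\ker(\Delta_\gF)$ of a non-symmetric bundle need not be invariant under the node-wise map $\mI\otimes\mW_1$. Concretely, I would choose $\vx$ \emph{inside} $\ker(\Delta_\gF)$, so that $E_\gF(\vx)=0$, and then exhibit an arbitrarily small $\mW_1$ that pushes $(\mI\otimes\mW_1)\vx$ out of the kernel, forcing $E_\gF((\mI\otimes\mW_1)\vx)>0=E_\gF(\vx)$. This viewpoint also explains why the statement requires $d\geq 2$: for $d=1$ the map $\mW_1$ is a scalar and $E_\gF(\mW_1\vx)=\mW_1^2 E_\gF(\vx)$ can only shrink the energy.

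First I would build a convenient non-symmetric bundle on the given connected graph. Assign to each node $v$ an orthogonal matrix $O_v\in O(d)$, not all equal (e.g.\ $O_{v_0}=R$ for one fixed node and $O_v=\mI_d$ otherwise, with $R\in O(2)\setminus\{\pm\mI\}$ embedded in $O(d)$), and set $\gF_{v\tleq e}:=O_v$ for every edge $e$ incident to $v$. Since some incident pair has $O_v\neq O_u$, this sheaf lies in $\gH^d_\mathrm{orth}$ but not in $\gH^d_\mathrm{sym}$. Orthogonality gives $D_v=\sum_{v\tleq e}\gF_{v\tleq e}^\top\gF_{v\tleq e}=d_v\mI_d$, so the normalisation is scalar, and the transport $\mP_{v\to u}=O_u^\top O_v$ telescopes and is path-independent; by Lemma~\ref{lemma:bundle_h0_dim} the harmonic space is $d$-dimensional.

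Next I would write the kernel explicitly. A direct check shows $\vx\in\ker(\Delta_\gF)$ iff $\vx_v=\sqrt{d_v}\,O_v^\top\vc$ for a common $\vc\in\sR^d$. Substituting such an $\vx$ into Definition~\ref{def:energy} makes every edge term $\tfrac12\norm{O_vO_v^\top\vc-O_uO_u^\top\vc}_2^2=0$, confirming $E_\gF(\vx)=0$. After applying $\mI\otimes\mW_1$ the node feature becomes $\sqrt{d_v}\,\mW_1 O_v^\top\vc$, and the same computation turns the edge-$(v,u)$ term into $\tfrac12\norm{O_v\mW_1 O_v^\top\vc-O_u\mW_1 O_u^\top\vc}_2^2$. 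Thus
\[
E_\gF\big((\mI\otimes\mW_1)\vx\big)=\tfrac12\sum_{e=(v,u)}\norm{O_v\mW_1 O_v^\top\vc-O_u\mW_1 O_u^\top\vc}_2^2\;\geq\;\tfrac12\norm{(R\mW_1 R^\top-\mW_1)\vc}_2^2,
\]
using the distinguished edge incident to $v_0$.

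Finally I would make the right-hand side strictly positive with $\mW_1$ as small as desired. Choosing $\mW_1$ that does not commute with $R$ (possible precisely because $d\geq 2$ and $R\neq\pm\mI$) gives $R\mW_1 R^\top\neq\mW_1$, so there is a $\vc$ with $(R\mW_1 R^\top-\mW_1)\vc\neq 0$; rescaling $\mW_1\mapsto t\mW_1$ preserves non-commutativity while driving $\norm{\mW_1}_2$ below any $\varepsilon$. This yields $E_\gF((\mI\otimes\mW_1)\vx)>0=E_\gF(\vx)$, as required. The only genuine obstacle is the structural one handled in the second step: guaranteeing simultaneously a non-trivial harmonic space (so that $E_\gF(\vx)=0$ holds for some $\vx\neq 0$) and the failure of its $\mW_1$-invariance. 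The flat, path-independent gauge construction resolves both at once, since its kernel is governed by the node matrices $O_v$, which any $\mW_1$ not commuting with them necessarily disturbs.
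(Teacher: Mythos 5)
Your proof is correct, and it shares the paper's high-level strategy: pick $\vx$ in the harmonic space of a non-symmetric orthogonal bundle, so that $E_\gF(\vx)=0$, and show that an arbitrarily small $\mW_1$ pushes $(\mI\otimes\mW_1)\vx$ out of $\ker(\Delta_\gF)$. The constructions realising this are genuinely different, though. The paper perturbs a single edge: all restriction maps agree except on one edge $(u_0,v_0)$, where the transport is $\mI+\mB$ with $\mB\neq 0$ and $\ker(\mB)$ nontrivial; the harmonic signal is node-wise constant, $\vx_v=\sqrt{d_v}\,\vg$ with $\vg\in\ker(\mB)$, and the whole problem becomes finding $\mW$ with $\mB\mW\vg\neq 0$. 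You instead apply a node-wise gauge twist $O_v$ to the trivial bundle, which keeps a full $d$-dimensional harmonic space $\vx_v=\sqrt{d_v}\,O_v^\top\vc$ but makes it vary across nodes, so non-invariance under $\mI\otimes\mW_1$ reduces to the purely algebraic fact that some $\mW_1$ fails to commute with $R$. Your route also buys robustness that the paper's sketch lacks: the paper takes $\mW=\varepsilon\mP_{\ker(\mB)^\perp}$, but this projection annihilates $\vg\in\ker(\mB)$, so $(\mI\otimes\mW)\vx=0$ and the transformed energy is $0$ rather than strictly positive; as written, the sketch needs a repair, e.g. taking $\mW$ to be $\varepsilon$ times a rank-one map sending $\vg$ to a vector outside $\ker(\mB)$, which exists precisely because $\mB\neq 0$. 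Your choice — any $\mW_1$ with $R\mW_1\neq\mW_1 R$, rescaled — requires no such fix, and your side observation that the statement genuinely needs $d\geq 2$ (for $d=1$ the energy can only scale by $\mW_1^2<\varepsilon^2$) is also correct.
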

Importantly, this proves that this family of SCNs can, if necessary, escape the kernel of the Laplacian.

\begin{tcolorbox}[boxsep=0mm,left=2.5mm,right=2.5mm]
\textbf{Summary:} Not only that sheaf diffusion is more expressive than heat diffusion as shown in Section \ref{sec:diffusion_power}, but SCNs are also more expressive than GCNs in the sense that they are generally not constrained to decrease the Dirichlet energy when using low-norm weights. This provides them with greater control than GCNs over their asymptotic behaviour.  
\end{tcolorbox}
\vspace{-5pt}

\section{Neural Sheaf Diffusion and Sheaf Learning}\label{sec:sheaf_learning}

In the previous sections, we discussed the various advantages provided by sheaf diffusion and sheaf convolutions. However, in general, the ground truth sheaf is unknown or unspecified. Therefore, we aim to learn the underlying sheaf from data end-to-end, thus allowing the model to pick the right geometry for solving the task. 

\textbf{Neural Sheaf Diffusion. } We propose the  diffusion-type model from Equation \ref{eq:cont_model}. We note that by setting $\mW_1, \mW_2$ to identity and $\sigma(\vx) = \text{ELU}(\eps\vx)/\eps$ with $\eps>0$ small enough or simply $\sigma=\text{id}$, we recover (up to a scaling) the sheaf diffusion equation. Therefore, the model is at least as expressive as sheaf diffusion and benefits from all the positive properties outlined in Section \ref{sec:diffusion_power}. 
\begin{equation}\label{eq:cont_model}
    \dot{\mX}(t) = - \sigma \Big(\Delta_{\gF(t)} (\mI_n \otimes \mW_1) \mX(t) \mW_2 \Big),
\end{equation}
Crucially, the sheaf Laplacian $\Delta_{\gF(t)}$ is that of a sheaf $(G, \gF(t))$ that {\em evolves over time}. More specifically, the evolution of the sheaf structure is described by a learnable function of the data $(G, \gF(t)) = g(G, \mX(t) ; \theta)$. This allows the model to use the latest available features to manipulate the underlying geometry of the graph and, implicitly, the behaviour of the diffusion process. Additionally, We use an MLP followed by a reshaping to map the raw features of the dataset to a matrix $\mX(0)$ of shape $nd \times f$ and a final linear layer to perform the node classification. 

In our experiments, we focus on the time-discretised version of this model from Equation \ref{eq:disc_model}, which allows us to use a new set of weights at each layer $t$ while maintaining the nice theoretical properties of the model above. 
\begin{equation}\label{eq:disc_model}
    \mX_{t+1} = \mX_{t} - \sigma \Big(\Delta_{\gF(t)} (\mI \otimes \mW_1^t) \mX_t \mW_2^t \Big)
\end{equation}
We note that this model is different from the SCN model from Equation \ref{eq:scn} in two major ways. First, \citet{hansen2020sheaf} used a \emph{hand-crafted} sheaf with $d=1$, constructed in a synthetic setting with full knowledge of the data-generating process. In contrast, we \emph{learn} a sheaf, which makes our model applicable to any real-world graph dataset, even in the absence of a sheaf structure. Additionally, motivated by our theoretical results, we use the full generality of sheaves by using stalks with $d \geq 1$ and higher-dimensional maps. Second, our model uses a residual parametrisation of the discretised diffusion process, which empirically improves its performance. 

\textbf{Sheaf Learning. } The restriction maps are learned using \emph{locally} available information. Each $d \times d$ matrix $\gF_{v \tleq e}$ is learned via a parametric matrix-valued function $\Phi$, with $\gF_{v \tleq e := (v, u)} = \Phi(\vx_v, \vx_u)$. 
This function must be non-symmetric to be able to learn asymmetric transport maps along each edge. 
In practice, we set $\Phi(\vx_v, \vx_u) = \sigma(\mV [\vx_v || \vx_u])$ followed by a reshaping of the output, where $\mV$ is a weight matrix. For simplicity, the equations above use a single feature channel, but in practice, all channels are supplied as input. More generally, we can show that if the function $\Phi$ has enough capacity and the features are diverse enough, we can learn any sheaf over a graph. 

\begin{restatable}{proposition}{SheafLearning}\label{prop:sheaf_learning}
Let $G = (V, E)$ be a finite graph with features $\mX$. Then, if $(\vx_v, \vx_u) \neq (\vx_w, \vx_z)$ for any $(v, u) \neq (w, z) \in E$ and $\Phi$ is an MLP with sufficient capacity, $\Phi$ can learn any sheaf $(\gF; G)$. 
\end{restatable}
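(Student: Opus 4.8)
The plan is to recognize that, because $G$ is finite, ``learning a sheaf'' is nothing more than a \emph{finite interpolation} problem for $\Phi$, and then to invoke the exact-interpolation capacity of MLPs on pairwise-distinct inputs.

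First I would make precise what it means for $\Phi$ to realize a sheaf. A sheaf $(\gF; G)$ is completely determined by its restriction maps, one $d \times d$ matrix $\gF_{v \tleq e}$ for each incident node--edge pair $v \tleq e$. Under the chosen parametrization, the pair $v \tleq e = (v,u)$ is presented to $\Phi$ as the concatenated input $(\vx_v, \vx_u) \in \sR^{2f}$, and produces $\gF_{v \tleq e} = \Phi(\vx_v, \vx_u)$ after the fixed reshaping $\sR^{d^2} \to \sR^{d \times d}$. Enumerating the incident pairs as directed edges yields a finite list of $N := 2\lvert E \rvert$ input points $p_i := (\vx_{v_i}, \vx_{u_i})$ together with prescribed targets $y_i \in \sR^{d^2}$, namely the flattenings of the desired restriction matrices. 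Realizing the target sheaf is then exactly the requirement that $\Phi(p_i) = y_i$ for every $i$; since the $y_i$ are arbitrary, this also covers constrained classes such as orthogonal bundles, whose targets simply happen to lie in $O(d)$.

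Next I would observe that the hypothesis $(\vx_v, \vx_u) \neq (\vx_w, \vx_z)$ for distinct directed edges is precisely the statement that $p_1, \dots, p_N$ are pairwise distinct. In particular, taking $(w,z) = (u,v)$ forces $\vx_v \neq \vx_u$ on every edge, so the two orientations of an edge receive distinct inputs and $\Phi$ is free to assign them different (asymmetric) matrices, consistent with the earlier requirement that $\Phi$ be non-symmetric. The problem is thus reduced to: given $N$ pairwise-distinct inputs and arbitrary targets, exhibit an MLP that interpolates them exactly. This is where ``sufficient capacity'' enters. It suffices to handle each of the $d^2$ output coordinates independently, reducing to scalar interpolation through $N$ distinct points; a single hidden layer of width $N$ then suffices, choosing first-layer parameters so that the $N \times N$ matrix of hidden activations evaluated at the inputs is nonsingular, and solving the resulting linear system for the output weights.

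The main obstacle, and the only step requiring genuine work, is the nonsingularity of that hidden-activation matrix. I would establish it by choosing a direction $a$ separating the inputs (so that the scalars $a^\top p_i$ are distinct, which is possible because only finitely many difference-hyperplanes must be avoided), aligning the first-layer weights with $a$, and then reducing to a one-dimensional interpolation matrix whose nonsingularity follows from the non-polynomial (e.g.\ ReLU or sigmoidal) activation. Distinctness of the $p_i$ is guaranteed by the hypothesis, so this argument goes through, and the exact interpolation $\Phi(p_i) = y_i$ — hence the representability of an arbitrary sheaf — follows immediately.
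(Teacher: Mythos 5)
Your proposal is correct, but it takes a genuinely different route from the paper. The paper's proof first uses the distinctness hypothesis to argue that the assignment $g : (\vx_v, \vx_u) \mapsto \gF_{v \tleq e}$ is a well-defined function on the finite set $A \subset \sR^{2k}$ of concatenated edge features, then \emph{extends} $g$ to a smooth function on all of $\sR^{2k}$ by summing bump functions supported in small separating neighbourhoods of the points of $A$, and finally invokes the Universal Approximation Theorem to conclude that an MLP can approximate this extension (and hence the sheaf) arbitrarily well. You bypass the extension-plus-approximation step entirely and instead solve the finite \emph{exact interpolation} problem directly: project the $N = 2\lvert E\rvert$ distinct inputs onto a generic direction avoiding the finitely many difference hyperplanes, obtain a nonsingular $N \times N$ hidden-activation matrix (e.g.\ triangular with nonzero diagonal for ReLU), and solve the linear system for the output weights, sharing the same hidden layer across all $d^2$ output coordinates. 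Both proofs hinge on the same use of the hypothesis — distinct inputs per oriented edge make the target assignment consistent and, in particular, allow asymmetric maps on the two orientations of an edge. What your approach buys: an \emph{exact} realization $\Phi(p_i) = y_i$ rather than approximation to arbitrary precision, an explicit quantification of ``sufficient capacity'' (one hidden layer of width $2\lvert E\rvert$), and the ability to hit constrained targets such as $O(d)$ matrices exactly, which mere approximation cannot do. What the paper's approach buys: brevity modulo the UAT as a black box, and indifference to the fine structure of the activation (any continuous non-polynomial activation for which the UAT holds works, with no need to argue nonsingularity of an activation matrix). Your nonsingularity step is the only part requiring care for general activations; for ReLU your triangular construction is complete, and for arbitrary continuous non-polynomial activations the claim is a known interpolation theorem, so the argument is sound.
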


First, this result formally motivates learning a sheaf at each layer since the model can learn to distinguish more nodes after each aggregation step. Second, this suggests that more expressive models (in the Weisfeiler-Lehman sense \citep{gin, morris2019weisfeiler, pmlr-v139-bodnar21a, bodnar2021b}) could learn a more general family of sheaves. We leave a deeper investigation of these aspects for future work. In what follows, we distinguish between several types of functions $\Phi$ depending on the type of matrix they learn.

\begin{figure}[t]
    \centering
    \begin{subfigure}[b]{0.3\columnwidth}
        \centering
        \includegraphics[width=\columnwidth]{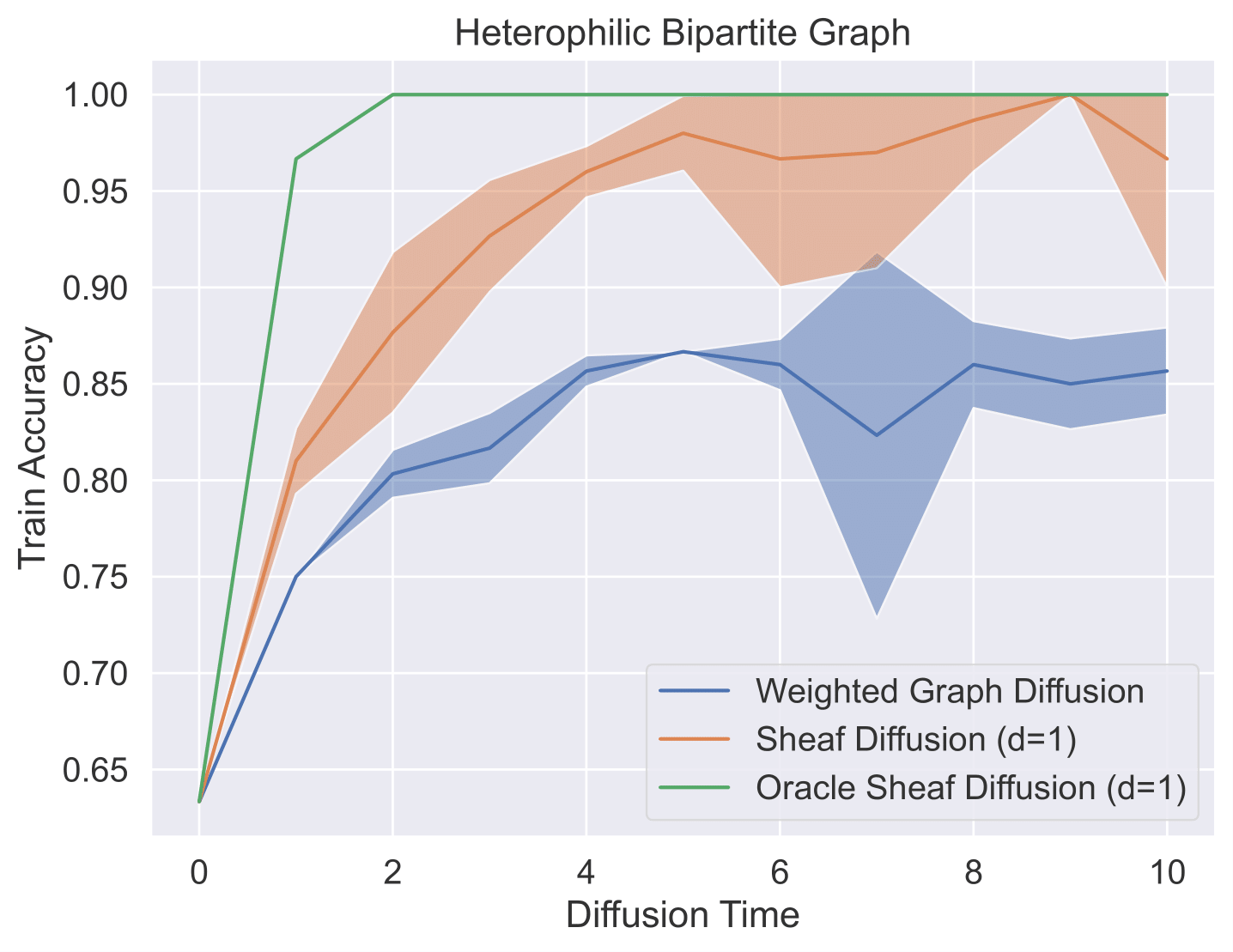}
    \end{subfigure}
    ~
    \begin{subfigure}[b]{0.3\columnwidth}
        \centering
        \includegraphics[width=\columnwidth]{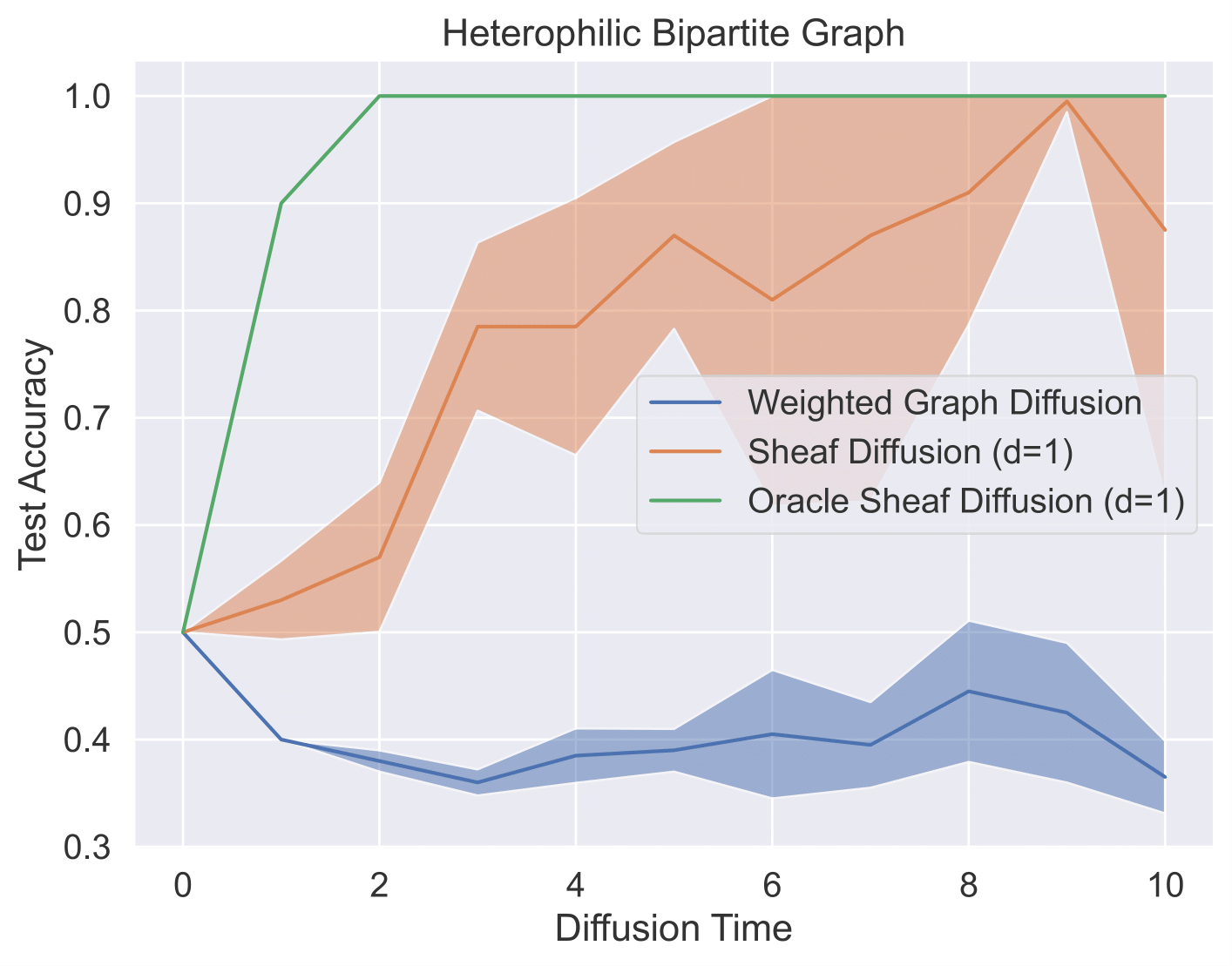}
    \end{subfigure}
    ~
    \begin{subfigure}[b]{0.3\columnwidth}
        \centering
        \includegraphics[width=\columnwidth]{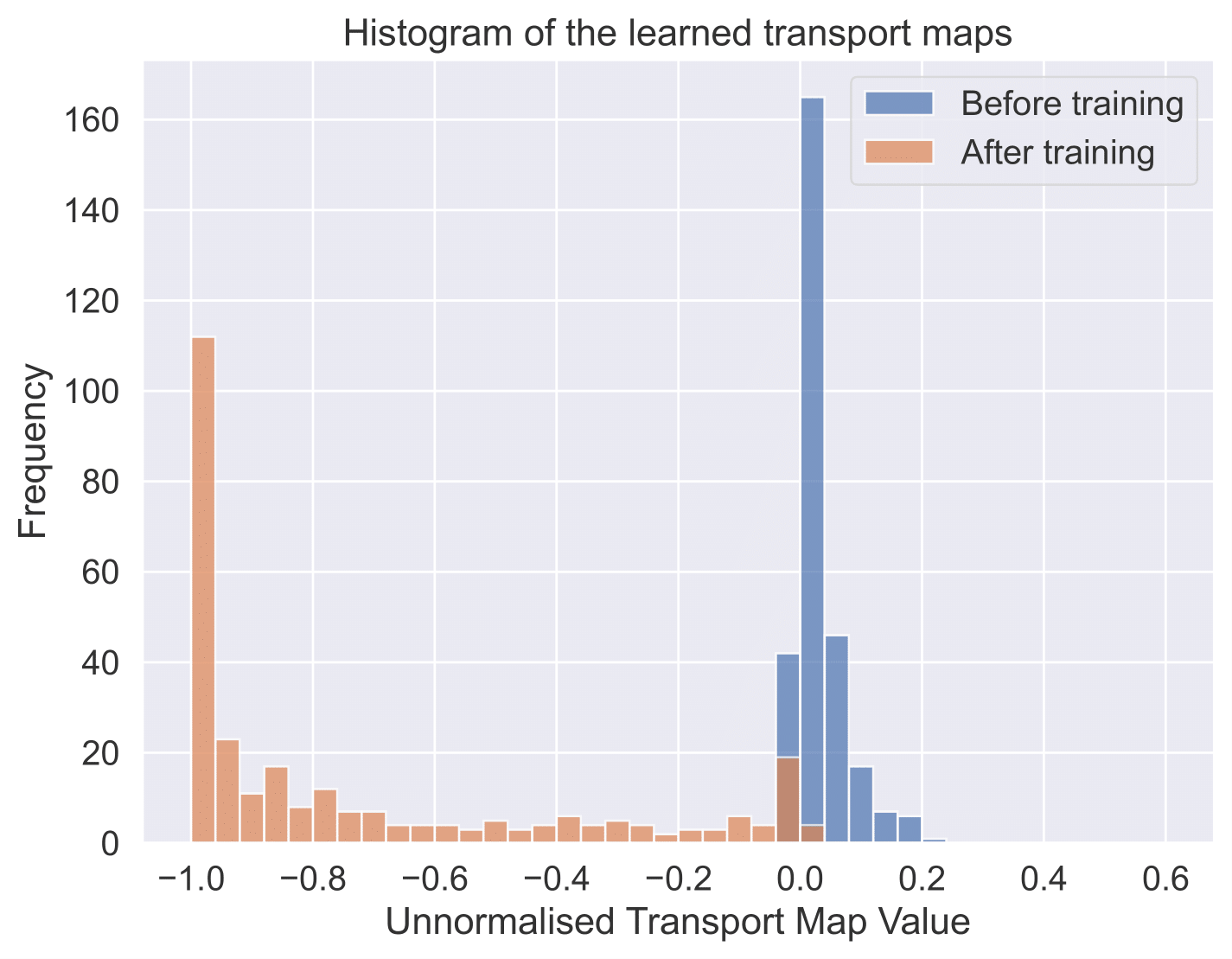}
    \end{subfigure}
    \caption{(\textit{Left}) Train and (\textit{Middle}) test accuracy as a function of diffusion time. (\textit{Right}) Histogram of the learned scalar transport maps. The performance of the sheaf diffusion model is superior to that of weighted-graph diffusion and correctly learns to invert the features of the two classes.}
    \label{fig:bipartite_graph}
    \vspace{-12pt}
\end{figure}

\textbf{Diagonal. } 
The main advantage of this parametrisation is that fewer parameters need to be learned per edge, and the sheaf Laplacian ends up being a matrix with diagonal blocks, which also results in fewer operations in sparse matrix multiplications. The main disadvantage is that the $d$ dimensions of the stalks interact only via the left $\mW_1$ multiplication. 

\textbf{Orthogonal. } In this case, the model effectively learns a discrete vector bundle. Orthogonal matrices provide several advantages: (1) they can mix the various dimension of the stalks, (2) the orthogonality constraint prevents overfitting while reducing the number of parameters, (3) they have better understood theoretical properties, and (4) the resulting Laplacians are easier to normalise numerically since the diagonal entries correspond to the degrees of the nodes. In our model, we build orthogonal matrices from a composition of Householder reflections~\citep{mhammedi2017efficient}. 

\textbf{General. } Finally, we consider the most general option of learning arbitrary matrices. The maximal flexibility these maps provide can be useful, but it also comes with the danger of overfitting. At the same time, the sheaf Laplacian is more challenging to normalise numerically since one has to compute $D^{-1/2}$ for a positive semi-definite matrix $D$. To perform this at scale, one has to rely on SVD, whose gradients can be infinite if $D$ has repeated eigenvalues. Therefore, this model is more challenging to train. 

\textbf{Computational Complexity. } The GCN from Equation \ref{eq:gcn_diffusion} has complexity $\gO(nc^2 + mc)$, where $c$ is the number of channels and $m$ the number of edges. Assume a sheaf diffusion model with stalk dimension $d$ and $f$ channels such that $d \times f = c$ (i.e. same representation size). Then, when the model uses diagonal maps, the complexity is $\gO(nc^2 + mdc)$. When using orthogonal or general matrices, the complexity becomes $\gO(n(c^2 + d^3) + m(cd^2 + d^3))$ (see Appendix \ref{app:complexity} for detailed derivations). 
In practice, we use $1 \leq d \leq 5$ 
, which effectively results in a constant overhead compared to GCN. 
\vspace{-10pt}

\section{Experiments}
\label{sec:exp}

\paragraph{Synthetic experiments.} We consider a simple setup given by a connected bipartite graph with equally sized partitions. We sample the features from two overlapping isotropic Gaussian distributions to make the classes linearly non-separable at initialisation time. From Proposition \ref{prop:h1_sym_heterophily}, we know that diffusion models using symmetric restriction maps cannot separate the classes in the limit, while a diffusion process using negative transport maps can. Therefore, we use two vanilla sheaf diffusion processes by setting $d=1$, $\mW_1 = \mI_d$, $\mW_2 = \mI_f$ and $\sigma = \mathrm{id}$ in Equation \ref{eq:cont_model}. In both models, we learn a sheaf at $t=0$ as a function of $\mX(0)$, and we keep the sheaf constant over time. For the first model, we learn a sheaf with general maps $\gF_{v \tleq e} \in \sR$. For the second model, we use a similar layer but constraint $\gF_{v \tleq e} = \gF_{u \tleq e}$, obtaining a weighted graph Laplacian.   

\begin{table*}[t]
    \centering
    \caption{Results on node classification datasets sorted by their homophily level. Top three models are coloured by \textbf{\textcolor{red}{First}}, \textbf{\textcolor{blue}{Second}}, \textbf{\textcolor{violet}{Third}}. Our models are marked {\bf NSD}.}
    \resizebox{1.0\textwidth}{!}{%
    \begin{tabular}{l ccccccccc}
    \toprule 
         &
         \textbf{Texas} &  
         \textbf{Wisconsin} & 
         \textbf{Film} &
         \textbf{Squirrel} &
         \textbf{Chameleon} &
         \textbf{Cornell} &
         \textbf{Citeseer} & 
         \textbf{Pubmed} & 
         \textbf{Cora} \\
         
         Hom level &
         \textbf{0.11} &
         \textbf{0.21} & 
         \textbf{0.22} & 
         \textbf{0.22} & 
         \textbf{0.23} &
         \textbf{0.30} &
         \textbf{0.74} &
         \textbf{0.80} &
         \textbf{0.81} \\ 
         
         \#Nodes &
         183 &
         251 & 
         7,600 &
         5,201 & 
         2,277 &
         183 &
         3,327 &
         18,717 &
         2,708 \\
         
         \#Edges &
         295 &
         466 & 
         26,752 & 
         198,493 & 
         31,421 &
         280 &
         4,676 &
         44,327 &
         5,278 \\
         
         \#Classes &
         5 &
         5 & 
         5 &
         5 &
         5 & 
         5 &
         7 &
         3 &
         6 \\ \midrule
        
         \textbf{Diag-NSD} &
         $\second{85.67} {\scriptstyle \pm 6.95}$ &
         $\third{88.63} {\scriptstyle \pm 2.75}$ &
         $\third{37.79} {\scriptstyle \pm 1.01}$ & 
         $\third{54.78} {\scriptstyle \pm 1.81}$ & 
         $\second{68.68} {\scriptstyle \pm 1.73}$ &
         $\first{86.49} {\scriptstyle \pm 7.35}$ & 
         $\third{77.14} {\scriptstyle \pm 1.85}$ &
         $89.42 {\scriptstyle \pm 0.43}$ &
         $87.14 {\scriptstyle \pm 1.06}$\\
         
         $\mathbf{O(d)}$\textbf{-NSD} &
         $\first{85.95} {\scriptstyle \pm 5.51}$ &
         $\first{89.41} {\scriptstyle \pm 4.74}$ &
         $\first{37.81} {\scriptstyle \pm 1.15}$ & 
         $\first{56.34} {\scriptstyle \pm 1.32}$  & 
         $\third{68.04} {\scriptstyle \pm 1.58}$ &
         $\third{84.86} {\scriptstyle \pm 4.71}$ & 
         $76.70 {\scriptstyle \pm 1.57}$ &
         $\third{89.49} {\scriptstyle \pm 0.40}$ &
         $86.90 {\scriptstyle \pm 1.13}$ \\
         
         \textbf{Gen-NSD} &
         $82.97 {\scriptstyle \pm 5.13}$ &
         $\second{89.21} {\scriptstyle \pm 3.84}$ &
         $\second{37.80} {\scriptstyle \pm 1.22}$ & 
         $53.17 {\scriptstyle \pm 1.31}$ & 
         $67.93 {\scriptstyle \pm 1.58}$ &
         $\second{85.68} {\scriptstyle \pm 6.51}$ & 
         $76.32 {\scriptstyle \pm 1.65}$ &
         $89.33 {\scriptstyle \pm 0.35}$ &
         $87.30 {\scriptstyle \pm 1.15}$ \\ \midrule

         GGCN &
         $\third{84.86} {\scriptstyle \pm 4.55}$ &
         $86.86 {\scriptstyle \pm 3.29}$ &
         $37.54 {\scriptstyle \pm 1.56}$ &
         $\second{55.17} {\scriptstyle \pm 1.58}$ & 
         $\first{71.14} {\scriptstyle \pm 1.84}$ &
         $\second{85.68} {\scriptstyle \pm 6.63}$ &
         $\third{77.14} {\scriptstyle \pm 1.45}$ &
         $89.15 {\scriptstyle \pm 0.37}$ &
         $\second{87.95} {\scriptstyle \pm 1.05}$ \\
         
         H2GCN &
         $\third{84.86} {\scriptstyle \pm 7.23}$ &
         $87.65 {\scriptstyle \pm 4.98}$ &
         $35.70 {\scriptstyle \pm 1.00}$ &
         $36.48 {\scriptstyle \pm 1.86}$ & 
         $60.11 {\scriptstyle \pm 2.15}$ &
         $82.70 {\scriptstyle \pm 5.28}$ &
         $77.11 {\scriptstyle \pm 1.57}$ &
         $\third{89.49} {\scriptstyle \pm 0.38}$ &
         $\third{87.87} {\scriptstyle \pm 1.20}$ \\

         GPRGNN &
         $78.38 {\scriptstyle \pm 4.36}$ &
         $82.94 {\scriptstyle \pm 4.21}$ &
         $34.63 {\scriptstyle \pm 1.22}$ & 
         $31.61 {\scriptstyle \pm 1.24}$ &
         $46.58 {\scriptstyle \pm 1.71}$ &
         $80.27 {\scriptstyle \pm 8.11}$ &
         $77.13 {\scriptstyle \pm 1.67}$ &
         $87.54 {\scriptstyle \pm 0.38}$ &
         $\second{87.95} {\scriptstyle \pm 1.18}$ \\ 
         
         FAGCN &
         $82.43 {\scriptstyle \pm 6.89}$ &
         $82.94 {\scriptstyle \pm 7.95}$ &
         $34.87 {\scriptstyle \pm 1.25}$ &
         $42.59 {\scriptstyle \pm 0.79}$ &
         $55.22 {\scriptstyle \pm 3.19}$ & 
         $79.19 {\scriptstyle \pm 9.79}$ &
         N/A & 
         N/A & 
         N/A \\
         
         MixHop &
         $77.84 {\scriptstyle \pm 7.73}$ &
         $75.88 {\scriptstyle \pm 4.90}$ &
         $32.22 {\scriptstyle \pm 2.34}$ & 
         $43.80 {\scriptstyle \pm 1.48}$ &
         $60.50 {\scriptstyle \pm 2.53}$ &
         $73.51 {\scriptstyle \pm 6.34}$ &
         $76.26 {\scriptstyle \pm 1.33}$ &
         $85.31 {\scriptstyle \pm 0.61}$ &
         $87.61 {\scriptstyle \pm 0.85}$ \\

         GCNII &
         $77.57 {\scriptstyle \pm 3.83}$ &
         $80.39 {\scriptstyle \pm 3.40}$  &
         $37.44 {\scriptstyle \pm 1.30}$ &
         $38.47 {\scriptstyle \pm 1.58}$ &
         $63.86 {\scriptstyle \pm 3.04}$ &
         $77.86 {\scriptstyle \pm 3.79}$ &
         $\second{77.33} {\scriptstyle \pm 1.48}$ &
         $\first{90.15} {\scriptstyle \pm 0.43}$ &
         $\first{88.37} {\scriptstyle \pm 1.25}$ \\
         
         Geom-GCN &
         $66.76 {\scriptstyle \pm 2.72}$ &
         $64.51 {\scriptstyle \pm 3.66}$ &
         $31.59 {\scriptstyle \pm 1.15}$ & 
         $38.15 {\scriptstyle \pm 0.92}$ & 
         $60.00 {\scriptstyle \pm 2.81}$ &
         $60.54 {\scriptstyle \pm 3.67}$ &
         $\first{78.02} {\scriptstyle \pm 1.15}$ &
         $\second{89.95} {\scriptstyle \pm 0.47}$ &  
         $85.35 {\scriptstyle \pm 1.57}$ \\ 
         
         PairNorm &
         $60.27 {\scriptstyle \pm 4.34}$ &
         $48.43 {\scriptstyle \pm 6.14}$ &
         $27.40 {\scriptstyle \pm 1.24}$ & 
         $50.44 {\scriptstyle \pm 2.04}$ & 
         $62.74 {\scriptstyle \pm 2.82}$ &
         $58.92 {\scriptstyle \pm 3.15}$ &
         $73.59 {\scriptstyle \pm 1.47}$ &
         $87.53 {\scriptstyle \pm 0.44}$ &
         $85.79 {\scriptstyle \pm 1.01}$ \\ 
         
         GraphSAGE &
         $82.43 {\scriptstyle \pm 6.14}$ &
         $81.18 {\scriptstyle \pm 5.56}$ &
         $34.23 {\scriptstyle \pm 0.99}$ & 
         $41.61 {\scriptstyle \pm 0.74}$ & 
         $58.73 {\scriptstyle \pm 1.68}$ &
         $75.95 {\scriptstyle \pm 5.01}$ &
         $76.04 {\scriptstyle \pm 1.30}$ &
         $88.45 {\scriptstyle \pm 0.50}$ &
         $86.90 {\scriptstyle \pm 1.04}$\\
         
         GCN &
         $55.14 {\scriptstyle \pm 5.16}$ &
         $51.76 {\scriptstyle \pm 3.06}$ &
         $27.32 {\scriptstyle \pm 1.10}$ & 
         $53.43 {\scriptstyle \pm 2.01}$ &
         $64.82 {\scriptstyle \pm 2.24}$ &
         $60.54 {\scriptstyle \pm 5.30}$ &
         $76.50 {\scriptstyle \pm 1.36}$ &
         $88.42 {\scriptstyle \pm 0.50}$ &
         $86.98 {\scriptstyle \pm 1.27}$ \\ 
         
         GAT &
         $52.16 {\scriptstyle \pm 6.63}$ &
         $49.41 {\scriptstyle \pm 4.09}$ &
         $27.44 {\scriptstyle \pm 0.89}$ & 
         $40.72 {\scriptstyle \pm 1.55}$ &
         $60.26 {\scriptstyle \pm 2.50}$ &
         $61.89 {\scriptstyle \pm 5.05}$ &
         $76.55 {\scriptstyle \pm 1.23}$ &
         $87.30 {\scriptstyle \pm 1.10}$ & 
         $86.33 {\scriptstyle \pm 0.48}$ \\ 
         
         MLP &
         $80.81 {\scriptstyle \pm 4.75}$ &
         $85.29 {\scriptstyle \pm 3.31}$ &
         $36.53 {\scriptstyle \pm 0.70}$ & 
         $28.77 {\scriptstyle \pm 1.56}$ & 
         $46.21 {\scriptstyle \pm 2.99}$ &
         $81.89 {\scriptstyle \pm 6.40}$ &
         $74.02 {\scriptstyle \pm 1.90}$ &
         $87.16 {\scriptstyle \pm 0.37}$ &
         $75.69 {\scriptstyle \pm 2.00}$ \\ 

         \bottomrule
         
    \end{tabular}
    }
    \vspace{-12pt}
    \label{tab:main_results}
\end{table*}
%

Figure \ref{fig:bipartite_graph} presents the results across five seeds. As expected, for diffusion time zero (i.e. no diffusion), we see that a linear classifier cannot separate the classes. At later times, the diffusion process using symmetric maps cannot perfectly fit the data. In contrast, with the more general sheaf diffusion, as time increases and the signal approaches the harmonic space, the model gets better and the features become linearly separable. In the last subfigure, we take a closer look at the sheaf that the model learns in the time limit by plotting a histogram of all the transport (scalar) maps $\gF_{v \tleq e}^\top\gF_{u \tleq e}$. In accordance with Proposition \ref{prop:h1_linear_separation}, the model learns a negative transport map for all edges. This shows that the model manages to avoid oversmoothing (see Appendix \ref{app:extra_experiments} for an experiment with $d > 1$). \vspace{-5pt}

\paragraph{Real-world experiments.} We test our models on multiple real-world datasets~\citep{rozemberczki2021multi, pei2020geom, namata2012query, tang2009social, sen2008collective} with an edge homophily coefficient $h$ ranging from $h=0.11$ (very heterophilic) to $h=0.81$ (very homophilic). Therefore, they offer a view of how a model performs over this entire spectrum. We evaluate our models on the 10 fixed splits provided by \citet{pei2020geom} and report the mean accuracy and standard deviation. Each split contains $48\%/32\%/20\%$ of nodes per class for training, validation and testing, respectively. As baselines, we use an ample set of GNN models that can be placed in three categories: (1) classical: GCN~\citep{kipf2017graph}, GAT~\citep{velivckovic2017graph}, GraphSAGE~\citep{hamilton2017representation}; (2) models specifically designed for heterophilic settings: GGCN~\citep{yan2021two}, Geom-GCN~\citep{pei2020geom}, H2GCN~\citep{zhu2020beyond}, GPRGNN~\citep{chien2021adaptive}, FAGCN~\citep{fagcn2021}, MixHop~\citep{mixhop}; (3) models addressing oversmoothing: GCNII~\citep{pmlr-v119-chen20v}, PairNorm~\citep{Zhao2020PairNorm:}. All the results are taken from \citet{yan2021two}, except for FAGCN and MixHop, which come from \citet{lingam2021simple} and \citet{zhu2020beyond}, respectively. All of these were evaluated on the same set of splits as ours. In Appendix \ref{app:extra_experiments} we also include experiments with continuous GNN models. 
\vspace{-5pt}

\paragraph{Results.} From Table \ref{tab:main_results} we see that our models are first in $5/6$ benchmarks with high heterophily ($h < 0.3$) and second-ranked on the remaining one (i.e. Chameleon). At the same time, NSD also shows strong performance on the homophilic graphs by being within approximately 1\% of the top model. Overall, NSD models are among the top three models on $8/9$ datasets. 
The $O(d)$-bundle diffusion model performs best overall confirming the intuition that it can better avoid overfitting, while also transforming the vectors in sufficiently complex ways. We also remark on the strong performance of the model learning diagonals maps, despite the simpler functional form of the Laplacian.

\section{Related Work, Discussion, and Conclusion}

\paragraph{Sheaf Neural Networks \& Sheaf Learning.} Sheaf Neural Networks \citep{hansen2020sheaf} with a \emph{hand-crafted} sheaf Laplacian were originally introduced in a toy experimental setting. Since then, they have remained completely unexplored, and we hope this paper will fill this lacuna. In contrast to \citep{hansen2020sheaf}, we provide an ample theoretical analysis justifying the use of sheaves in Graph ML and study for the first time how sheaves can be {\em learned from data} using neural networks. Furthermore, we present the first successful application of Sheaf Neural Networks on real-world datasets. \citet{hansen2019learning} have also considered learning a sheaf Laplacian by minimising directly in matrix space a regularised Dirichlet energy metric. Different from their approach, we learn the sheaf as part of an end-to-end model and use an efficient parametrisation that is independent of the size of the graph. 

Follow-up works have also experimented with inferring a connection Laplacian directly from data at pre-processing time~\citep{barbero2022sheaf}, combining sheaves with attention~\citep{barbero2022sheaf_att}, and designing models based on the wave equation on sheaves~\citep{suk2022surfing}. Besides the sheaf Laplacians employed in all these works and ours, one can also use higher-order sheaf (connection) Laplacians that operate on higher-order tensors. These were shown to encode important information about the underlying symmetries in the data~\citep{pfau2020disentangling}, which hints at the powerful data properties that Sheaf Neural Networks could potentially extract from these operators. 


\paragraph{Heterophily and Oversmoothing.} While good empirical designs jointly addressing these two problems have been proposed before~\citep{chien2021adaptive, yan2021two}, \citet{yan2021two} is the only other work connecting the two theoretically. Their analysis~\citep{yan2021two} is very different in terms of methods and assumptions and, therefore, their results are completely orthogonal. Concretely, the authors analyse the performance of linear SGCs~\citep{SGC} (i.e. GCN without nonlinearities) on random attributed graphs. In contrast, our analysis is not probabilistic, focuses on diffusion PDEs and also extends to GCNs in the non-linear regime. Furthermore, we employ a new set of mathematical tools from cellular sheaf theory, which brings a new language and new tools to analyse these problems. Perhaps the only commonality is that both works find evidence for the benefits of negatively signed edges in GNNs, although with different mathematical motivations. At the same time, other recent works~\citep{luan2021heterophily, du2022gbk} have shown that GCNs with finite layers (typically one) can perform well in heterophilic graphs (including bipartite). This is in no contradiction with our results, which consider an \emph{infinite time/layer} regime (i.e. not finite) and \emph{perfect} linear separation (i.e. a model that cannot fit the data can still achieve high accuracy).  

\paragraph{Category Theory and GNNs.} From the perspective of category theory~\citep{mac2013categories}, cellular sheaves are a \emph{functor} from a \emph{category} describing the incidence structure of the graph to a \emph{category} describing the data living on top of the graph. Informally, this says that the vertices and edges are mapped to some type of data (e.g. vector spaces) and the incidence relations between vertices and edges are mapped to some type of relation between the assigned data (e.g. linear maps between the vector spaces). The generality provided by this perspective could be used to extend the models described in this work to more exotic types of data such as lattices and their associated sheaf Laplacians~\citep{ghrist2020cellular}. At the same time, our work echoes other recent efforts to place GNNs on a categorical foundation~\citep{de2020natural, dudzik2022graph}. 

\paragraph{Message Passing Neural Networks.} The layer from Equation \ref{eq:disc_model} can be seen as a form of GNN-FiLM layer~\citep{brockschmidt2020gnn, perez2018film}, where each node learns a linear message function conditioned on the features of the neighbours. Such models have been recently shown to perform well empirically in heterophilic settings~\citep{palowitch2022graphworld}. At the same time, the model bares an algorithmic resemblance to GAT~\citep{velivckovic2017graph}. For a central node $v$ and a neighbouring node $u$, GAT learns an attention coefficient $a_{vu}$, while our model learns a matrix given by the block $(v, u)$ of $\Delta_\gF$. Finally, a message-passing procedure based on parallel transport has also been proposed by \citet{haan2021gauge} in the context of geometric graphs (meshes). In the absence of a natural geometric structure on arbitrary graphs, in our case, the transport structure is learned from data end-to-end.    

\paragraph{Limitations and societal impact.} One of the main limitations of our theoretical analysis is that it does not address the generalisation properties of sheaves, but this remains a major impediment for the entire field of deep learning. Nonetheless, our setting was sufficient to produce many valuable insights about heterophily and oversmoothing and a basic understanding of what various types of sheaves can and cannot do. Much more work remains to be done in this direction, and we expect to see further cross-fertilization between ML and algebraic topology in the future. Finally, due to the theoretical nature of this work, we do not foresee any immediate negative societal impacts.  

\paragraph{Conclusion.} In this work, we used cellular sheaf theory to provide a novel topological perspective on heterophily and oversmoothing in GNNs. We showed that the underlying sheaf structure of the graph is intimately connected with both of these important factors affecting the performance of GNNs. To mitigate this, we proposed a new paradigm for graph representation learning where models not only evolve the features at each layer but also the underlying geometry of the graph. In practice, we demonstrated that this framework achieves competitive results in heterophilic settings.  
\vspace{-7pt}

\newpage
\begin{ack}

We are grateful to Iulia Duta, Dobrik Georgiev and Jacob Deasy for valuable comments on an earlier version of this manuscript. CB would also like to thank the Twitter Cortex team for making the research internship a fantastic experience. This research was supported in part by ERC Consolidator grant No. 724228 (LEMAN). 

\end{ack}


\bibliography{ref}
\bibliographystyle{plainnat}

\newpage
\appendix

\section{Harmonic Space Proofs}

\UpperBoundEigenv*
\begin{proof}
We first note that on a discrete $O(d)$ bundle the degree operator $D_{v} = d_{v} \mI$ since by orthogonality $\mathcal{F}_{v\tleq e}^\top \mathcal{F}_{v\tleq e} = \mI$.
We can use the Rayleigh quotient to characterize $\lambda_{0}^{\mathcal{F}}$ as 
\[
\lambda_{0}^{\mathcal{F}} = \min_{\vx\in \R^{nd}}\frac{\langle \vx, \Delta_{\mathcal{F}} \vx\rangle}{\lvert\lvert \vx \rvert\rvert^{2}}.
\]
\noindent Fix $v\in V$ and choose a minimal path $\gamma_{v\rightarrow u}$ for all $u\in V$. For an arbitrary non-zero $\vz_v$, consider the signal $ \vz_{u} = P^{\gamma}_{v\rightarrow u}\vz_{v}$
and we set $\tilde{\vz}_{u}\sqrt{d_{u}} = \vz_{u}$.
\[
\lvert \lvert \mathcal{F}_{u\tleq e}\frac{\vz_{u}}{\sqrt{d_{u}}} -  \mathcal{F}_{w\tleq e}\frac{\vz_{w}}{\sqrt{d_{w}}}\rvert\rvert^{2} = \lvert \lvert \tilde{\vz}_{u} -  (\mathcal{F}_{u\tleq e}^{\top}\mathcal{F}_{w\tleq e})\tilde{\vz}_{w}\rvert\rvert^{2} =
\lvert \lvert P^{\gamma}_{v\rightarrow u}\tilde{\vz}_{v} -  (\mathcal{F}_{u\tleq e}^{\top}\mathcal{F}_{w\tleq e})\mP^{\gamma}_{v\rightarrow w}\tilde{\vz}_{v}\rvert\rvert^{2},
\]
\noindent where we have again used that the maps are orthogonal. Since $(\mathcal{F}_{u\tleq e}^{\top}\mathcal{F}_{w\tleq e})\mP^{\gamma}_{v\rightarrow w} = \mP^{\gamma'}_{v\rightarrow u}$ we find that the right hand side can be bound from above by $r^{2}\lvert\lvert \tilde{\vz}_{v}\rvert\rvert^{2}$. Therefore, by using Definition \ref{def:energy} we finally obtain
\[
\lambda_{0}^{\mathcal{F}} = \min_{\vx\in R^{nd}}\frac{\langle \vx, \Delta_{\mathcal{F}} \vx\rangle}{\lvert\lvert \vx \rvert\rvert^{2}} \leq \frac{\langle \vz, \Delta_{\mathcal{F}} \vz\rangle}{\lvert\lvert \vz \rvert\rvert^{2}} = \frac{1}{2}\frac{\sum_{u\sim w}\lvert\lvert \mathcal{F}_{u\tleq e}\frac{\vz_{u}}{\sqrt{d_{u}}} -  \mathcal{F}_{w\tleq e}\frac{\vz_{w}}{\sqrt{d_{w}}}\rvert\rvert^{2}}{\lvert\lvert \vz \rvert\rvert^{2}} \leq \frac{r^{2}}{2}\frac{\sum_{u\sim w}\lvert\lvert \tilde{\vz}_{v}\rvert\rvert^{2}}{\lvert\lvert \vz\rvert\rvert^{2}}.
\]
\noindent Since the transport maps are all orthogonal we get
\[
\lvert\lvert \vz \rvert\rvert^{2} = \sum_{u} d_{u}\lvert\lvert \mP^{\gamma}_{v\rightarrow u}\tilde{\vz}_{v}\rvert\rvert^{2} =  \sum_{u} d_{u}\lvert\lvert \tilde{\vz}_{v}\rvert\rvert^{2} = \sum_{u\sim w}\lvert\lvert \tilde{\vz}_{v}\rvert\rvert^{2}. 
\]
\noindent We conclude that
\[
\lambda_{0}^{\mathcal{F}} \leq \frac{r^{2}}{2}\frac{\sum_{u\sim w}\lvert\lvert \tilde{\vz}_{v}\rvert\rvert^{2}}{\lvert\lvert \vz\rvert\rvert^{2}} = \frac{r^{2}}{2}.
\]
\end{proof}

\CycleHarmonic*
\begin{proof}
Assume that $\vx\in H^{0}(G,\mathcal{F})$ and consider $v\in V$ and any cycle based at $v$ denoted by $\gamma_{v\rightarrow v} = (v_{0} = v,v_{1},\ldots, v_{L}=v)$. According to the Hodge Theorem we have that 
\[
\mathcal{F}_{v_{i+1}\tleq e}\vx_{v_{i+1}} = \mathcal{F}_{v_{i}\tleq e}\vx_{v_{i}}  \Longrightarrow \vx_{v_{i+1}} = (\mathcal{F}_{v_{i+1}}^{\top}\mathcal{F}_{v_{i}})\vx_{v_{i}} := \rho_{v_{i}\rightarrow v_{i+1}}\vx_{v_{i}}.
\]
\noindent By composing all the maps we find:
\[
\vx_{v} = \rho_{v_{L-1}\rightarrow v_{L}}\cdots \rho_{v_{0}\rightarrow v_{1}}x_{v} = \mP^{\gamma}_{v\rightarrow v}\vx_{v} 
\]
\noindent which completes the proof.
\end{proof}

\LowerBoundEigen*
\begin{proof}
If $\eps = 0$ there is nothing to prove. Assume that $\eps > 0$. By Proposition \ref{prop:cycle_harmonic} we derive that the harmonic space is trivial and hence $\lambda_{0}^{\mathcal{F}} > 0$. Consider a \emph{unit} eigenvector $\vx \in \mathrm{ker}(\Delta_{\mathcal{F}} - \lambda_{0}^{\mathcal{F}}I)$ and let $v\in V$ such that $\lvert\lvert \vx_{v} \rvert\rvert \geq \lvert\lvert \vx_{u}\rvert\rvert$ for $u\neq v$. There exists a cycle $\gamma$ based at $v$ such that $\mP^{\gamma}_{v}\vx_{v} \neq \vx_{v}$ for otherwise we could extend $\vx_{v} \neq 0$ to any other node independently of the path choice and hence find a non-trivial harmonic signal. In particular, we can assume this cycle to be non-degenerate, otherwise if there existed a non-trivial degenerate loop contained in $\gamma$ that does not fix $\vx$ we could consider this loop instead of $\gamma$ for our argument. Let us write this path as $(v_{0} = v,v_{1},\ldots,v_{L}=v)$ and consider the rescaled signal $\tilde{\vx}_{v}\sqrt{d}_{v} = \vx_{v}$. By assumption we have 
\begin{align*}
\eps \lvert\lvert \tilde{\vx}_{v}\rvert\rvert &\leq \lvert\lvert (\mP^{\gamma}_{v\rightarrow v} - \mI)\tilde{\vx}_{v}\rvert\rvert = \lvert\lvert (\rho_{v_{L-1}\rightarrow v_{L}}\cdots \rho_{v_{0}\rightarrow v_{1}} - \mI)\tilde{\vx}_{v}\rvert\rvert \\ &= \lvert\lvert \mathcal{F}_{v_{L-1}}\rho_{v_{L-2}\rightarrow v_{L-1}}\cdots \rho_{v_{0}\rightarrow v_{1}}\tilde{\vx}_{v} - \mathcal{F}_{v_{L}=v}\tilde{\vx}_{v}\rvert\rvert \\
&= \lvert\lvert \mathcal{F}_{v_{L-1}}\rho_{v_{L-2}\rightarrow v_{L-1}}\cdots \rho_{v_{0}\rightarrow v_{1}}\tilde{\vx}_{v} - \mathcal{F}_{v_{L-1}}\tilde{\vx}_{v_{L-1}} + \mathcal{F}_{v_{L-1}}\tilde{\vx}_{v_{L-1}} - \mathcal{F}_{v_{L}=v}\tilde{\vx}_{v}\rvert\rvert \\ &\leq \lvert\lvert \rho_{v_{L-2}\rightarrow v_{L-1}}\cdots \rho_{v_{0}\rightarrow v_{1}}\tilde{\vx}_{v} - \tilde{\vx}_{v_{L-1}}\rvert\rvert + \lvert\lvert \mathcal{F}_{v_{L-1}}\tilde{\vx}_{v_{L-1}} - \mathcal{F}_{v_{L}=v}\tilde{\vx}_{v}\rvert\rvert.
\end{align*}
\noindent By iterating the approach above we find:
\begin{align}
\eps \lvert\lvert \tilde{\vx}_{v}\rvert\rvert &\leq \sum_{i = 0}^{L} \lvert\lvert \mathcal{F}_{v_{i}}\tilde{\vx}_{v_{i}} -\mathcal{F}_{v_{i+1}}\tilde{\vx}_{v_{i+1}}\rvert\rvert \leq \sqrt{L}\left(\sum_{i = 0}^{L} \lvert\lvert \mathcal{F}_{v_{i}}\tilde{\vx}_{v_{i}} -\mathcal{F}_{v_{i+1}}\tilde{\vx}_{v_{i+1}}\rvert\rvert^{2}\right)^{\frac{1}{2}} \nonumber \\ 
&= \sqrt{L} \left(\sum_{i = 0}^{L} \lvert\lvert \mathcal{F}_{v_{i}}\frac{\vx_{v_{i}}}{\sqrt{d_{v_{i}}}} -\mathcal{F}_{v_{i+1}}\frac{\vx_{v_{i+1}}}{\sqrt{d_{v_{i+1}}}}\rvert\rvert^{2}\right)^{\frac{1}{2}}. \nonumber
\end{align}

\noindent From Definition \ref{def:energy} we derive that the last term can be bounded from above by $\sqrt{2LE_\gF(\vx)} = \sqrt{2L\langle \vx, \Delta_{\mathcal{F}}\vx\rangle}$. Therefore, we conclude:
\[
\eps \frac{\lvert\lvert \vx_{v}\rvert\rvert}{\sqrt{d_{v}} } \leq \sqrt{2L\langle \vx, \Delta_{\mathcal{F}}\vx\rangle} = \sqrt{2 L \lambda_{0}^{\mathcal{F}}}\lvert\lvert \vx\rvert\rvert \leq 2 \sqrt{\mathrm{diam}(G) \lambda_{0}^{\mathcal{F}}}.
\]
\noindent By construction we get $\lvert\lvert \vx_{v}\rvert\rvert \geq 1/\sqrt{n}$, meaning that
\[
 \lambda_{0}^{\mathcal{F}} \geq\frac{\eps^{2}}{2 \mathrm{diam}(G)}\frac{1}{n\,d_{\text{max}}}.
\]
\end{proof}
 
\BundleHDim*
\begin{proof}
We first note that the argument below extends to weighted $O(d)$-bundles as well. Let $\vx\in H^{0}(G,\mathcal{F})$. According to Proposition \ref{prop:cycle_harmonic}, given $v,u \in V$, we see that $x_{u} = \mP^{\gamma}_{v \rightarrow u}x_{v}$ for any path $\gamma_{v\rightarrow u}$. It means that the harmonic space is uniquely determined by the choice of $\vx_{v}\in \mathcal{F}(v)$. Explicitly, given any cycle $\gamma$ based at $v$, we know that $\vx_{v}\in\mathrm{ker}(\mP^{\gamma}_{v\rightarrow v} - I)$. If the transport is everywhere path-independent, then the kernel coincides with the whole stalk $\mathcal{F}(v)$ and hence we can extend any basis $\{\vx_{v_{i}}\}\in\mathcal{F}(v)\cong \R^{d}$ to a basis in $H^{0}(G,\mathcal{F})$ via the transport maps, i.e. $\mathrm{dim}(H^{0}(G,\mathcal{F})) = d$. If instead there exists a transport map over a cycle $\gamma_{v\rightarrow v}$ with non-trivial fixed points, then $\mathrm{ker}(\mP^{\gamma}_{v\rightarrow v} - I) < \mathcal{F}(v) \cong \R^{d}$ and hence $\mathrm{dim}(H^{0}(G,\mathcal{F})) < d$. 
\end{proof} 

\section{Proofs for the Power of Sheaf Diffusion}\label{app:diffusion_power}

\begin{definition}
Let $G = (V, \mW)$ be a weighted graph, where $\mW$ is a matrix with $w_{vu} = w_{uv} \geq 0$ for all $v \neq u \in V$, $w_{vv} = 0$ for all $v \in V$, and $(v, u)$ is an edge if and only if $w_{vu} > 0$. 
\end{definition}

The graph Laplacian of a weighted graph is $\mL = \mD - \mW$, where $\mD$ is the diagonal matrix of weighted degrees (i.e. $d_{v} = \sum_u w_{vu}$). Its normalised version is $\widetilde{L} = \mD^{-1/2} \mL \mD^{-1/2}$. 

\begin{proposition}
Let $G$ be a graph. The set $\{ \Delta_\gF \mid (G, \gF) \in \gH^1_{\mathrm{sym}}\}$ is isomorphic to the set of all possible weighted graph Laplacians over $G$. 
\end{proposition}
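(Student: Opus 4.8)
The plan is to exhibit an explicit bijection by reading off the entries of the sheaf Laplacian in the scalar case and matching them to those of a weighted graph Laplacian. First I would specialise to $d=1$: a sheaf in $\gH^1_{\mathrm{sym}}$ assigns to each edge $e=(v,u)$ a single nonzero scalar $c_e := \gF_{v\tleq e} = \gF_{u\tleq e}$, since the symmetry constraint forces the two restriction maps to agree and invertibility ($\mathrm{det}(\gF_{v\tleq e})\neq 0$) means $c_e \neq 0$. Substituting into Definition~\ref{def:laplacian}, the diagonal entry of $L_\gF$ at $v$ is $\sum_{e\ni v} \gF_{v\tleq e}^\top\gF_{v\tleq e} = \sum_{e\ni v} c_e^2$, while the off-diagonal entry for edge $e=(v,u)$ is $-\gF_{v\tleq e}^\top \gF_{u\tleq e} = -c_e^2$.

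Next I would recognise these as exactly the entries of a weighted graph Laplacian $\mL = \mD - \mW$ with edge weights $w_{vu} := c_e^2$: the off-diagonal entries are $-w_{vu}$ and the diagonal is the weighted degree $d_v = \sum_u w_{vu}$. Hence the assignment $(G,\gF)\mapsto (w_{vu}=c_e^2)$ identifies $L_\gF$ with $\mL$ and, after conjugating by $\mD^{-1/2}$ (the block-diagonal normalisation, which is genuinely diagonal when $d=1$), identifies $\Delta_\gF$ with the normalised weighted Laplacian $\widetilde{L}$. This shows every element of $\{\Delta_\gF : (G,\gF)\in\gH^1_{\mathrm{sym}}\}$ is a weighted graph Laplacian.

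For the reverse inclusion, given any weighted graph over $G$ with weights $w_{vu}>0$, I would set $c_e := \sqrt{w_{vu}}$ for each edge $e=(v,u)$ and $\gF_{v\tleq e}=\gF_{u\tleq e}=c_e$; this is a valid element of $\gH^1_{\mathrm{sym}}$ (the scalars are nonzero and symmetric across each edge) whose Laplacian is precisely the prescribed one. Thus the two sets of matrices coincide, and reading off the positive off-diagonal entries of the Laplacian recovers the weight data $w_{vu}$ uniquely, furnishing the claimed bijection.

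The one point requiring care — and the main, though minor, obstacle — is the sign ambiguity: replacing $c_e$ by $-c_e$ on any edge leaves $L_\gF$ (and hence $\Delta_\gF$) unchanged, because every contribution is quadratic in the restriction maps. Consequently the raw map from sheaves to weightings is two-to-one per edge and is not itself injective. I would resolve this by stressing that the statement concerns the \emph{set of Laplacians}, on which the correspondence is well defined and bijective: the values $c_e^2$ are recovered uniquely as the positive off-diagonal weights, yielding a clean identification of $\{\Delta_\gF\}$ with the set of normalised weighted graph Laplacians (and, at the unnormalised level, of $\{L_\gF\}$ with $\{\mL\}$), which is the sense in which the two sets are isomorphic.
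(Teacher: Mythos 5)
Your proof is correct and takes essentially the same approach as the paper's: a direct entry-matching argument identifying the $d=1$ sheaf Laplacian entries $\sum_{e \ni v} c_e^2$ (diagonal) and $-c_e^2$ (off-diagonal) with those of the weighted graph Laplacian under $w_{vu} = c_e^2$, together with the inverse construction $\gF_{v \tleq e} = \gF_{u \tleq e} = \sqrt{w_{vu}}$. Yours is in fact more complete: the paper explicitly proves only the direction from weight matrices to sheaf Laplacians (writing $\pm\sqrt{w_{vu}}$ for the restriction maps) and leaves the forward inclusion implicit, whereas you establish both inclusions and correctly isolate the per-edge sign ambiguity, observing that it is harmless because the proposition concerns the set of Laplacians rather than the map from sheaves to Laplacians.
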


\begin{proof}
We prove only one direction. Let $\mW$ be a choice of valid weight matrix for the graph $G$. We can construct a sheaf $(G, \gF) \in \gH^1_{\mathrm{sym}}$ such that for all edges $v, u \tleq e$ we have that $\gF_{v \tleq e} = \gF_{u \tleq e} = \pm\sqrt{w_{vu}}$. Then, $\gL_{vu} = -w_{vu}$ and $\gL_{vv} = \sum_e \norm{\gF_{v \tleq e}}^2 = \sum_u w_{vu}$. The equality for the normalised version of the Laplacians follows directly.
\end{proof}

We state the following Lemma without proof based on Theorem 3.1 in \citet{hansen2021opinion}. 

\begin{lemma}\label{lemma:harmonic_convergence}
Solutions $\mX(t)$ to the diffusion in Equation \ref{eq:diffusion} converge as $t \to \infty$ to the orthogonal projection of $\mX(0)$ onto $\ker(\Delta_\gF)$. 
\end{lemma}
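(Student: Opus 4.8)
The plan is to solve the linear autonomous ODE in Equation \ref{eq:diffusion} in closed form and then read off the limit directly from the spectrum of $\Delta_\gF$. First I would observe that, since $\Delta_\gF$ is a fixed (time-independent) matrix, the unique solution of $\dot{\mX}(t) = -\Delta_\gF \mX(t)$ with $\mX(0) = \mX$ is the matrix exponential $\mX(t) = e^{-t\Delta_\gF}\mX(0)$, where the exponential acts on the left and hence column-wise on the feature channels.

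Next I would exploit the fact that $\Delta_\gF = D^{-1/2} L_\gF D^{-1/2}$ is symmetric and positive semi-definite: it is symmetric because both $L_\gF$ and $D^{-1/2}$ are, and it is PSD because it is a congruence transform of the PSD operator $L_\gF$. The spectral theorem then supplies an orthonormal eigenbasis $\{\phi_i\}$ of $\sR^{nd}$ with eigenvalues $\lambda_i \geq 0$, so that $e^{-t\Delta_\gF} = \sum_i e^{-t\lambda_i}\,\phi_i \phi_i^\top$.

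The key step is to pass $t \to \infty$ inside this (finite) sum. The eigenvalues split into two groups: those with $\lambda_i = 0$, for which $e^{-t\lambda_i} = 1$ for all $t$, and those with $\lambda_i > 0$, for which $e^{-t\lambda_i} \to 0$. Hence $e^{-t\Delta_\gF} \to \sum_{i:\,\lambda_i = 0}\phi_i\phi_i^\top =: P$, which is exactly the orthogonal projector onto the eigenspace of eigenvalue zero, i.e. onto $\ker(\Delta_\gF)$. Applying this to $\mX(0)$ column by column yields $\mX(t) \to P\,\mX(0)$, the orthogonal projection of $\mX(0)$ onto $\ker(\Delta_\gF)$, as claimed.

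There is no genuine obstacle here, as the result is a standard consequence of linear ODE theory together with the spectral theorem; the only points requiring care are verifying the symmetry and semi-definiteness of $\Delta_\gF$ (so that the eigenbasis is orthonormal and the limiting operator is the \emph{orthogonal} projection rather than an oblique one), and noting that because the operator is constant in time the exponential and its limit can be computed purely spectrally. Finiteness of the graph makes the sum finite, so interchanging the limit with the summation is immediate.
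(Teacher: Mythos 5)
Your proof is correct and complete. Note, however, that the paper itself does not prove this lemma at all: it is stated without proof, deferring to Theorem~3.1 of \citet{hansen2021opinion}. Your argument---writing the solution as $\mX(t) = e^{-t\Delta_\gF}\mX(0)$, diagonalising the symmetric PSD operator $\Delta_\gF$ in an orthonormal eigenbasis, and letting the positive-eigenvalue modes decay---is precisely the standard spectral proof underlying that cited result, and you correctly handle the two points that matter: symmetry and positive semi-definiteness of $\Delta_\gF = D^{-1/2}L_\gF D^{-1/2}$ (guaranteeing the limiting operator is the \emph{orthogonal} projector onto $\ker(\Delta_\gF)$ rather than an oblique one), and the finiteness of the spectrum, which makes the interchange of limit and sum trivial. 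In effect, you have supplied the self-contained proof the paper chose to outsource.
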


Due to this Lemma, the proofs below rely entirely on the structure of $\mathrm{ker}(\Delta_\gF)$ that one obtains for certain $(G, \gF)$. 

\HOneSymHomophily*
\begin{proof}
Let $G = (V, E)$ be a graph with two classes $A, B \subset V$ such that for each $v \in A$, there exists $u \in A$ and an edge $(v, u) \in E$. Additionally, let $\vx(0)$ be any channel of the feature matrix $\mX(0) \in \sR^{n \times f}$. 

We can construct a sheaf $(\gF, G) \in \gH^1_{\mathrm{sym}}$ as follows. For all nodes $v \in V$ and edges $e \in E$, $\gF(v) \cong \gF(e) \cong \sR$. For all $v, u \in A$ and edge $(u, v) \in E$, set $\gF_{v \tleq e} = \gF_{u \tleq e} = \sqrt{\alpha} > 0$. Otherwise, set $\gF_{v \tleq e} = 1$. 

Denote by $h_v$ the number of neighbours of node $v$ in the same class as $v$ . Note that based on the assumptions, $h_v > 1$ if $v \in A$. Then the only harmonic eigenvector of $\Delta_\gF$ is:
\begin{align}
    \va_v = 
    \begin{cases}
        \sqrt{d_v + h_v(\alpha - 1)}, & v \in A \\
        \sqrt{d_v}, & v \in B
    \end{cases}
\end{align}
Denote its unit-normalised version $\widetilde{\va} := \frac{\va}{\norm{\va}}$. In the limit of the diffusion process, the features converge to $\vh = \langle \vx(0),\widetilde{\va} \rangle \widetilde{\va}$ by Lemma \ref{lemma:harmonic_convergence}. Assuming, $\vx(0) \notin \mathrm{ker}(\Delta_\gF)^\perp$, which is nowhere dense in $\sR^n$ and, without loss of generality, that $\langle \vx(0),\widetilde{\va} \rangle > 0$, for sufficiently large $\alpha$, $\widetilde{\va}_v \geq \widetilde{\va}_u$ for all $v \in A, u \in B$. 
\end{proof}

\HOneSymHeterophily*
\begin{proof}
Let $G = (A, B, E)$ be a bipartite graph with $\vert A \vert = \vert B \vert$ and let $\vx(0) \in \sR^n$ be any channel of the feature matrix $\mX(0) \in \sR^{n \times f}$. 

Consider an arbitrary sheaf $(G, \gF) \in \gH^1_\mathrm{sym}$. Since the graph is connected, the only harmonic eigenvector of $\Delta_\gF$ is $\vy \in \sR^n$ with $\vy_v = \sqrt{\sum_{v \tleq e} \norm{\gF_{v \tleq e}}^2}$ (i.e. the square root of the weighted degree). Based on Lemma \ref{lemma:harmonic_convergence}, the diffusion process converges in the limit (up to a scaling) to $\langle \vx,\vy \rangle \vy$. For the features to be linearly separable we require that $\langle \vx,\vy \rangle \neq 0$ and, without loss of generality, for all $v \in A, u \in B$ that $\vy_v < \vy_u \Leftrightarrow \sum_{v \tleq e} \norm{\gF_{v \tleq e}}^2 < \sum_{u \tleq e} \norm{\gF_{u \tleq e}}^2$. 

Suppose for the sake of contradiction there exists a sheaf in $\gH^1_\mathrm{sym}$ with such a harmonic eigenvector. Then, because $|A| = |B|$: 
\begin{align}
    \sum_{v \in A} \sum_{v \tleq e} \norm{\gF_{v \tleq e}}^2 < \sum_{u \in B} \sum_{u \tleq e} \norm{\gF_{u \tleq e}}^2 &\Leftrightarrow
     \sum_{v \in A} \sum_{v \tleq e} \norm{\gF_{v \tleq e}}^2 - \sum_{u \in B} \sum_{u \tleq e} \norm{\gF_{u \tleq e}}^2 < 0  \nonumber \\
    &\Leftrightarrow
    \sum_{e \in E} \norm{\gF_{v \tleq e}}^2 - \norm{\gF_{u \tleq e}}^2 < 0 \nonumber
\end{align}
However, because $(\gF, G) \in \gH^1_\mathrm{sym}$, we have $\gF_{v \tleq e} = \gF_{u \tleq e}$ and the sum above is zero. 
\end{proof}

\HOneLinearSeparation*
\begin{proof}
Let $G = (V, E)$ be a connected graph with two classes $A, B \subset V$. Additionally, let $\vx(0)$ be any channel of the feature matrix $\mX(0) \in \sR^{n \times f}$. Any sheaf of the  described type has a single harmonic eigenvector by virtue of Lemma \ref{lemma:bundle_h0_dim}, and it has the form:
\begin{align}
    \vy_v = 
    \begin{cases}
        + \sqrt{\sum_{v \tleq e} \alpha_e}, & v \in A \\
        - \sqrt{\sum_{v \tleq e} \alpha_e}, & v \in B
    \end{cases}
\end{align}
Assume $\vx(0) \notin \mathrm{ker}(\Delta_\gF)^\perp$, which is nowhere dense in $\sR^n$ and, without loss of generality, that $\langle \vx(0),\vy \rangle > 0$. Then, $\vy_v > 0 > \vy_u$ for all $v \in A, u \in B$.  
\end{proof}

Next, we showed that using signed relations is necessary in $d=1$ and simply using positive asymmetric relations is not sufficient in this dimension. 

\begin{definition}
The class of sheaves over $G$ with non-zero maps, one-dimensional stalks, and similarly signed restriction maps $\gH^1_\mathrm{+} := \{ (\gF, G) \mid \gF_{v \tleq e} \gF_{u \tleq e} > 0\} $
\end{definition}

\begin{proposition}\label{prop:h1_with_same_sign}
Let $G$ be the connected graph with two nodes belonging to two different classes. Then $\gH^1_\mathrm{+}$ cannot linearly separate the two nodes for any initial conditions $\mX \in \sR^{2 \times f}$. 
\end{proposition}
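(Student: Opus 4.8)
The plan is to exploit that $G$ is the simplest connected graph: two nodes $v, u$ joined by a single edge $e$. With $d = 1$ the restriction maps are just nonzero scalars $a := \gF_{v \tleq e}$ and $b := \gF_{u \tleq e}$, and membership in $\gH^1_\mathrm{+}$ means precisely that $ab > 0$. My first step is to write down the (unnormalised) sheaf Laplacian explicitly; since each node has degree one,
\[
L_\gF = \begin{pmatrix} a^2 & -ab \\ -ab & b^2 \end{pmatrix}, \qquad D = \begin{pmatrix} a^2 & 0 \\ 0 & b^2 \end{pmatrix}.
\]

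The crucial step is then to normalise. Since $D^{-1/2} = \mathrm{diag}(1/|a|, 1/|b|)$, the off-diagonal entry of $\Delta_\gF = D^{-1/2} L_\gF D^{-1/2}$ equals $-ab/(|a||b|) = -\mathrm{sign}(ab)$, while the diagonal entries are $1$. The hypothesis $ab > 0$ forces this to be
\[
\Delta_\gF = \begin{pmatrix} 1 & -1 \\ -1 & 1 \end{pmatrix},
\]
which is exactly the trivial normalised graph Laplacian $\Delta_0$ — independently of the magnitudes $|a|, |b|$. In other words, the $D^{-1/2}$ normalisation erases all the freedom present in $\gH^1_\mathrm{+}$, so every sheaf in this class induces the same diffusion operator, with harmonic space $\mathrm{ker}(\Delta_\gF) = \mathrm{span}\{(1,1)^\top\}$.

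It then remains to invoke Lemma \ref{lemma:harmonic_convergence}: for each feature channel $\vx(0) = (x_1, x_2)^\top$, the diffusion converges to the orthogonal projection onto $\mathrm{ker}(\Delta_\gF)$, namely $\tfrac{1}{2}(x_1 + x_2)(1,1)^\top$. Both coordinates of this limit are equal, so in the time limit nodes $v$ and $u$ receive identical feature values in every channel and hence coincide as points of $\sR^f$. No affine hyperplane can separate two coincident points, so linear separation fails — and since the two limiting representations are equal for \emph{every} choice of $\mX(0)$ (including the degenerate case where the projection vanishes), this holds for all initial conditions, as required.

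The argument is short, and the only real content is the normalisation computation: the point to get right is that passing from $L_\gF$ to $\Delta_\gF$ collapses the entire family $\gH^1_\mathrm{+}$ onto the single operator $\Delta_0$, so no amount of tuning the positive-product maps can break the degeneracy. I expect no genuine obstacle beyond carefully tracking the signs (where $ab > 0$ is used) and confirming that the conclusion is uniform over all initial conditions rather than merely almost all of them.
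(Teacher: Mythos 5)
Your proof is correct and follows essentially the same route as the paper's: both compute the harmonic space of $\Delta_\gF$ for the two-node graph, find it is spanned by a constant vector whenever $\gF_{v \tleq e}\gF_{u \tleq e} > 0$, and invoke Lemma \ref{lemma:harmonic_convergence} to conclude that the two nodes receive identical limiting features in every channel and hence cannot be linearly separated, for any initial conditions. Your explicit observation that the $D^{-1/2}$ normalisation collapses all of $\gH^1_\mathrm{+}$ onto the single operator $\Delta_0$ is just a more detailed rendering of what the paper states by exhibiting the harmonic eigenvector $(\alpha\beta, \alpha\beta)$.
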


\begin{proof}
Let $G$ be the connected graph with two nodes $V = \{v, u\}$. Then any sheaf $(\gF, G) \in \gH_{+}^1(G)$ has restriction maps of the form $\gF_{v \tleq e} = \alpha, \gF_{u \tleq e} = \beta$ and (without loss of generality) $\alpha, \beta > 0$. As before, the only (unnormalized) harmonic eigenvector for a sheaf of this form is $\vy = (|\alpha| \beta, \alpha |\beta|) = (\alpha \beta, \alpha \beta)$. Since this is a constant vector, the two nodes are not separable in the diffusion limit. 
\end{proof}

We state the following result without a proof (see Exercise 4.1 in \citet{bishop}).  

\begin{lemma}\label{lemma:convex_hull_separability}
Let $A$ and $B$ be two sets of points in $\sR^n$. If their convex hulls intersect, the two sets of points cannot be linearly separable. 
\end{lemma}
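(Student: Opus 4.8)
The plan is to prove the contrapositive by contradiction: assuming $A$ and $B$ are linearly separable, I would show that their convex hulls must be disjoint. Recall that $A$ and $B$ are \emph{linearly separable} if there exist a vector $\vw \in \sR^n$ and a scalar $c \in \sR$ with $\langle \vw, \va \rangle > c$ for every $\va \in A$ and $\langle \vw, \vb \rangle < c$ for every $\vb \in B$. The single observation driving the argument is that the affine functional $\ell(\vx) := \langle \vw, \vx \rangle - c$ respects convex combinations, so its sign on the entire convex hull is determined by its sign on the generating points.

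Concretely, I would first take any point $\vp$ in the (assumed non-empty) intersection $\mathrm{conv}(A) \cap \mathrm{conv}(B)$. Writing $\vp = \sum_i \lambda_i \va_i$ as a convex combination of points $\va_i \in A$ with $\lambda_i \geq 0$ and $\sum_i \lambda_i = 1$, the key step is the identity
\[
\ell(\vp) = \Big\langle \vw, \textstyle\sum_i \lambda_i \va_i \Big\rangle - c = \sum_i \lambda_i \big( \langle \vw, \va_i \rangle - c \big) = \sum_i \lambda_i \, \ell(\va_i),
\]
where the middle equality uses $\sum_i \lambda_i = 1$ to reabsorb the constant $c$. Since each $\ell(\va_i) > 0$ and the weights form a convex combination, this forces $\ell(\vp) > 0$. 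Applying the identical computation to a representation $\vp = \sum_j \mu_j \vb_j$ with $\vb_j \in B$ yields $\ell(\vp) = \sum_j \mu_j \, \ell(\vb_j) < 0$.

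These two conclusions are incompatible, since $\vp$ cannot simultaneously satisfy $\ell(\vp) > 0$ and $\ell(\vp) < 0$; hence no separating hyperplane can exist whenever the hulls meet. The argument is essentially routine, and the only point requiring care is the bookkeeping around strict versus non-strict inequalities: the strictness in the definition of linear separability is exactly what rules out the boundary case $\ell(\vp) = 0$, so I would ensure the definition in force carries strict inequalities (or, under a weak-separation convention, treat the degenerate case where $\vp$ lands on the hyperplane separately). No tools beyond bilinearity of the inner product and the definition of a convex hull are needed, which is why the result can reasonably be cited rather than proved in full.
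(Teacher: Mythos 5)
Your proof is correct. Note that the paper does not actually prove this lemma at all --- it states it without proof, citing Exercise 4.1 of \citet{bishop} --- and your argument is exactly the standard one that citation points to: an affine functional $\ell(\vx) = \langle \vw, \vx\rangle - c$ commutes with convex combinations, so strict positivity of $\ell$ on $A$ and strict negativity on $B$ propagate to the respective hulls and yield contradictory signs at any common point. Your closing remark about strict versus weak inequalities is the right caveat, and the strict-inequality convention you adopt is the one implicitly used in the paper (e.g.\ in the proof of Proposition~\ref{prop:impossible_separation}, where a point lying in the convex hull of the other classes is taken to preclude separability).
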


\ImpossibleSeparation*
\begin{proof}
If the sheaf has a trivial global section, all features converge to zero in the diffusion limit. Suppose $H^0(G, \gF)$ is non-trivial. Since $G$ is connected and all the restriction maps are invertible, by Lemma \ref{lemma:bundle_h0_dim}, $\mathrm{dim}(H^0) = 1$. 

In that case, let $\vh$ be the unit-normalised harmonic eigenvector of $\Delta_\gF$. By Lemma \ref{lemma:harmonic_convergence}, for any node $v$, its scalar feature in channel $k \leq f$ is given by $x_v^k(\infty) = \langle \vx^k(0), \vh \rangle \vh_v$. Note that we can always find three nodes $v, u, w$ belonging to three different classes such that $\vh_v \leq \vh_u \leq \vh_v$. Then, there exists a convex combination $\vh_u = \alpha \vh_v + (1 - \alpha) \vh_w$, with $\alpha \in [0, 1]$. Therefore:
\begin{equation}
    \vx_u^k(\infty) = \langle \vx^k(0), \vh \rangle \vh_u = \alpha \langle \vx^k(0), \vh \rangle  \vh_v + (1 - \alpha) \langle \vx^k(0), \vh \rangle \vh_w = \alpha \vx^k_v(\infty) + (1 - \alpha) \vx^k_w(\infty).
\end{equation}
Since this is true for all channels $k \leq f$, it follows that $\vx_u(\infty) = \alpha \vx_v(\infty) + (1 - \alpha) \vx_w(\infty)$. Because $\vx_u(\infty)$ is in the convex hull of the points belonging to other classes, by Lemma \ref{lemma:convex_hull_separability}, the class of $v$ is not linearly separable from the other classes.  
\end{proof}

\DiagSeparation*
\begin{proof}
Let $G = (V, E)$ be a connected graph with $C$ classes and $(\gF, G)$, an arbitrary sheaf in $\gH^d_{\mathrm{diag}}$. Because $\gF$ has diagonal restriction maps, there is no interaction during diffusion between the different dimensions of the stalks. Therefore, the diffusion process can be written as $d$ independent diffusion processes, where the $i$-th process uses a sheaf $\gF^i$ with all stalks isomorphic to $\sR$ and $\gF^i_{v \tleq e} = \gF_{v \tleq e}(i, i)$ for all $v \in V$ and incident edges $e$. Therefore, we can construct $d$ sheaves $\gF^i \in \gH^1(G)$ with $i < d$ as in Proposition \ref{prop:h1_linear_separation}, where (in one vs all fashion) the two classes are given by the nodes in class $i$ and the nodes belonging to the other classes. 

It remains to restrict that the projection of $\vx(0)$ on any of the harmonic eigenvectors of $\Delta_\gF$ in the standard basis is non-zero. Formally, we require $\vx^i(0) \notin \mathrm{ker}(\Delta_{\gF^i})^\perp$ for all positive integers $i \leq d$. Since $\mathrm{ker}(\Delta_{\gF^i})^\perp$ is nowhere dense in $\sR^n$, $\vx(0)$ belongs to the direct sum of dense subspaces, which is dense. 
\end{proof}

\begin{lemma}\label{lemma:orth_bundle_0eigenspace}
Let $G = (V, E)$ be a graph and $(\gF, G)$ a (weighted) orthogonal vector bundle over $G$ with path-independent parallel transport and edge weights $\alpha_e$. Consider an arbitrary node $v^* \in V$ and denote by $\ve_i$ the $i$-th standard basis vector of $\sR^d$. Then $\{ \vh^1,\ldots,\vh^d\}$ form an orthogonal eigenbasis for the harmonic space of $\Delta_\gF$, where:
\begin{equation}
    \vh^i_v = 
    \begin{cases}
        \ve_i \sqrt{d^\gF_v} \\
        \mP_{v \to w}\ve_i \sqrt{d^\gF_v}
    \end{cases}
    =
    \begin{cases}
        \ve_i \sqrt{\sum_{v \tleq e} \alpha_e^2}, & v = v^* \\
        \mP_{v* \to w}\ve_i \sqrt{\sum_{v \tleq e} \alpha_e^2}, & \text{otherwise}
    \end{cases}
\end{equation}
\end{lemma}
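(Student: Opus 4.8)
The plan is to reduce the claim about $\Delta_\gF$ to a statement about the unnormalised Laplacian $L_\gF$ and the global sections $H^0(G,\gF)$, and then to verify the three requirements (harmonicity, orthogonality, spanning) in turn. First I would record the structural facts implied by ``weighted orthogonal bundle'': each restriction map factors as $\gF_{v\tleq e} = \alpha_e O_{v\tleq e}$ with $O_{v\tleq e}\in O(d)$, so that $\gF_{v\tleq e}^\top\gF_{v\tleq e} = \alpha_e^2\mI_d$ and hence the block-diagonal degree operator is $D = \mathrm{diag}(d^\gF_v\mI_d)$ with $d^\gF_v = \sum_{v\tleq e}\alpha_e^2$. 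On an edge $e=(v,u)$ the global-section constraint $\gF_{v\tleq e}\vx_v = \gF_{u\tleq e}\vx_u$ simplifies, after cancelling $\alpha_e$, to $\vx_u = \mP_{v\to u}\vx_v$ with the \emph{orthogonal} transport $\mP_{v\to u} = O_{u\tleq e}^\top O_{v\tleq e}$; path-independence makes the composite $\mP_{v^*\to v}$ well defined for every $v$, and $\mP_{v^*\to v^*}=\mI$.

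Next I would reduce to $L_\gF$. Since $\Delta_\gF = D^{-1/2}L_\gF D^{-1/2}$ with $D$ as above, a cochain $\vh$ lies in $\mathrm{ker}(\Delta_\gF)$ iff $\tilde\vh := D^{-1/2}\vh$ lies in $\mathrm{ker}(L_\gF) = H^0(G,\gF)$. For the proposed vectors this reads $\tilde\vh^i_v = \mP_{v^*\to v}\ve_i$, so it remains to check that each $\tilde\vh^i$ is a global section. Fixing an edge $e=(v,u)$, the required identity $\tilde\vh^i_u = \mP_{v\to u}\tilde\vh^i_v$ becomes $\mP_{v^*\to u}\ve_i = \mP_{v\to u}\mP_{v^*\to v}\ve_i$. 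This is exactly the composition (cocycle) property of transport: concatenating the chosen path from $v^*$ to $v$ with the edge $(v,u)$ yields a path from $v^*$ to $u$, and by path-independence its transport equals $\mP_{v^*\to u}$. Hence every $\vh^i$ is harmonic.

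For orthogonality I would compute $\langle \vh^i,\vh^j\rangle = \sum_{v}d^\gF_v\,\langle \mP_{v^*\to v}\ve_i, \mP_{v^*\to v}\ve_j\rangle$ and use that each $\mP_{v^*\to v}$ is orthogonal to get $\langle \mP_{v^*\to v}\ve_i,\mP_{v^*\to v}\ve_j\rangle = \delta_{ij}$, so $\langle \vh^i,\vh^j\rangle = \delta_{ij}\sum_v d^\gF_v$. This shows the $\vh^i$ are mutually orthogonal and nonzero, hence linearly independent. Finally, because the transport is path-independent, Lemma \ref{lemma:bundle_h0_dim} gives $\mathrm{dim}\,H^0(G,\gF) = d$ and therefore $\mathrm{dim}\,\mathrm{ker}(\Delta_\gF) = d$; the $d$ independent harmonic vectors $\vh^1,\dots,\vh^d$ must then span the harmonic space, which, all eigenvalues being zero there, makes them an orthogonal eigenbasis.

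The only genuinely delicate point is the verification in the second paragraph, namely that the piecewise definition $\vh^i_v \propto \mP_{v^*\to v}\ve_i$ is globally consistent across every edge; this is precisely where the path-independence hypothesis is indispensable, and it is the step I would be most careful to state cleanly. Everything else --- the factorisation of the restriction maps, the $D^{-1/2}$ bookkeeping relating $\mathrm{ker}(\Delta_\gF)$ and $\mathrm{ker}(L_\gF)$, and the orthogonality computation --- is routine once that consistency is in place.
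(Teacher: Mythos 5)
Your proof is correct, and its mathematical core is identical to the paper's: the decisive step in both arguments is the cocycle identity $\mP_{v^* \to u} = \mP_{v \to u}\mP_{v^* \to v}$ supplied by path-independence, combined with orthogonality of the (normalised) restriction maps. The packaging differs: the paper verifies harmonicity by expanding the Dirichlet energy $E_\gF(\vh^i)$ of the normalised Laplacian and showing every edge term vanishes, whereas you conjugate by $D^{-1/2}$ and check the global-section condition $\tilde{\vh}^i_u = \mP_{v \to u}\tilde{\vh}^i_v$ on the unnormalised side; these are the same computation, since zero energy is precisely the statement that the disagreement on every edge vanishes. Your version is slightly cleaner in that it avoids the paper's loose use of $\gF_{v \tleq e}\gF_{v \tleq e}^\top = \mI$ (which, for a weighted bundle $\gF_{v \tleq e} = \alpha_e O_{v \tleq e}$, holds only after the weights $\alpha_e$ have been cancelled, as your factorisation makes explicit). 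One further point in your favour: the paper's proof establishes only that the $\vh^i$ are harmonic and pairwise orthogonal, leaving implicit the claim that they \emph{span} $\mathrm{ker}(\Delta_\gF)$; you close this by invoking Lemma~\ref{lemma:bundle_h0_dim} (path-independence forces $\mathrm{dim}\,H^0 = d$), which is exactly what the word ``basis'' in the statement requires.
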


\begin{proof}
First, we show that $\vh^i_v$ is harmonic. 
\begin{align}
    E_\gF(\vh^i_v) &= \frac{1}{2} \sum_{v,u, e:=(v, u)} \norm{\frac{1}{\sqrt{d^\gF_v}}\gF_{v \leq e}\vh_v - \frac{1}{\sqrt{d^\gF_u}}\gF_{u \tleq e}\vh_u}_2^2 \\
    &= \frac{1}{2} \sum_{v,u, e:=(v, u)} \norm{\gF_{v \leq e}\mP_{v^* \to v}\ve_i - \gF_{u \tleq e}\mP_{v^* \to u}\ve_i}_2^2  \\
    &= \frac{1}{2} \sum_{v,u, e:=(v, u)} \norm{\gF_{v \leq e}\mP_{u \to v}\mP_{v^* \to u}\ve_i - \gF_{u \tleq e}\mP_{v^* \to u}\ve_i}_2^2 & \text{By path independence}  \\
    &= \frac{1}{2} \sum_{v,u, e:=(v, u)} \norm{\gF_{v \leq e}\gF_{v \tleq e}^\top\gF_{u \tleq e}\mP_{v^* \to u}\ve_i - \gF_{u \tleq e}\mP_{v^* \to u}\ve_i}_2^2 & \text{By definition of}\ \mP_{u \to v}  \\
    &= \frac{1}{2} \sum_{v,u, e:=(v, u)} \norm{\gF_{u \tleq e}\mP_{v^* \to u}\ve_i - \gF_{u \tleq e}\mP_{v^* \to u}\ve_i}_2^2 = 0 & \text{Orthogonality of}\ \gF_{v \tleq e} 
\end{align}
For orthogonality, notice that for any $i, j \leq d$ and $v \in V$, it holds that:
\begin{align}
    \langle \vh^i_v,  \vh^j_v \rangle = \langle \mP_{v* \to w}\ve_i \sqrt{d^\gF_v},  \mP_{v* \to w}\ve_j \sqrt{d^\gF_v} \rangle = \sqrt{d^\gF_v} \sqrt{d^\gF_v} \langle \ve_i, \ve_j \rangle = 0
\end{align}
\end{proof}

\begin{figure}
    \centering
    \begin{subfigure}[b]{0.3\textwidth}
        \centering
        \includegraphics[width=\linewidth]{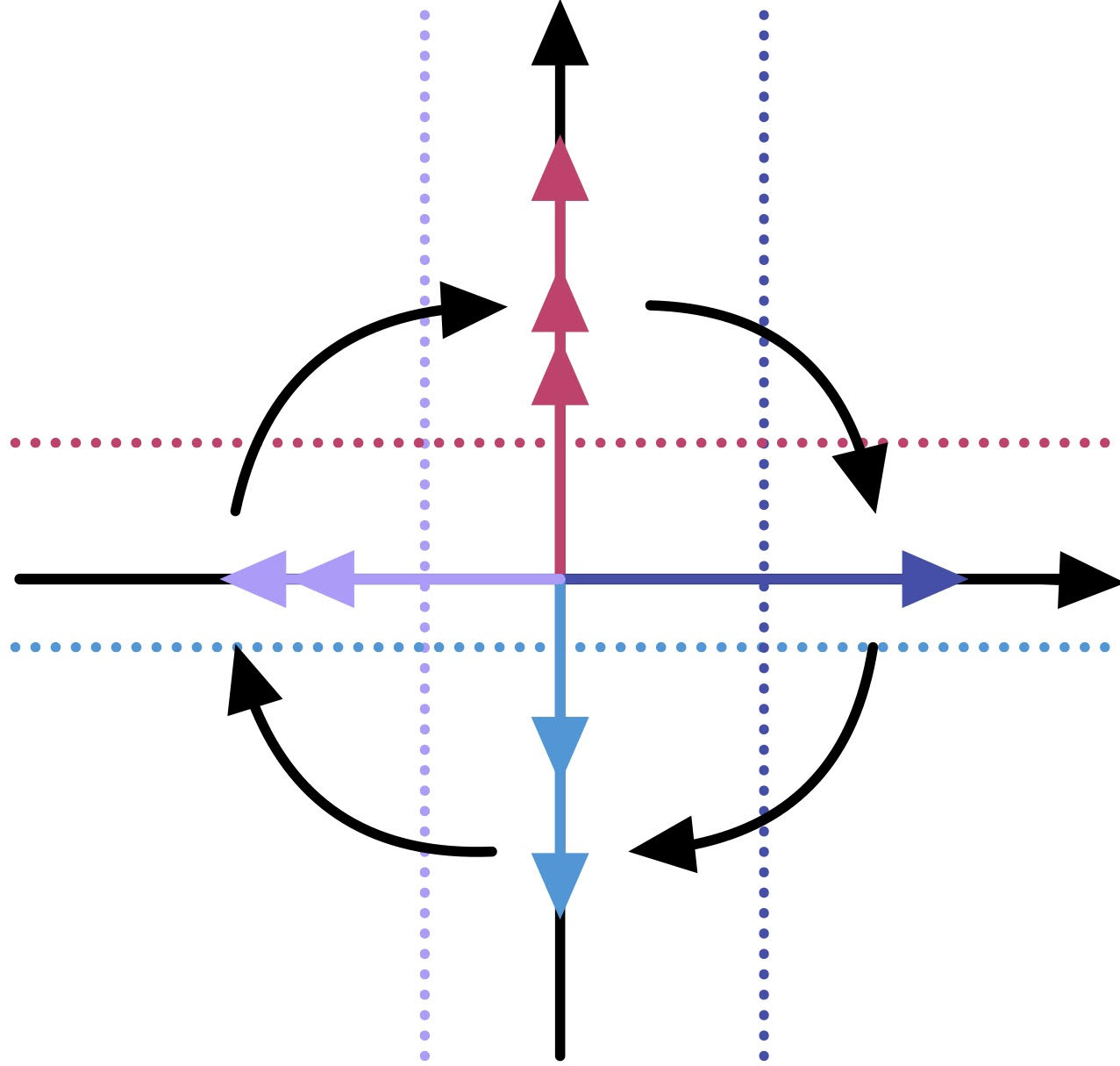}
        \caption{Aligning the features of each class with the axis of coordinates in a 2D space. Dotted lines indicate linear decision boundaries for each class.}
        \label{fig:orth_separation_proof}
    \end{subfigure}
    \hspace{40pt}
    \begin{subfigure}[b]{0.3\textwidth}
        \centering
        \includegraphics[width=\linewidth]{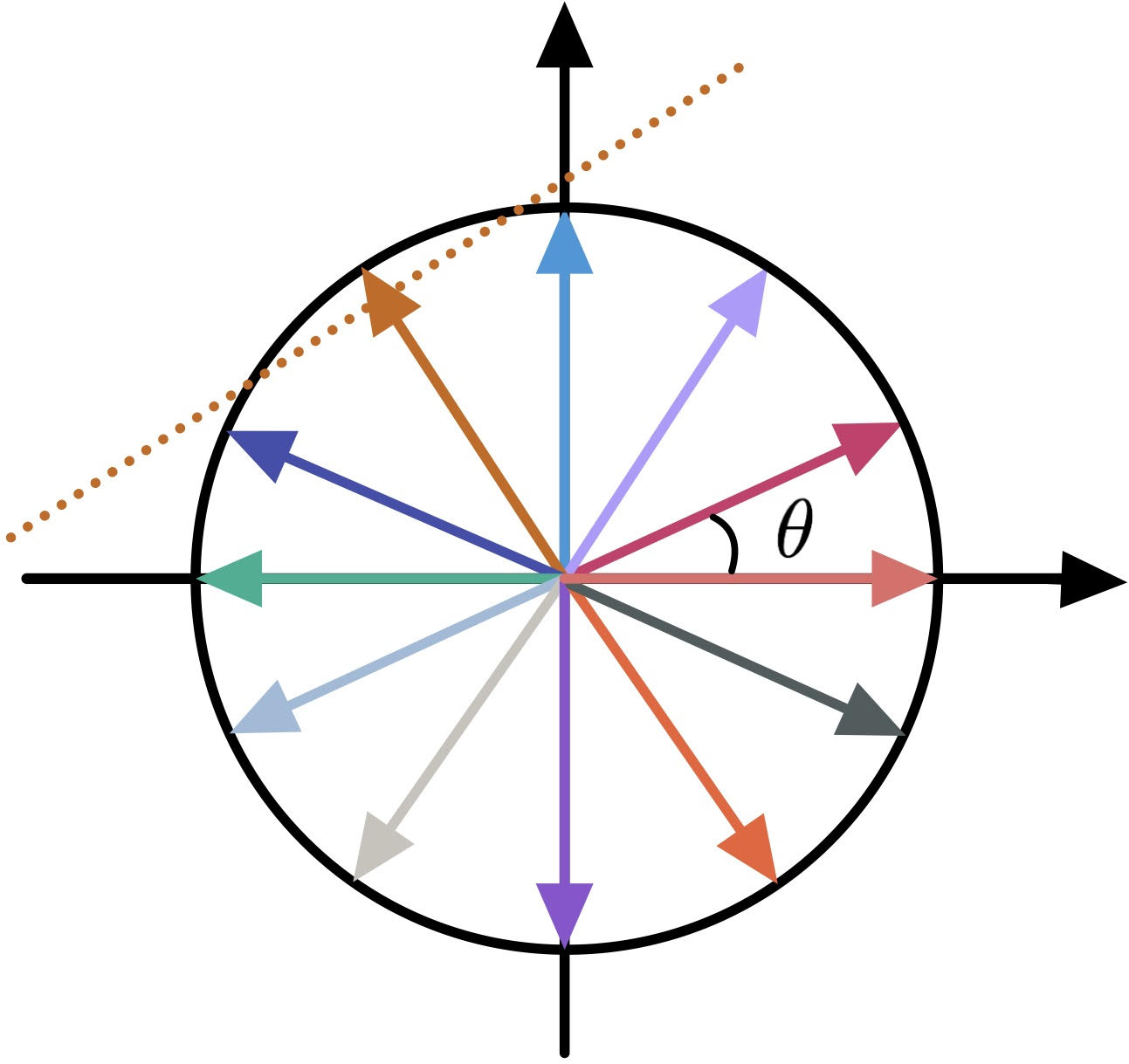}
        \caption{Separating an arbitrary number of classes when the graph is regular. Dotted line shows an example decision boundary for one of the classes.}
        \label{fig:reg_orth_separation_proof}
    \end{subfigure}
    \caption{Proof sketch for Lemma \ref{lemma:orth_separation_d2} and Proposition \ref{prop:regular_orth_separation}.}
    \label{fig:orth_separation_all_proofs}
\end{figure}

\begin{lemma}\label{lemma:swap_rotations}
Let $\mR_1, \mR_2$ be two 2D rotation matrices and $\ve_1, \ve_2$ the two standard basis vectors of $\sR^2$. Then $\langle \mR_1 \ve_1, \mR_2 \ve_2\rangle =  - \langle \mR_1 \ve_2, \mR_2 \ve_1 \rangle $.
\end{lemma}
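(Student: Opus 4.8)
The plan is to reduce both inner products to single entries of one rotation matrix and then exploit the skew-symmetric structure of $\mathrm{SO}(2)$. First I would use the adjoint identity $\langle \mR_1 \va, \mR_2 \vb\rangle = \va^\top \mR_1^\top \mR_2 \vb$ to rewrite
\[
\langle \mR_1 \ve_1, \mR_2 \ve_2\rangle = \ve_1^\top (\mR_1^\top \mR_2) \ve_2, \qquad \langle \mR_1 \ve_2, \mR_2 \ve_1\rangle = \ve_2^\top (\mR_1^\top \mR_2) \ve_1.
\]
Setting $\mR := \mR_1^\top \mR_2$, the two quantities are exactly the $(1,2)$ and $(2,1)$ entries of $\mR$, so the claim reduces to showing $\mR_{12} = -\mR_{21}$.

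Second, I would observe that $\mR$ is itself a rotation: the transpose of a rotation is a rotation and the product of two rotations is a rotation (the determinants multiply to $+1$), so $\mR \in \mathrm{SO}(2)$. Writing $\mR = (\cos\theta)\mI + (\sin\theta)\mJ$ with $\mJ = \left(\begin{smallmatrix} 0 & -1 \\ 1 & 0\end{smallmatrix}\right)$ the standard $90^\circ$ rotation, the diagonal contribution $(\cos\theta)\mI$ is symmetric and contributes nothing off the diagonal, while $\mJ$ is skew-symmetric. Hence $\mR_{12} = -\sin\theta = -\mR_{21}$, which is the desired identity. Equivalently, one can simply write $\mR_1, \mR_2$ as rotations through angles $\theta_1, \theta_2$ and verify directly that both inner products equal $\pm\sin(\theta_1 - \theta_2)$.

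There is no real obstacle here: it is a one-line computation once framed correctly. The only point that deserves care is that the hypothesis of \emph{rotations} (determinant $+1$) is essential rather than mere orthogonality. If $\mR_1, \mR_2$ were allowed to be reflections, the product $\mR_1^\top\mR_2$ could still be a rotation, but an individual reflection matrix has \emph{equal} off-diagonal entries, so the sign flip would fail for such factors. The skew-symmetry of the off-diagonal part is precisely the orientation-preserving content of $\mathrm{SO}(2)$, and that is what the lemma extracts; this sign is exactly what later lets the two coordinate directions of an $O(2)$-bundle be separated independently.
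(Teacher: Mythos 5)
Your proof is correct, but it takes a different route from the paper's. The paper argues geometrically: writing $\phi,\theta$ for the rotation angles of $\mR_1,\mR_2$, it notes that each inner product of unit vectors is the cosine of the angle between them, computes that the two quantities are $\cos(\pi/2 \pm (\phi-\theta))$, and concludes via the identity $\cos(\pi/2+x)=-\sin x$ — essentially the alternative you mention in passing at the end. Your primary argument is instead algebraic: you collapse both inner products into the $(1,2)$ and $(2,1)$ entries of the single matrix $\mR = \mR_1^\top\mR_2 \in SO(2)$ and invoke the decomposition $\mR = (\cos\theta)\mI + (\sin\theta)\mJ$, so the claim becomes the skew-symmetry of the off-diagonal part of a planar rotation. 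What your framing buys is that it makes explicit exactly where the hypothesis of \emph{rotations} (as opposed to general orthogonal matrices) is used: a reflection factor would make $\mR_1^\top\mR_2$ symmetric and kill the sign flip. This is a worthwhile observation, since the set $\gP$ in the surrounding proof of Lemma~\ref{lemma:orth_separation_d2} consists of rotations and their negatives, and in dimension two $-\mR$ is again a rotation, so the hypothesis is indeed satisfied there; the paper's trigonometric proof leaves this dependence on orientation implicit in the angle bookkeeping.
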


\begin{proof}
The angle between $\ve_1$ and $\ve_2$ is $\frac{\pi}{2}$. Letting $\phi, \theta$ be the positive rotation angles of the two matrices, the first inner product is equal to $\cos(\pi/2 + (\phi - \theta))$ while the second is $\cos(\pi/2 - (\phi - \theta))$. The result follows from applying the trigonometric identity $\cos(\pi/2 + x) = - \sin x$.  
\end{proof}

We first prove Theorem \ref{theo:orth_separation} in dimension two in the following lemma and then we will look at the general case. 

\begin{lemma}\label{lemma:orth_separation_d2}
Let $\gG$ be the class of connected graphs with $C \leq 4$ classes. Then, $\gH_{\mathrm{orth}}^{2}(G)$ has linear separation power over $\gG$.  
\end{lemma}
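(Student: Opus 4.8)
The plan is to exhibit, for any connected graph $G$ with $C \le 4$ classes, a single path-independent $O(2)$-bundle whose diffusion limit maps the nodes of each class onto one of four mutually perpendicular half-rays in $\sR^2$, a configuration that is manifestly linearly separable. I would build the bundle by a gauge construction: fix a reference node $v^*$, assign to each class $i \in \{1,2,3,4\}$ an angle $\theta_i \in \{0, \pi/2, \pi, 3\pi/2\}$, and for every node $v$ and incident edge $e$ set $\gF_{v \tleq e} := R_{\theta_{c(v)}}^\top$, where $c(v)$ is the class of $v$ and $R_\theta \in SO(2)$ denotes rotation by $\theta$. A short telescoping computation then gives $\mP_{u \to v} = \gF_{v\tleq e}^\top \gF_{u\tleq e} = R_{\theta_{c(v)}} R_{\theta_{c(u)}}^\top$, so the transport along any path $v^* \to v$ collapses to $\mP_{v^* \to v} = R_{\theta_{c(v)}} R_{\theta_{c(v^*)}}^\top$, independent of the path. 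Hence the bundle is path-independent and, by Lemma \ref{lemma:bundle_h0_dim}, $\dim(H^0) = 2$.

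Next I would read off the diffusion limit. By Lemma \ref{lemma:orth_bundle_0eigenspace} the harmonic space has the orthogonal basis $\vh^j_v = \sqrt{d^\gF_v}\,\mP_{v^* \to v}\ve_j$ for $j \in \{1,2\}$, and by Lemma \ref{lemma:harmonic_convergence} each feature channel converges to its orthogonal projection onto $H^0$. Collecting the projection coefficients into $\vc = (c_1, c_2)$, the limiting feature of node $v$ in that channel is $\vx_v(\infty) = \sqrt{d^\gF_v}\,\mP_{v^* \to v}\vc = \sqrt{d^\gF_v}\,R_{\theta_{c(v)}}\vc'$, where $\vc' := R_{\theta_{c(v^*)}}^\top \vc$ absorbs the harmless global rotation. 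Thus every node of class $i$ is sent to the ray through $R_{\theta_i}\vc'$, with magnitude $\sqrt{d^\gF_v}\,\|\vc'\| \ge \|\vc'\|$ since $d^\gF_v \ge 1$ on a connected graph.

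The core of the argument is then purely geometric: the four directions $R_{0}\vc', R_{\pi/2}\vc', R_{\pi}\vc', R_{3\pi/2}\vc'$ equal $\vc', R_{\pi/2}\vc', -\vc', -R_{\pi/2}\vc'$, i.e. the four perpendicular half-axes of a frame rotated by $\arg \vc'$; here I would invoke the coordinate identities in Lemma \ref{lemma:swap_rotations} to pin down the signs of the two components in each class. Because every class-$i$ cluster lies on its own half-ray at distance at least $\|\vc'\|$ from the origin, the hyperplane perpendicular to $R_{\theta_i}\vc'$ placed at distance $\|\vc'\|/2$ strictly separates class $i$ from the other three; equivalently, the four convex hulls are pairwise disjoint, which is the multiclass notion of separability dual to Lemma \ref{lemma:convex_hull_separability}. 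To handle \emph{almost all} initial conditions, I would observe that $\vc = 0$ occurs exactly when the chosen channel of $\mX(0)$ lies in $H^{0\perp}$, a set of measure zero; picking any channel with $\vc \ne 0$ and discarding the rest already yields a separable two-dimensional configuration, and separability of a coordinate projection lifts to the full feature space.

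The main obstacle I anticipate is not the bundle construction, which is routine telescoping, but making the geometric separation watertight in the presence of the nonuniform $\sqrt{d^\gF_v}$ scalings: I must argue separability for half-rays (segments of norm bounded below by $\|\vc'\|$) rather than for idealized single points, and, crucially, show that it is the mutual \emph{perpendicularity} of the four class directions, and not any special feature of the randomly determined $\vc'$, that drives separability, so that the \emph{same} fixed sheaf works simultaneously for a.a. initial conditions.
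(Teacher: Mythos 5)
Your proposal is correct and follows essentially the same route as the paper's proof: the same construction of a path-independent $O(2)$-bundle whose transport maps are the class-dependent rotations $\{\pm\mI, \pm R_{\pi/2}\}$, the same appeal to Lemmas \ref{lemma:bundle_h0_dim}, \ref{lemma:orth_bundle_0eigenspace} and \ref{lemma:harmonic_convergence}, and the same conclusion that the diffusion limit places the classes on four mutually perpendicular half-rays. The only difference is one of execution: where the paper verifies this geometry by expanding pairwise inner products and cancelling cross terms via Lemma \ref{lemma:swap_rotations}, you factor the limit directly as $\sqrt{d_v}\,R_{\theta_{c(v)}}\vc'$, which makes the perpendicularity, the explicit separating hyperplanes, and the uniformity over almost all initial conditions immediate.
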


\begin{proof} Idea: We can use rotation matrices to align the harmonic features of the classes with the axis of coordinates as in Figure \ref{fig:orth_separation_proof}. Then, for each side of each axis, we can find a separating hyperplane separating each class from all the others. 

Let $G$ be a connected graph with $C \leq 4$ classes. Denote by $\gP$ the following set of rotation matrices together with their signed-flipped counterparts:
\begin{align}
    \mR_1 = \begin{bmatrix}
    1 & 0 \\
    0 & 1 \\ 
    \end{bmatrix}
    ,\quad
    \mR_2 = \begin{bmatrix}
    0 & -1 \\
    1 & 0 \\ 
    \end{bmatrix}
\end{align}
and by $\gC = \{ 1, \ldots, C \}$ the set of all class labels. Then, fix a node $v^* \in V$ and construct an injective map $g: C \to \gP$ assigning each class label one of the signed basis vectors such that $g(c(v^*)) = \mR_1$, where $c(v^*)$ denotes the class of node $v^*$. 
 
Then, we can construct a sheaf $(G, \gF) \in \gH^2_{\mathrm{orth}}(G)$ in terms of certain parallel transport maps along each edge, that will depend on $\gP$. For all nodes $v$ and edges $e$, $\gF(v) \cong \gF(e) \cong \sR^2$. For each $u \in V$, we set $\mP_{v^* \to u} = g(c(u))$. Then for all $v, u \in V$, set $\mP_{v \to u} = \mP_{v^* \to v} \mP_{u \to v^*}^{-1}$. It is easy to see that the resulting parallel transport is path-independent because it depends purely on the classes of the endpoints of the path. 

Based on Lemma \ref{lemma:orth_bundle_0eigenspace}, the $i$-th eigenvector of $\Delta_{\gF}$ is $\vh^i \in \sR^{2 \times n}$ with $\vh^i_u = \mP_{v^* \to u} \ve_i \sqrt{d_u}$. Now we will show that the projection of $\vx(0)$ in this subspace will have a configuration as in Figure \ref{fig:orth_separation_proof} up to a rotation.

Let $u, w$ be two nodes belonging to two different classes. Denote by $\alpha_i = \langle \vx(0), \vh^i \rangle$. Then the inner product between the features of nodes $u, w$ in the limit of the diffusion process is: 
\begin{align}\label{eq:inner_product}
    &\langle \mP_{v^* \to u}\sum_i \alpha_i \ve_i \sqrt{d_u},  \mP_{v^* \to w}\sum_j \alpha_j \ve_j \sqrt{d_w} \rangle = \nonumber \\ 
    &= \sqrt{d_u d_w} \Big[ \sum_{i \neq j} \alpha_i \alpha_j \langle \mP_{v^* \to u} \ve_i,  \mP_{v^* \to w}\ve_j \rangle + \sum_{k} \alpha_k^2 \langle \mP_{v^* \to u} \ve_k,  \mP_{v^* \to w}\ve_k \rangle \Big] \nonumber \\
    &= \sqrt{d_u d_w} \Big[ \sum_{i < j} \alpha_i \alpha_j \Big( \langle \mP_{v^* \to u} \ve_i,  \mP_{v^* \to w}\ve_j \rangle + \langle \mP_{v^* \to u} \ve_j,  \mP_{v^* \to w}\ve_i \rangle\Big) \nonumber \\
    &\quad + \sum_{k} \alpha_k^2 \langle \mP_{v^* \to u} \ve_k,  \mP_{v^* \to w}\ve_k \rangle \Big]  \\
    &= \sum_{k} \alpha_k^2 \langle \mP_{v^* \to u} \ve_k,  \mP_{v^* \to w}\ve_k \rangle 
    \tag{by Lemma \ref{lemma:swap_rotations}} 
\end{align}
It can be checked that by substituting the transport maps $\mP_{v^* \to u},  \mP_{v^* \to w}$ with any $\mR_a, \mR_b$ from $\gP$ such that $\mR_a \neq \pm \mR_b$, the inner product above is zero. Similarly, substituting any $\mR_a = -\mR_b$, the inner product is $- \sqrt{d_u d_w} \sum_k \alpha_k^2 = -\sqrt{d_u d_w} \norm{\vx(0)}^2$, which is equal to the product of the norms of the two vectors. Therefore, the diffused features of different classes are positioned at $\frac{\pi}{2}, \pi, \frac{3\pi}{2}$ from each other, as in Figure \ref{fig:orth_separation_proof}. 
\end{proof}

\OrthSeparation*
\begin{proof}
To generalise the proof in Lemma \ref{lemma:orth_separation_d2}, we need to find a set $\gP$ of size $d$ containing rotation matrices that make the projected features of different classes be pairwise orthogonal for any projection coefficients $\alpha$. For that, each term in Equation \ref{eq:inner_product} must be zero for any coefficients $\alpha$. 

Therefore, $\gP = \{\mP_0,\ldots,\mP_{d-1}\}$ must satisfy the following requirements:
\begin{enumerate}
    \item $\mP_0 = \mI \in \gP$, since transport for neighbours in the same class must be the identity. Therefore, $\mP_0 \mP_k = \mP_k \mP_0 = \mP_k$ for all $k$.  
    \item Since $\langle \mP_0 \ve_i , \mP_k \ve_i \rangle = 0$ for all $i$ and $k \neq 0$, it follows that the diagonal elements of $\mP_k$ are zero. 
    \item From $\langle \mP_0 \ve_i , \mP_k \ve_j \rangle = -\langle \mP_0 \ve_j, \mP_k \ve_i \rangle$ for all $i \neq j, k \neq 0$ and point (2) it follows that $\mP_k^{-1} = \mP_k^\top = -\mP_k$. Therefore, $\mP_k\mP_k = -\mI$ for all $k \neq 0$. 
    \item We have $\langle \mP_k \ve_i , \mP_l \ve_i \rangle = 0$ for all $i$ and $k \neq l$. Together with (3), it follows that the diagonal elements of $\mP_k\mP_l$ are zero. 
    \item We have $\langle \mP_k \ve_i, \mP_l \ve_j \rangle = -\langle \mP_k \ve_j, \mP_l \ve_i \rangle$ for all $i \neq j$, and $k \neq l$, with $k,l \neq 0$. Together with point (4) it follows that $(\mP_k \mP_l)^\top = -\mP_k\mP_l$. Similarly, from point (3) we have that $(\mP_k \mP_l)^\top = \mP_l^\top \mP_k^\top = (-\mP_l)(-\mP_k) = \mP_l \mP_k$. Therefore, the two matrices are anti-commutative: $\mP_k \mP_l = -\mP_l \mP_k$.
\end{enumerate}

We remark that points (1), (3), (5) coincide with the defining algebraic properties of the algebra of complex numbers, quaternions, octonions, sedenions and their generalisations based on the Cayley-Dickson construction \citep{schafer2017introduction}. Therefore, the matrices in $\gP$ must be a representation of one of these algebras. Firstly, such algebras exist only for $d$ that are powers of two. Secondly, matrix representations for these algebras exist only in dimensions two and four. This is because the algebra of octonions and their generalisations, unlike matrix multiplication, is non-associative. As a sanity check, note that the matrices $\mR_1, \mR_2$ from Lemma \ref{lemma:orth_separation_d2} are a well-known representation of the unit complex numbers.  

We conclude this section by giving out the matrices for $d=4$, which are the real matrix representations of the four unit quaternions:
\begin{align}
    \mR_1 = \begin{bmatrix}
    1 & 0 & 0 & 0 \\
    0 & 1 & 0 & 0 \\
    0 & 0 & 1 & 0 \\
    0 & 0 & 0 & 1 \\ 
    \end{bmatrix}
    ,\quad
    \mR_2 = \begin{bmatrix}
    0 & -1 & 0 & 0 \\
    1 & 0 & 0 & 0 \\
    0 & 0 & 0 & -1 \\
    0 & 0 & 1 & 0 \\ 
    \end{bmatrix}
    , \\
    \mR_3 = \begin{bmatrix}
    0 & 0 & -1 & 0 \\
    0 & 0 & 0 & 1 \\
    1 & 0 & 0 & 0 \\
    0 & -1 & 0 & 0 \\ 
    \end{bmatrix}
    ,\quad
    \mR_4 = \begin{bmatrix}
    0 & 0 & 0 & -1 \\
    0 & 0 & -1 & 0 \\
    0 & 1 & 0 & 0 \\
    1 & 0 & 0 & 0 \\ 
    \end{bmatrix} \nonumber.
\end{align}
It can be checked that these matrices respect the properties outlined above. Thus, in $d = 4$, we can select the transport maps from the set $\{ \pm\mR_1, \pm\mR_2, \pm\mR_3, \pm\mR_4 \}$ containing eight matrices, which also form a group. Therefore, following the same procedure as in Lemma \ref{lemma:orth_separation_d2}, we can linearly separate up to eight classes.  
\end{proof}

\begin{proposition}\label{prop:regular_orth_separation}
Let $\gG$ be the class of connected regular graphs with a finite number of classes. Then, $\gH_{\mathrm{orth}}^{2}(G)$ has linear separation power over $\gG$. 
\end{proposition}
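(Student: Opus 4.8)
The plan is to generalise the construction in Lemma~\ref{lemma:orth_separation_d2} by replacing the requirement of \emph{pairwise orthogonality} of the class representatives (which capped the number of classes at $C\le 2d=4$) with the much weaker requirement that the representatives merely be \emph{distinct}. The single extra hypothesis I would exploit is regularity: if $G$ is $k$-regular then $d_u=k$ for every node, so by Lemma~\ref{lemma:orth_bundle_0eigenspace} all harmonic features share the same magnitude and hence lie on a common circle. On a circle every point is an extreme point, and this is what lets me separate arbitrarily many classes in dimension two.

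Concretely, let $G$ be connected and $k$-regular with classes labelled by $\gC=\{1,\dots,C\}$. First I would pick $C$ \emph{distinct} angles $\theta_1,\dots,\theta_C\in[0,2\pi)$ and let $\mR_c\in O(2)$ be the rotation by $\theta_c$. Exactly as in Lemma~\ref{lemma:orth_separation_d2}, I fix a basepoint $v^\ast$ and build a path-independent bundle $(\gF,G)\in\gH^2_{\mathrm{orth}}$ whose transport depends only on the classes of the endpoints, $\mP_{v^\ast\to u}=\mR_{c(u)}$ and $\mP_{v\to u}=\mR_{c(u)}\mR_{c(v)}^{-1}$ (realised e.g.\ by restriction maps $\gF_{v\tleq e}=\mR_{c(v)}^{-1}$). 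By Lemma~\ref{lemma:orth_bundle_0eigenspace} this sheaf has the orthogonal harmonic basis $\vh^1,\vh^2$ with $\vh^i_u=\mR_{c(u)}\ve_i\sqrt{k}$, and by Lemma~\ref{lemma:harmonic_convergence} the diffused feature of node $u$ converges (per channel) to $\vp_u=\mR_{c(u)}\vv\,\sqrt{k}$, where $\vv:=\alpha_1\ve_1+\alpha_2\ve_2\in\sR^2$ is a \emph{single} vector common to all nodes and $\alpha_i=\langle\vx(0),\vh^i\rangle$. For almost all initial conditions the projection onto $\mathrm{ker}(\Delta_\gF)$ is non-zero, so $\vv\neq 0$.

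The geometric step is then immediate. Since $\mR_{c(u)}$ is orthogonal and $d_u=k$ is constant, every $\vp_u$ has norm $\norm{\vv}\sqrt{k}$, so all feature points lie on one circle, and all nodes of a class $c$ collapse to the single point $\vp^{(c)}=\mR_c\vv\sqrt{k}$. Distinct classes give distinct points: for $c\neq c'$ the matrix $\mR_c-\mR_{c'}$ is invertible (its determinant equals $2-2\cos(\theta_c-\theta_{c'})\neq 0$), hence $(\mR_c-\mR_{c'})\vv\neq 0$. Because the $C$ points $\{\vp^{(c)}\}$ are distinct and lie on a circle, each $\vp^{(c)}$ is an extreme point, so $\vp^{(c)}\notin\mathrm{conv}\big(\{\vp^{(c')}:c'\neq c\}\big)$; by Lemma~\ref{lemma:convex_hull_separability} (applied to the disjoint compact convex sets $\{\vp^{(c)}\}$ and the convex hull of the rest) class $c$ is linearly separable from all others. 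As this holds for every $c$, the classes are linearly separable and $\gH^2_{\mathrm{orth}}$ has linear separation power over $\gG$. With $f$ channels the same rotation $\mR_{c(u)}$ acts block-wise on each channel, so the full node features still collapse per class onto a common sphere in $\sR^{2f}$ and the extreme-point argument carries over unchanged.

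The step I expect to require genuine argument, rather than being inherited, is precisely the use of regularity. Without it, representatives sitting at distinct angles but different radii need not be in convex position — a small-radius point can fall inside the convex hull of larger-radius points of other classes — so distinctness alone would fail to yield separability; regularity is exactly what forces all representatives onto a common circle (sphere) and makes the convex-position argument valid for an unbounded number of classes. Everything else, namely the bundle construction, path-independence, the explicit form of the harmonic space, and the genericity of the initial conditions, is inherited essentially verbatim from Lemmas~\ref{lemma:orth_bundle_0eigenspace} and \ref{lemma:harmonic_convergence} and the proof of Lemma~\ref{lemma:orth_separation_d2}.
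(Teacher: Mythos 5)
Your proposal is correct and follows essentially the same route as the paper: exploit regularity to place all harmonic features on a common circle, build a path-independent $O(2)$-bundle whose transport depends only on the class labels of the endpoints (exactly the construction of Lemma~\ref{lemma:orth_separation_d2}), and conclude separability from the resulting circular configuration. The only differences are presentational — you allow arbitrary distinct angles where the paper fixes equally spaced angles $\theta = 2\pi/C$, and you spell out the final extreme-point/convex-position argument (note that you technically need the converse of Lemma~\ref{lemma:convex_hull_separability}, i.e.\ the separating hyperplane theorem for disjoint compact convex sets, which the paper leaves implicit as well).
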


\begin{proof}
Idea: Since the graph is regular, the harmonic features of the nodes will be uniformly scaled and thus positioned on a circle. The aim is to place different classes at different locations on the circle, which would make the classes linearly separable as shown in Figure \ref{fig:reg_orth_separation_proof}. 

Let $G$ be a regular graph with $C$ classes and define $\theta = \frac{2\pi}{C}$. Denote by $\mR_i$ the 2D rotation matrix:
\begin{equation}
    \mR_i = \begin{bmatrix}
    \cos(i \theta) & - \sin(i \theta) \\
    \sin(i \theta) & \cos(i \theta) \\ 
    \end{bmatrix}
\end{equation}
Then let $\gP = \{\mR_i \mid 0 \leq i \leq C-1,\ i \in \sN\ \}$ the set of rotation matrices with an angle multiple of $\theta$. Then we can define a bijection $g: \gC \to \gP$ and a sheaf $(G, \gF) \in \gH^2_{\mathrm{orth}}(G)$ as in the proof above. Checking the inner-products from Equation \ref{eq:inner_product} between the harmonic features of the nodes, we can verify that the angle between any two classes is different from zero. By Lemma \ref{lemma:swap_rotations}, the cross terms of the inner product vanish:
\begin{align}
    \sum_{k} \alpha_k^2 \langle \mR_i [k],  \mR_j[k] \rangle = \sum_{k} \alpha_k^2 \cos((i - j) \theta) = \cos((i - j) \theta) \norm{\vx}^2
\end{align}
Thus, the angle between classes $i, j$ is $(i - j) \theta$. 
\end{proof}


\section{Energy Flow Proofs}\label{app:energy_flow}

\begin{proposition}\label{prop:harmonicbundle}
If $\mathcal{F}$ is an $O(d)$-bundle in $\gH^d_\mathrm{orth, sym}$, then $\vx \in \mathrm{ker} \Delta_\gF$ if and only if $\vx^{k}\in \emph{ker}\,\Delta_{0}$ for all $1 \leq k \leq d$.
\end{proposition}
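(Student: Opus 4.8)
The plan is to show that for a sheaf in $\gH^d_\mathrm{orth, sym}$ the normalised sheaf Laplacian is a pure Kronecker product $\Delta_\gF = \Delta_0 \otimes \mI_d$, and then to read the kernel directly off this tensor structure.

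First I would compute the blocks of $L_\gF$. Orthogonality gives $\gF_{v \tleq e}^\top \gF_{v \tleq e} = \mI_d$, so the diagonal blocks are ${L_\gF}_{vv} = \sum_{v \tleq e}\gF_{v \tleq e}^\top \gF_{v \tleq e} = d_v \mI_d$, as already noted in Section~\ref{sec:harmonic_space}. For an edge $e = (v,u)$ the symmetry condition $\gF_{v \tleq e} = \gF_{u \tleq e}$ together with orthogonality collapses the off-diagonal block to
\[
{L_\gF}_{vu} = -\gF_{v \tleq e}^\top \gF_{u \tleq e} = -\gF_{v \tleq e}^\top \gF_{v \tleq e} = -\mI_d,
\]
while non-adjacent pairs give the zero block. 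Block-by-block this is exactly the unnormalised graph Laplacian $\mL_0 = \mD - \mA$ (diagonal $d_v$, entry $-1$ on edges) tensored with $\mI_d$, so $L_\gF = \mL_0 \otimes \mI_d$ in the node-first block ordering of $C^0(G;\gF)$. Since the degree operator is $D = \mD \otimes \mI_d$, we have $D^{-1/2} = \mD^{-1/2} \otimes \mI_d$, and the mixed-product rule $(A \otimes B)(C \otimes E) = (AC)\otimes(BE)$ gives $\Delta_\gF = (\mD^{-1/2}\mL_0\mD^{-1/2}) \otimes \mI_d = \Delta_0 \otimes \mI_d$.

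The second step is to describe $\mathrm{ker}(\Delta_0 \otimes \mI_d)$ through coordinate slices. Writing the node-$v$ block of $\Delta_\gF \vx$ as $\sum_u (\Delta_0)_{vu}\vx_u$ and taking its $k$-th entry,
\[
\big((\Delta_\gF \vx)_v\big)_k = \sum_u (\Delta_0)_{vu}(\vx^k)_u = (\Delta_0 \vx^k)_v.
\]
Hence $\Delta_\gF \vx = 0$ if and only if $\Delta_0 \vx^k = 0$ for every $1 \leq k \leq d$, which is exactly the stated equivalence.

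I do not anticipate a genuine obstacle here; once the factorisation $\Delta_\gF = \Delta_0 \otimes \mI_d$ is established the remainder is bookkeeping. The one point that deserves care is the block-ordering bookkeeping: I would make sure the stacking convention $\vx = (\vx_{v_1}, \ldots, \vx_{v_n})$ matches the Kronecker convention so that the $(v,u)$ block of $\Delta_\gF$ really is $(\Delta_0)_{vu}\mI_d$, and I would carry out the slice computation explicitly (rather than invoking abstract eigenspace facts about $A \otimes \mI_d$) to keep the argument self-contained.
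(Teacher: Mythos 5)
Your proposal is correct, and it reaches the result by a genuinely different route than the paper. The paper's proof goes through the sheaf Dirichlet energy: since $\Delta_\gF$ is positive semi-definite, $\vx \in \mathrm{ker}(\Delta_\gF)$ iff $E_\gF(\vx) = 0$; in the edge-sum form of Definition~\ref{def:energy}, the symmetry condition lets one write each term as $\lVert \gF_e\big(D_v^{-1/2}\vx_v - D_u^{-1/2}\vx_u\big)\rVert^2$ with $\gF_e := \gF_{v \tleq e} = \gF_{u \tleq e}$, and orthogonality (i.e.\ $\gF_e$ being an isometry) lets one delete $\gF_e$, leaving the classical normalised Dirichlet energy $\frac{1}{2}\sum_{(v,u)\in E}\lVert d_v^{-1/2}\vx_v - d_u^{-1/2}\vx_u\rVert^2$, which vanishes precisely when every channel $\vx^k$ lies in $\mathrm{ker}(\Delta_0)$. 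You instead prove the operator identity $\Delta_\gF = \Delta_0 \otimes \mI_d$ by a block computation plus the Kronecker mixed-product rule, and read the kernel off slice-by-slice. Both arguments hinge on the same two facts --- symmetry collapses $\gF_{v\tleq e}^\top\gF_{u\tleq e}$ to $\gF_e^\top\gF_e$, and orthogonality turns that into $\mI_d$ --- but you apply them at the level of the operator rather than of the quadratic form. Your route buys strictly more: the factorisation identifies the entire spectrum of $\Delta_\gF$ (each eigenvalue of $\Delta_0$ with multiplicity multiplied by $d$, eigenvectors of tensor-product form), so both the kernel statement and the paper's surrounding remark that such bundles carry ``the same information'' as the trivial sheaf become immediate corollaries; it also bypasses the quadratic-form identity for $E_\gF$, which the paper must prove separately. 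What the paper's energy route buys is uniformity with the rest of its analysis: the same energy manipulations are reused in Lemmas~\ref{lemma:left_weight_energy_decrease} and~\ref{lemma:activation_energy_decrease} for the oversmoothing bounds, where no clean operator factorisation is available once the nonlinearity enters, so phrasing this proposition in energy language keeps the whole section on a single set of tools.
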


\begin{proof}[\textbf{Proof of Proposition \ref{prop:harmonicbundle}}] Let $\vx \in H^{0}(G,\mathcal{F})$. Then we have
\begin{align*}
0 = E_\gF(\vx) &= \frac{1}{2}\sum_{(v,u)\in E}\lvert\lvert \mathcal{F}_{v\tleq e}D_v^{-\frac{1}{2}}\vx_v- \mathcal{F}_{u\tleq e}D_u^{-\frac{1}{2}}\vx_u\rvert\rvert^{2} \\
&= \frac{1}{2}\sum_{(v,u)\in E}\lvert\lvert \mathcal{F}_{e}\left(D_v^{-\frac{1}{2}}\vx_v - D_u^{-\frac{1}{2}}\vx_u\right)\rvert\rvert^{2}
\\
&= \frac{1}{2}\sum_{(v,u)\in E}\lvert\lvert d_{v}^{-\frac{1}{2}}\vx_v- d_{u}^{-\frac{1}{2}}\vx_u\rvert\rvert^{2}. 
\end{align*}
\noindent The last term vanishes if and only if $\vx^{k}\in\text{ker}\,\Delta_{0}$ for each $1\leq k\leq d$.
\end{proof}

\SCNEnergyIncrease*

\begin{proof}
Let $\mathcal{F}$ be an $O(d)$-bundle over $G$ and $\varepsilon > 0$. Assume that $\mathcal{F}_{v\tleq e} = \mathcal{F}_{u\tleq e}$ for each $(u,v) \neq (u_{0},v_{0})$ and that $\mathcal{F}_{v_{0}\tleq e}^\top\mathcal{F}_{u_{0}\tleq e} - \mI := \mB \neq 0$ with $\mathrm{dim}(\mathrm{ker}(\mB)) > 0$. Then there exist a linear map $\mW\in\mathbb{R}^{d\times d}$ with $\lvert\lvert \mW \rvert\rvert_{2} = \varepsilon$ and $\vx \in H^{0}(G,\mathcal{F})$ such that $E_\gF((\mI\otimes \mW) \vx) > 0$. We sketch the proof. Let $\vg \in \mathrm{ker}(\mB)$. Define then $\vx \in C^{0}(G,\gF)$ by
\[
\vx_v = \sqrt{d_{v}} \vg.
\]
\noindent Then $\vx \in H^{0}(G,\mathcal{F})$. If we now take $\mW = \varepsilon \mP_{\text{ker}\mB^\perp}$ the rescaled orthogonal projection in the orthogonal complement of the kernel of $\mB$ we verify the given claim.
\end{proof}

We provide below a proof for the equality in Definition \ref{def:energy}. 
\begin{proposition}
$$\vx^\top \Delta_\gF \vx = \frac{1}{2} \sum_{e:=(v, u)} \norm{\gF_{v \tleq e}D_v^{-1/2}\vx_v - \gF_{u \tleq e}D_u^{-1/2}\vx_u}_2^2 $$
\end{proposition}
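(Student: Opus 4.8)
The plan is to reduce the normalised identity to the unnormalised one and then expand the quadratic form edge by edge. First I would absorb the normalisation by setting $\vy := D^{-1/2}\vx$, so that $\vy_v = D_v^{-1/2}\vx_v$; since $\Delta_\gF = D^{-1/2}L_\gF D^{-1/2}$, we have $\vx^\top\Delta_\gF\vx = \vy^\top L_\gF\vy$, and it therefore suffices to prove $\vy^\top L_\gF\vy = \tfrac{1}{2}\sum_{e:=(v,u)}\norm{\gF_{v\tleq e}\vy_v - \gF_{u\tleq e}\vy_u}_2^2$ for an arbitrary cochain $\vy$, and then substitute back.

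Next I would expand the left-hand side node-wise using Definition \ref{def:laplacian}: $\vy^\top L_\gF\vy = \sum_{v}\vy_v^\top(L_\gF\vy)_v = \sum_v\sum_{v,u\tleq e}\vy_v^\top\gF_{v\tleq e}^\top(\gF_{v\tleq e}\vy_v - \gF_{u\tleq e}\vy_u)$, where the inner sum ranges over edges $e$ incident to $v$ with $u$ the opposite endpoint. I would then reorganise this double sum by collecting, for each undirected edge $e=\{v,u\}$, the two contributions in which $v$ and $u$ respectively play the role of the centre node. Writing $\va := \gF_{v\tleq e}\vy_v$ and $\vb := \gF_{u\tleq e}\vy_u$, these two terms are $\va^\top(\va-\vb)$ and $\vb^\top(\vb-\va)$, whose sum is $\norm{\va}_2^2 - 2\va^\top\vb + \norm{\vb}_2^2 = \norm{\va-\vb}_2^2$ by the polarisation identity (using $\va^\top\vb = \vb^\top\va$ as these are scalars). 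Hence each undirected edge contributes exactly $\norm{\gF_{v\tleq e}\vy_v - \gF_{u\tleq e}\vy_u}_2^2$.

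Finally I would reconcile the prefactor $\tfrac12$ with the summation convention: the summand is invariant under swapping $v\leftrightarrow u$, so summing over ordered pairs (as the notation $e:=(v,u)$ suggests, counting each orientation once) double-counts each undirected edge, and the factor $\tfrac12$ exactly compensates. Substituting back $\vy_v = D_v^{-1/2}\vx_v$ then yields the claimed identity. There is no genuine obstacle here; the only points requiring care are the centre-node decomposition of the Laplacian quadratic form and the ordered-versus-unordered edge convention that produces the $\tfrac12$.
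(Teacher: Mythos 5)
Your proof is correct and follows essentially the same route as the paper's: both expand the quadratic form, regroup the contributions edge by edge, and complete the square, with the factor $\tfrac{1}{2}$ accounting for the ordered-pair (double-counting) edge convention that the paper also uses. The only difference is organisational—you absorb the normalisation upfront via $\vy = D^{-1/2}\vx$ and expand using the node-wise definition of $L_\gF$, whereas the paper carries the $D_v^{-1/2}$ factors through a block-matrix expansion of $\Delta_\gF$—a harmless streamlining.
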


\begin{proof}We prove the result for the normalised sheaf Laplacian, and other versions can be obtained as particular cases. 
\begin{align}
    E(\vx) &= \vx^\top \Delta_\gF \vx = \sum_{v} \vx^\top_v \Delta_{vv} \vx_v + \sum_{\substack{w 
         \neq z \\ (w, z) \in E}} \vx^\top_w \Delta_{wz} \vx_z \\
         &= \sum_{v \tleq e} \vx^\top_v D_v^{-1/2} \gF^\top_{v \tleq e} \gF_{v \tleq e} D_v^{-1/2}\vx_v + \sum_{\substack{w < z \\ (w, z) \in E}} \vx^\top_w \Delta_{wz} \vx_z + \vx^\top_z \Delta_{zw} \vx_w  \\
         &= \frac{1}{2} \sum_{v,w \tleq e} \Big(\vx^\top_v D_v^{-1/2} \gF^\top_{v \tleq e} \gF_{v \tleq e} D_v^{-1/2}\vx_v + \vx^\top_w D_w^{-1/2} \gF^\top_{w \tleq e} \gF_{w \tleq e} D_w^{-1/2}\vx_w \\
         &\quad + \vx^\top_v D_v^{-1/2} \gF^\top_{v \tleq e} \gF_{w \tleq e} D_w^{-1/2}\vx_w + \vx^\top_w D_w^{-1/2} \gF^\top_{w \tleq e} \gF_{v \tleq e} D_v^{-1/2}\vx_v \Big) \\
         &= \frac{1}{2} \sum_{v,w \tleq e} \vx^\top_v D_v^{-1/2} \gF^\top_{v \tleq e}
         \big(\gF_{v \tleq e} D_v^{-1/2}\vx_v - \gF_{w \tleq e} D_w^{-1/2}\vx_w \big) \\
         &\quad - \vx^\top_w D_w^{-1/2} \gF^\top_{w \tleq e} \big(\gF_{v \tleq e} D_v^{-1/2}\vx_v - \gF_{w \tleq e} D_w^{-1/2}\vx_w \big) \\
         &= \frac{1}{2} \sum_{v,w \tleq e} \big(\vx^\top_v D_v^{-1/2} \gF^\top_{v \tleq e} - \vx^\top_w D_w^{-1/2} \gF^\top_{w \tleq e}\big) \big(\gF_{v \tleq e} D_v^{-1/2}\vx_v - \gF_{w \tleq e} D_w^{-1/2}\vx_w \big)
\end{align}
Note that $D_v$ is symmetric for any node $v$ and so is any $D_v^{-1/2}$. Therefore, the two vectors in the parenthesis are the transpose of each other and the result is their inner product. Thus, we have:
\begin{align}
    E_\gF(\vx) = \frac{1}{2} \sum_{v,w \tleq e} \norm{\gF_{v \tleq e} D_v^{-1/2}\vx_v - \gF_{w \tleq e} D_w^{-1/2}\vx_w}_2^2
\end{align}
The result follows identically for other types of Laplacian. For the augmented normalized Laplacian, one should simply replace $D$ with $\Tilde{D} = D + I$ and for the non-normalised Laplacian, one should simply remove $D$ from the equation. \end{proof}

\GraphOversmoothing*
\begin{proof}
We first prove a couple of Lemmas before proving the Theorem. The proof structure follows that of \citet{cai2020note}, which in turn generalises that of \citet{oono2019graph}. The latter proof technique is not directly applicable to our setting because it makes some strong assumptions about the harmonic space of the Laplacian (i.e. that the eigenvectors of the harmonic space have positive entries). 

$\lambda_* = \max\big((\lambda_\mathrm{min} - 1)^2, (\lambda_\mathrm{max} - 1)^2\big)$, where $\lambda_\mathrm{min}, \lambda_\mathrm{max}$ are the smallest and largest non-zero eigenvalues of $\Delta_\gF$.

\begin{lemma}
For $\mP = \mI - \Delta_\gF$, $E_\gF(\mP \vx) < \lambda_* E_\gF(\vx)$.
\end{lemma}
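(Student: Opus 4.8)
The plan is to diagonalise the symmetric positive semi-definite operator $\Delta_\gF$ and reduce the claim to a scalar inequality on each eigencomponent. First I would fix the spectral decomposition $\Delta_\gF = \sum_i \lambda_i^\gF \vu_i \vu_i^\top$ with an orthonormal eigenbasis $\{\vu_i\}$ and eigenvalues $0 \le \lambda_0^\gF \le \lambda_1^\gF \le \cdots$. Since $\mP = \mI - \Delta_\gF$ is a polynomial in $\Delta_\gF$, it is symmetric and commutes with $\Delta_\gF$, and shares the eigenbasis $\{\vu_i\}$. Hence I can rewrite $E_\gF(\mP\vx) = (\mP\vx)^\top \Delta_\gF (\mP\vx) = \vx^\top \mP \Delta_\gF \mP \vx = \vx^\top \Delta_\gF (\mI - \Delta_\gF)^2 \vx$, which turns the quadratic form into a single polynomial applied to $\Delta_\gF$.

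Next I would expand $\vx = \sum_i c_i \vu_i$ in this basis, giving $E_\gF(\vx) = \sum_i \lambda_i^\gF c_i^2$ and $E_\gF(\mP\vx) = \sum_i \lambda_i^\gF (1 - \lambda_i^\gF)^2 c_i^2$. The contributions of the harmonic (zero) eigenvalues vanish identically on both sides, so only the nonzero part of the spectrum matters. It then suffices to prove the per-mode bound $\lambda_i^\gF (1-\lambda_i^\gF)^2 \le \lambda_* \lambda_i^\gF$, which for $\lambda_i^\gF > 0$ collapses to $(1-\lambda_i^\gF)^2 \le \lambda_*$ and is trivial when $\lambda_i^\gF = 0$. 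Summing over $i$ and factoring out $\lambda_*$ yields $E_\gF(\mP\vx) \le \lambda_* \sum_{i:\lambda_i^\gF>0} \lambda_i^\gF c_i^2 = \lambda_* E_\gF(\vx)$.

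The one nontrivial step — and the main obstacle — is verifying $(1-\lambda_i^\gF)^2 \le \lambda_*$ uniformly over the nonzero spectrum. Here I would invoke convexity of $\lambda \mapsto (1-\lambda)^2$: its maximum over the interval $[\lambda_{\mathrm{min}}, \lambda_{\mathrm{max}}]$ spanned by the nonzero eigenvalues is attained at an endpoint, so it equals $\max\big((\lambda_{\mathrm{min}}-1)^2, (\lambda_{\mathrm{max}}-1)^2\big) = \lambda_*$, matching the definition used in the lemma and the quantity $\max_{i>0}(\lambda_i^\gF - 1)^2 \le 1$ from the main text. The strict inequality then holds whenever $\vx$ places nonzero mass on an eigenvector whose eigenvalue is strictly interior to the maximising endpoint, while equality would force $\vx$ to be supported entirely on the extremal eigenspace; I would flag that the claimed strict $<$ should be read with this caveat (or replaced by $\le$). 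The remaining bookkeeping, namely that kernel components never contribute to either form, is routine.
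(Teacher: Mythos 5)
Your proposal is correct and takes essentially the same route as the paper: expand $\vx$ in the eigenbasis of $\Delta_\gF$, use that $\mP = \mI - \Delta_\gF$ shares that basis, and bound each nonzero mode by $(1-\lambda_i^\gF)^2 \leq \lambda_*$ so that $E_\gF(\mP\vx) = \sum_i c_i^2 \lambda_i^\gF (1-\lambda_i^\gF)^2 \leq \lambda_* E_\gF(\vx)$. Your caveat about the strict inequality is also apt --- the paper's own proof only establishes $\leq$ (the $<$ in the statement is an overstatement), and your convexity remark correctly reconciles the appendix's endpoint definition of $\lambda_*$ with the main text's $\max_{i>0}(\lambda_i^\gF-1)^2$.
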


\begin{proof}
We can write $\vx = \sum_i c_i \vh^i$ as a sum of the eigenvectors $\{\vh^i\}$ of $\Delta_\gF$. Then $\vx^\top \Delta_\gF \vx = \sum_i c_i^2 \lambda_i$, where $\{\lambda_i\}$ are the eigenvalues of $\Delta_\gF$. 
\begin{align}
    E_\gF(\mP \vx) = \vx^\top \mP^\top \Delta_\gF \mP \vx = \vx^\top \mP \Delta_\gF \mP \vx = \sum_i c_i^2 \lambda_i (1 - \lambda_i)^2 \leq \lambda_* \sum_i c_i^2 \lambda_i = \lambda_* E_\gF(\vx)
\end{align}
The inequality follows from the fact that the eigenvectors of the normalised sheaf Laplacian are in the range $[0, 2]$ \citep[Proposition 5.5]{hansen2019toward}. We note that the original proof of \citet{cai2020note} bounds the expression by $(1 - \lambda_\mathrm{min})^2$ instead of $\lambda_*$, which appears to be an error. 
\end{proof}

\begin{lemma}
$E_\gF(\mX \mW) \leq \norm{\mW^\top}^2_2 E_\gF(\mX)$
\end{lemma}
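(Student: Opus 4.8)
The plan is to reduce the multi-channel energy to a trace inequality between two positive semidefinite matrices and then invoke a standard spectral bound. First I would expand the definition of the multi-channel energy and use the cyclic invariance of the trace:
\[
E_\gF(\mX\mW) = \mathrm{trace}\big((\mX\mW)^\top \Delta_\gF (\mX\mW)\big) = \mathrm{trace}\big(\mW^\top \mX^\top \Delta_\gF \mX \mW\big) = \mathrm{trace}(\mM\mN),
\]
where I set $\mM := \mX^\top \Delta_\gF \mX$ and $\mN := \mW\mW^\top$. Since $\Delta_\gF$ is positive semidefinite, so is $\mM$, because $\vz^\top \mM \vz = (\mX\vz)^\top \Delta_\gF (\mX\vz) \geq 0$ for all $\vz$; and $\mN$ is positive semidefinite by construction. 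Note also that $E_\gF(\mX) = \mathrm{trace}(\mM)$, so the target inequality becomes $\mathrm{trace}(\mM\mN) \leq \norm{\mW^\top}_2^2\,\mathrm{trace}(\mM)$.

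The key step is the general bound $\mathrm{trace}(\mM\mN) \leq \lambda_{\max}(\mN)\,\mathrm{trace}(\mM)$, valid for any symmetric $\mN$ and positive semidefinite $\mM$. I would establish it by diagonalizing $\mM = \sum_i \mu_i \vu_i\vu_i^\top$ with $\mu_i \geq 0$ and $\{\vu_i\}$ orthonormal, whence
\[
\mathrm{trace}(\mM\mN) = \sum_i \mu_i\, \vu_i^\top \mN \vu_i \leq \sum_i \mu_i\, \lambda_{\max}(\mN) = \lambda_{\max}(\mN)\,\mathrm{trace}(\mM).
\]
(Equivalently, writing $\mM = \mM^{1/2}\mM^{1/2}$ and using $\mM^{1/2}\mN\mM^{1/2} \preceq \lambda_{\max}(\mN)\mM$ together with monotonicity of the trace on positive semidefinite matrices gives the same bound.)

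Finally I would identify the spectral quantity: $\lambda_{\max}(\mN) = \lambda_{\max}(\mW\mW^\top) = \norm{\mW}_2^2 = \norm{\mW^\top}_2^2$, using that the spectral norm of a matrix equals that of its transpose. Chaining the three displays yields $E_\gF(\mX\mW) \leq \norm{\mW^\top}_2^2\, E_\gF(\mX)$. I do not expect any genuine obstacle here; the only points requiring care are confirming that $\mM$ inherits positive semidefiniteness from $\Delta_\gF$ and that the dimensions are consistent (with $\mX \in \sR^{nd\times f_1}$ and $\mW \in \sR^{f_1\times f_2}$, both $\mM$ and $\mN$ live in $\sR^{f_1\times f_1}$), both of which are immediate.
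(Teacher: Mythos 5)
Your proof is correct and follows essentially the same route as the paper: expand the energy, apply the cyclic property of the trace to reduce to $\Tr(\mM\mN)$ with $\mM = \mX^\top\Delta_\gF\mX$ and $\mN = \mW\mW^\top$, bound this by $\lambda_{\max}(\mN)\,\Tr(\mM)$, and identify $\lambda_{\max}(\mW\mW^\top) = \norm{\mW^\top}_2^2$. The only difference is that the paper cites an external result (Lemma 3.1 of Lee, 2008) for the trace inequality, whereas you prove it inline via the eigendecomposition of $\mM$, making the argument self-contained.
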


\begin{proof}
Following the proof of \citet{cai2020note} we have:
\begin{align}
    E_\gF(\mX \mW) &= \Tr(\mW^\top \mX^\top \Delta_\gF \mX \mW) \\
    &= \Tr (\mX^\top \Delta_\gF \mX \mW \mW^\top) & \text{trace cyclic property} \\
    &\leq \Tr(\mX^\top \Delta_\gF \mX) \norm{\mW \mW^\top}_2 & \text{see Lemma 3.1 in \citet{lee2008some}} \\
    &= \Tr(\mX^\top \Delta_\gF \mX) \norm{\mW^\top}^2_2
\end{align}
\end{proof}

\begin{lemma}\label{lemma:left_weight_energy_decrease}
For conditions as in Theorem \ref{theo:graph_oversmoothing}, $E_\gF\big((\mI_n \otimes \mW) \vx\big) \leq  \norm{\mW}^2_2 E_\gF(\vx)$. 
\end{lemma}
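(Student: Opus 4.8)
The plan is to expand both sides using the closed form for $E_\gF$ from Definition \ref{def:energy} and then exploit the two defining features of $\gH^d_{\mathrm{orth},\mathrm{sym}}$: the restriction maps are orthogonal, and they agree on the two endpoints of each edge. First I would write out the left-hand side. Since $(\mI_n\otimes\mW)\vx$ acts node-wise as $\vx_v\mapsto\mW\vx_v$,
\[
E_\gF\big((\mI_n\otimes\mW)\vx\big) = \tfrac{1}{2}\sum_{e:=(v,u)} \norm{\gF_{v\tleq e} D_v^{-1/2}\mW\vx_v - \gF_{u\tleq e} D_u^{-1/2}\mW\vx_u}_2^2 .
\]

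The key observation is that on a discrete $O(d)$ bundle the block-degree satisfies $D_v = d_v\mI_d$ (as already noted in Section \ref{sec:harmonic_space}), so $D_v^{-1/2} = d_v^{-1/2}\mI_d$ is a scalar multiple of the identity and hence commutes with $\mW$. Combining this with the symmetry $\gF_{v\tleq e} = \gF_{u\tleq e} =: \gF_e$, I can factor the common orthogonal map out of each edge term:
\[
\gF_{v\tleq e} D_v^{-1/2}\mW\vx_v - \gF_{u\tleq e} D_u^{-1/2}\mW\vx_u = \gF_e\,\mW\big(d_v^{-1/2}\vx_v - d_u^{-1/2}\vx_u\big).
\]
Next I would use that $\gF_e\in O(d)$ preserves the Euclidean norm to delete it, and then apply the operator-norm inequality $\norm{\mW\vz}_2\le\norm{\mW}_2\norm{\vz}_2$ to each term:
\[
\norm{\gF_e\mW\big(d_v^{-1/2}\vx_v - d_u^{-1/2}\vx_u\big)}_2^2 \le \norm{\mW}_2^2 \,\norm{d_v^{-1/2}\vx_v - d_u^{-1/2}\vx_u}_2^2 .
\]

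Summing over edges and pulling the constant $\norm{\mW}_2^2$ out of the sum gives an upper bound of $\norm{\mW}_2^2\cdot\tfrac12\sum_{e}\norm{d_v^{-1/2}\vx_v - d_u^{-1/2}\vx_u}_2^2$. Finally I recognise the remaining sum as $E_\gF(\vx)$ itself: running the identical orthogonality-plus-symmetry factoring on the original energy (now without $\mW$) yields $E_\gF(\vx) = \tfrac12\sum_e \norm{\gF_e(d_v^{-1/2}\vx_v - d_u^{-1/2}\vx_u)}_2^2 = \tfrac12\sum_e \norm{d_v^{-1/2}\vx_v - d_u^{-1/2}\vx_u}_2^2$, which closes the bound. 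The only genuinely delicate point is the factoring step, since it relies on both hypotheses simultaneously — orthogonality makes $D_v^{-1/2}$ scalar so that it commutes with $\mW$, while symmetry lets a single $\gF_e$ be pulled out of the difference — and both manipulations fail for general restriction maps. This is precisely why the statement is confined to $\gH^d_{\mathrm{orth},\mathrm{sym}}$, and it also explains why the companion Proposition \ref{prop:scn_energy_increase} can break the energy bound as soon as one leaves $\gH^d_\mathrm{sym}$.
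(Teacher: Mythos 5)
Your proposal is correct and follows essentially the same route as the paper's own proof: both expand the energy via Definition~\ref{def:energy}, use that orthogonality forces $D_v = d_v\mI_d$ (so the scalar normalisation commutes with $\mW$) while symmetry lets a single $\gF_e$ be factored out of each edge difference and deleted by norm-preservation, and then apply $\norm{\mW\vz}_2 \le \norm{\mW}_2\norm{\vz}_2$ edge-wise. The only cosmetic difference is that the paper reinserts $\gF_e$ at the end to recover the defining form of $E_\gF(\vx)$, whereas you rewrite $E_\gF(\vx)$ without $\gF_e$ by the same factoring — the identical manipulation read in the opposite direction.
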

\begin{proof}
First, we note that for orthogonal matrices, $D_v = \mI \sum_{v \tleq e} \alpha_e^2 = \mI d_v$ \citep[Lemma 4.4]{hansen2019toward}

\begin{align}
E_\gF\big((\mI \otimes \mW) \vx\big) &= \frac{1}{2} \sum_{v,w \tleq e} \norm{\gF_{v \tleq e} D_v^{-1/2} \mW f_v - \gF_{w \tleq e} D_w^{-1/2}\mW \vx_w}_2^2 \\
&= \frac{1}{2} \sum_{v,w \tleq e} \norm{\gF_{e}  \mW \big(d_v^{-1/2} \vx_v - d_w^{-1/2} \vx_w \big)}_2^2 \\
&= \frac{1}{2} \sum_{v,w \tleq e} \norm{\mW \big(d_v^{-1/2} \vx_v - d_w^{-1/2} \vx_w \big)}_2^2 && \gF_e \text{ is orthogonal} \\
&\leq \frac{1}{2} \sum_{v,w \tleq e} \norm{\mW}^2_2 \norm{d_v^{-1/2} \vx_v - d_w^{-1/2} \vx_w }_2^2  \\
&= \frac{1}{2} \sum_{v,w \tleq e} \norm{\mW}^2_2 \norm{\gF_e \big(d_v^{-1/2} \vx_v - d_w^{-1/2} \vx_w \big)}_2^2 && \gF_e \text{ is orthogonal} \\
&= \frac{1}{2} \norm{\mW}^2_2 \sum_{v,w \tleq e} \norm{\gF_e \big(D_v^{-1/2} \vx_v - D_w^{-1/2} \vx_w \big)}_2^2 \\
&= \norm{\mW}^2_2 E_\gF(\vx) 
\end{align}
The proof can also be extended easily extended to vector bundles over weighted graphs (i.e. allowing weighted edges as in \citet{hansen2019toward}). For the non-normalised Laplacian, the assumption that $\gF_e$ is orthogonal can be relaxed to being non-singular and then the upper bound will also depend on the maximum conditioning number over all $\gF_e$. 
\end{proof}

\begin{lemma}\label{lemma:activation_energy_decrease}
For conditions as in Theorem \ref{theo:graph_oversmoothing}, $E_\gF\big(\sigma(\vx)\big) \leq E_\gF(\vx)$.
\end{lemma}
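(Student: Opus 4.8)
The plan is to exploit the fact that for the class $\gH^d_\mathrm{orth, sym}$ the sheaf Dirichlet energy collapses to a sum of $d$ ordinary (scalar) normalised graph Dirichlet energies, after which the claim reduces to the well-known fact that a positively homogeneous $1$-Lipschitz nonlinearity cannot increase the graph Dirichlet energy.

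First I would simplify $E_\gF$. For an $O(d)$-bundle we have $D_v = d_v \mI_d$, and the symmetry assumption gives $\gF_{v \tleq e} = \gF_{u \tleq e} =: \gF_e \in O(d)$. Substituting into Definition \ref{def:energy} and using that $\gF_e$ is an isometry (exactly as in the proof of Proposition \ref{prop:harmonicbundle}) yields
\[
E_\gF(\vx) = \frac{1}{2}\sum_{e:=(v,u)} \norm{d_v^{-1/2}\vx_v - d_u^{-1/2}\vx_u}_2^2 .
\]
Expanding the Euclidean norm coordinate-wise and writing $\vx^k \in \sR^n$ for the scalar signal formed by the $k$-th stalk coordinate of every node, this becomes $E_\gF(\vx) = \sum_{k=1}^d E_0(\vx^k)$, where $E_0(\vy) := \vy^\top \Delta_0 \vy$ is the classical normalised graph Dirichlet energy of a scalar signal $\vy$. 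Because $\sigma$ is applied entrywise, $\sigma(\vx)^k = \sigma(\vx^k)$, so it suffices to prove $E_0(\sigma(\vy)) \leq E_0(\vy)$ for every scalar signal $\vy$.

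For the scalar inequality the key observation is that (Leaky)ReLU is \emph{positively homogeneous}: since $d_v^{-1/2} > 0$ we may pull the degree scaling inside the nonlinearity, $d_v^{-1/2}\sigma(y_v) = \sigma(d_v^{-1/2} y_v)$. Hence each edge term of $E_0(\sigma(\vy))$ equals $\big(\sigma(d_v^{-1/2}y_v) - \sigma(d_u^{-1/2}y_u)\big)^2$, and the $1$-Lipschitz property of (Leaky)ReLU (its slopes lie in $[0,1]$) bounds this by $\big(d_v^{-1/2}y_v - d_u^{-1/2}y_u\big)^2$, the corresponding edge term of $E_0(\vy)$. Summing over all edges gives $E_0(\sigma(\vy)) \leq E_0(\vy)$, and summing over the $d$ coordinates gives the claim.

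The genuine content of the argument is the edge-wise comparison in the last step: naively the differing degree normalisations $d_v^{-1/2}, d_u^{-1/2}$ obstruct a direct application of Lipschitz continuity, and it is precisely the positive homogeneity of the activation --- which lets the scaling commute with $\sigma$ --- that removes this obstacle. I expect this to be the only point requiring care; the reduction in the first two steps is routine given the structural simplifications already established for $\gH^d_\mathrm{orth, sym}$.
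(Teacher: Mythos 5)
Your proof is correct and follows essentially the same route as the paper's: reduce via symmetry and orthogonality of $\gF_e$ to degree-normalised differences, use positive homogeneity of (Leaky)ReLU to commute the scaling $d_v^{-1/2}$ with $\sigma$, and then apply the $1$-Lipschitz property edge-wise. Your additional step of splitting $E_\gF$ into $d$ scalar graph Dirichlet energies $E_0(\vx^k)$ is only a cosmetic repackaging --- the paper performs the identical bound directly on the $d$-dimensional vectors, where the Lipschitz inequality is in any case applied entrywise.
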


\begin{proof}
\begin{align}
    E\big(\sigma(\vx)\big) &= \frac{1}{2} \sum_{v,w \tleq e} \norm{\gF_{v \tleq e} D_{v}^{-1/2}\sigma(\vx_{v}) - \gF_{w \tleq e} D_{w}^{-1/2}\sigma(\vx_{w})}_2^2 \\
    &= \frac{1}{2} \sum_{v,w \tleq e} \norm{\gF_{e} \big(d_v^{-1/2}\sigma(\vx_{v}) - d_w^{-1/2}\sigma(\vx_{w}) \big)}_2^2 \\
    &= \frac{1}{2} \sum_{v,w \tleq e} \norm{d_v^{-1/2}\sigma(\vx_{v}) - d_w^{-1/2}\sigma(\vx_{w}) }_2^2 && \text{orthogonality of } \gF_e \\
    &= \frac{1}{2} \sum_{v,w \tleq e} \norm{\sigma\Big(\frac{\vx_{v}}{\sqrt{d_v}}\Big) - \sigma\Big(\frac{\vx_{w}}{\sqrt{d_w}}\Big) }_2^2 && c\text{ReLU}(x) = \text{ReLU}(cx), c > 0 \\
    &\leq \frac{1}{2} \sum_{v,w \tleq e} \norm{\frac{\vx_{v}}{\sqrt{d_v}} - \frac{\vx_{w}}{\sqrt{d_w}} }_2^2 && \text{Lipschitz continuity of ReLU} \\
    &= \frac{1}{2} \sum_{v,w \tleq e} \norm{\gF_{e} \big(d_v^{-1/2}\vx_{v} - d_w^{-1/2}\vx_{w} \big)}_2^2 && \text{orthogonality of } \gF_e  \\
    &= E_\gF(\vx)
\end{align}\end{proof}

Combining these three lemmas for an entire diffusion layer proves the Theorem. 
\end{proof}

\OneDimOversmoothing*
\begin{proof}
If $d=1$, then Lemma \ref{lemma:left_weight_energy_decrease} becomes superfluous as $\mW_1$ becomes a scalar that can be absorbed into the right-weights. It remains to verify that a version of Lemma \ref{lemma:activation_energy_decrease} holds in this case.

\begin{lemma}
For conditions as in Theorem \ref{theo:1dim_oversmoothing}, $E_\gF\big(\sigma(\vx)\big) \leq E_\gF(\vx)$.
\end{lemma}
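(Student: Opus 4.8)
The plan is to exploit that $d=1$ collapses every stalk to $\R$, so by Definition \ref{def:energy} the energy is a scalar sum $E_\gF(\vx) = \tfrac12\sum_{e:=(v,u)}\bigl(\gF_{v\tleq e}D_v^{-1/2}\vx_v - \gF_{u\tleq e}D_u^{-1/2}\vx_u\bigr)^2$, where every factor is now a real number. Abbreviating $a_e := \gF_{v\tleq e}D_v^{-1/2}$ and $b_e := \gF_{u\tleq e}D_u^{-1/2}$, the whole claim reduces to the per-edge inequality $(a_e\sigma(\vx_v) - b_e\sigma(\vx_u))^2 \le (a_e\vx_v - b_e\vx_u)^2$, which I then sum over the edges. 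The crucial structural input is the defining condition of $\gH^1_+$, namely $\gF_{v\tleq e}\gF_{u\tleq e} > 0$; since the block-diagonal degrees $D_v, D_u$ are positive, this yields $a_e b_e > 0$, i.e. the two coefficients attached to each edge share a sign.

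First I would fix an edge and, using $a_e b_e > 0$, reduce without loss of generality to $a_e, b_e > 0$ (if both are negative, replacing $(a_e,b_e)$ by $(-a_e,-b_e)$ changes neither side of the per-edge inequality). Then I would invoke the positive homogeneity of (Leaky)ReLU — the same property $c\,\sigma(x)=\sigma(cx)$ for $c>0$ already used in the proof of Lemma \ref{lemma:activation_energy_decrease} — to pull the positive scalars inside the nonlinearity, so that $a_e\sigma(\vx_v)=\sigma(a_e\vx_v)$ and $b_e\sigma(\vx_u)=\sigma(b_e\vx_u)$. This rewrites the left-hand side as $(\sigma(a_e\vx_v)-\sigma(b_e\vx_u))^2$, and the $1$-Lipschitz property of (Leaky)ReLU then gives $(\sigma(a_e\vx_v)-\sigma(b_e\vx_u))^2\le(a_e\vx_v-b_e\vx_u)^2$, which is exactly the per-edge bound. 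Summing over all edges yields $E_\gF(\sigma(\vx))\le E_\gF(\vx)$.

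The main obstacle, and the reason this requires a separate argument from Lemma \ref{lemma:activation_energy_decrease}, is the asymmetry $a_e\neq b_e$: in the orthogonal–symmetric case one factors a common restriction map $\gF_e$ out of the edge difference and cancels it by orthogonality \emph{before} applying Lipschitzness, whereas here the two endpoints of an edge carry genuinely different restriction scalars, so no common factor exists. The entire burden therefore shifts onto absorbing the two distinct multiplicative constants into the arguments of $\sigma$, which is legitimate only because positive homogeneity needs a \emph{positive} constant — and that positivity is precisely what the $\gH^1_+$ hypothesis supplies. I would verify the argument works identically for plain ReLU and Leaky ReLU (both are positively homogeneous and $1$-Lipschitz, so the reasoning is uniform), and I would note that the reduction to $a_e, b_e > 0$ is applied independently edge by edge, never globally, so it remains valid even though a single node may be incident to edges whose restriction maps carry opposite signs.
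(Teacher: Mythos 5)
Your proof is correct and takes essentially the same route as the paper's: the paper likewise uses the sign condition $\gF_{v\tleq e}\gF_{u\tleq e}>0$ to replace both restriction scalars by their absolute values (your per-edge sign-flip reduction), then pulls the positive constants inside $\sigma$ via positive homogeneity and applies the $1$-Lipschitz bound edge by edge before summing. Your closing observation about why the common-factor cancellation from the orthogonal--symmetric case is unavailable here is also exactly the distinction the paper's two lemmas reflect.
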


\begin{proof}
\begin{align}
    E\big(\sigma(\vx)\big) &= \frac{1}{2} \sum_{v,w \tleq e} \norm{\gF_{v \tleq e} D_{v}^{-1/2}\sigma(x_{v}) - \gF_{w \tleq e} D_{w}^{-1/2}\sigma(x_{w})}_2^2 \\
    &= \frac{1}{2} \sum_{v,w \tleq e} \norm{|\gF_{v \tleq e}| D_{v}^{-1/2}\sigma(x_{v}) - |\gF_{w \tleq e}| D_{w}^{-1/2}\sigma(x_{w})}_2^2 && \gF_{v \tleq e} \gF_{w \tleq e} > 0 \\
    &= \frac{1}{2} \sum_{v,w \tleq e} \norm{\sigma\Big(\frac{|\gF_{v \tleq e}| x_{v}}{\sqrt{d_v}}\Big) - \sigma\Big(\frac{|\gF_{w \tleq e}| x_{w}}{\sqrt{d_w}}\Big) }_2^2 && c\sigma(x) = \sigma(cx), c > 0 \\
    &\leq \frac{1}{2} \sum_{v,w \tleq e} \norm{\frac{|\gF_{v \tleq e}| x_{v}}{\sqrt{d_v}} - \frac{|\gF_{w \tleq e}| x_{w}}{\sqrt{d_w}} }_2^2 && \text{ReLU Lipschitz cont.} \\
    &= \frac{1}{2} \sum_{v,w \tleq e} \norm{\gF_{v \tleq e} D_{v}^{-1/2}x_{v} - \gF_{w \tleq e} D_{w}^{-1/2}x_{w}}_2^2 && \gF_{v \tleq e} \gF_{w \tleq e} > 0   \\
    &= E_\gF(\vx)
\end{align}
\end{proof}
We note that if $\gF_{v \tleq e} \gF_{w \tleq e} < 0$ (i.e. the relation is signed), then it is very easy to find counter-examples where ReLU does not work anymore. However, the result still holds in the deep linear case. 
\end{proof}

If the features of an SCN/GCN oversmoothing as in Theorem \ref{theo:1dim_oversmoothing} converge to $\ker(\Delta_\gF)$, then the model will no longer be able to linearly separate the classes. This is shown by the following Corollaries. 

\begin{corollary}\label{cor:gcn_one}
Consider an SCN model $f$ with $k$ layers and a sheaf $(\gF; G) \in \gH^1_{\mathrm{sym}}$ over a bipartite graph $G$ as in Proposition \ref{prop:h1_sym_heterophily}. Then for any finite $k$, $\mY := f(\mX)$ is not linearly separable for any input with $E_\gF(\mX) = 0$.  
\end{corollary}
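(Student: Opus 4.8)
The plan is to show that the zero-energy condition is preserved through every SCN layer, so the output of $f$ lands in the (one-dimensional) harmonic space, and then to invoke the separation obstruction already established for $\gH^1_{\mathrm{sym}}$ on these bipartite graphs. First I would observe that for $d = 1$ a symmetric invertible sheaf satisfies $\gF_{v \tleq e}\gF_{u \tleq e} = \gF_{v \tleq e}^2 > 0$, so $\gH^1_{\mathrm{sym}} \subseteq \gH^1_+$ and Theorem \ref{theo:1dim_oversmoothing} applies to each layer of $f$. A single layer sends a representation $\mX'$ to $\sigma\big((\mI_{nd} - \Delta_\gF)(\mI_n \otimes \mW_1)\mX'\mW_2\big)$, whose energy the theorem bounds by $\lambda_* \norm{\mW_1}_2^2\norm{\mW_2^\top}_2^2 E_\gF(\mX')$. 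Since the input satisfies $E_\gF(\mX) = 0$ and $E_\gF$ is non-negative, an induction on the layer index gives $E_\gF$ of every intermediate representation equal to zero irrespective of the weight norms; in particular $E_\gF(\mY) = 0$ for any finite $k$.

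Next I would translate this into a structural statement about $\mY$. Because $E_\gF(\vx) = 0 \Leftrightarrow \vx \in \ker(\Delta_\gF)$, every column of $\mY$ lies in $\ker(\Delta_\gF)$. Since $G$ is connected and $(\gF, G) \in \gH^1_{\mathrm{sym}}$, the harmonic space is one-dimensional and spanned by the single eigenvector $\vy$ with $\vy_v = \sqrt{\sum_{v \tleq e}\norm{\gF_{v \tleq e}}^2}$ identified in the proof of Proposition \ref{prop:h1_sym_heterophily}, whose entries are all strictly positive. Hence $\mY = \vy\,\vc^\top$ for some $\vc \in \sR^{f_2}$, so the feature of each node $v$ is the scalar multiple $\vy_v\vc$. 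Any linear classifier $\vw$ then evaluates to $\langle \mY_{v,:}, \vw\rangle = \vy_v\langle \vc, \vw\rangle$, so separating $A$ from $B$ would require placing a threshold on the positive scalars $\vy_v$ with all of $A$ on one side and all of $B$ on the other.

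Finally I would close the argument by reusing the counting identity inside Proposition \ref{prop:h1_sym_heterophily}: such a threshold would force $\sum_{v \in A}\vy_v^2 < \sum_{u \in B}\vy_u^2$ (or the reverse) because $|A| = |B|$, yet bipartiteness together with $\gF_{v \tleq e} = \gF_{u \tleq e}$ makes the two weighted-degree sums coincide edge-by-edge, a contradiction. Therefore no linear classifier separates the classes, for any finite $k$ and any input with $E_\gF(\mX) = 0$. The main obstacle is the middle step: one must verify that the one-dimensionality of $\ker(\Delta_\gF)$ collapses the entire feature matrix onto a single ray, thereby reducing $f_2$-dimensional linear separability exactly to the one-dimensional threshold-separability that Proposition \ref{prop:h1_sym_heterophily} already rules out; the layerwise iteration of the energy bound and the bipartite cancellation are then routine.
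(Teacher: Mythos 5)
Your proof is correct and follows essentially the same route as the paper's: apply Theorem \ref{theo:1dim_oversmoothing} layerwise (valid since $\gH^1_{\mathrm{sym}} \subseteq \gH^1_+$) to conclude $E_\gF(\mY) = 0$, hence every column of $\mY$ lies in $\ker(\Delta_\gF)$, and then invoke the bipartite counting obstruction established in the proof of Proposition \ref{prop:h1_sym_heterophily}. The only difference is presentational: the paper cites that proposition's proof directly, whereas you re-derive its contents in full (the one-dimensional strictly positive harmonic vector $\vy$, the collapse $\mY = \vy\,\vc^\top$ reducing linear separability to a threshold on the scalars $\vy_v$, and the edge-by-edge cancellation under $\gF_{v \tleq e} = \gF_{u \tleq e}$), and you also make explicit the inclusion $\gH^1_{\mathrm{sym}} \subseteq \gH^1_+$ that the paper leaves implicit.
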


\begin{proof}
By Theorem \ref{theo:1dim_oversmoothing}, if $E_\gF(\mX) = 0$, then $E_\gF(\mY) = 0$ and, therefore, $\mY^i \in \ker(\Delta_\gF)$ for any column $i$. The proof of Proposition \ref{prop:h1_sym_heterophily} showed that the classes of such a bipartite graph cannot be linearly separated for any such feature matrix $\mY$. 
\end{proof}

\begin{corollary}\label{cor:gcn_two}
Consider an SCN model $f$ with $k$ layers and a sheaf $(\gF; G) \in \gH^1_{+}$ over any graph $G$ with more than two classes as in Proposition \ref{prop:impossible_separation}. Then for any finite $k$, $\mY := f(\mX)$ is not linearly separable for any input with $E_\gF(\mX) = 0$.  
\end{corollary}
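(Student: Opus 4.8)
The plan is to mirror the proof of Corollary~\ref{cor:gcn_one}, substituting Proposition~\ref{prop:impossible_separation} for Proposition~\ref{prop:h1_sym_heterophily}. The argument splits into two independent pieces: an energy-decay piece that traps the output in $\ker(\Delta_\gF)$, and a separability piece that rules out linear separation of \emph{any} feature matrix living in that kernel when $C \geq 3$.

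First I would handle the energy decay. A $k$-layer SCN is the $k$-fold composition of the map in Equation~\ref{eq:scn}, and since $(\gF, G) \in \gH^1_+$, Theorem~\ref{theo:1dim_oversmoothing} applies at every layer. Writing $\mX_0 = \mX$ and $\mX_{t+1}$ for the output of layer $t$, the Theorem gives $0 \leq E_\gF(\mX_{t+1}) \leq \lambda_* \norm{\mW_1^t}_2^2 \norm{(\mW_2^t)^\top}_2^2 E_\gF(\mX_t)$. Iterating over the $k$ layers and using $E_\gF(\mX) = 0$ forces $E_\gF(\mY) = E_\gF(\mX_k) = 0$ irrespective of the weights, since the telescoping product on the right is multiplied by zero. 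Because $E_\gF(\vx) = 0 \Leftrightarrow \vx \in \ker(\Delta_\gF)$, every column of $\mY$ lies in $\ker(\Delta_\gF)$.

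Next I would invoke the kernel structure established for $\gH^1$. Since $\gF_{v \tleq e}\gF_{u \tleq e} > 0$ implies both restriction maps are non-zero, hence invertible in dimension one, we have $\gH^1_+ \subseteq \gH^1$, so Lemma~\ref{lemma:bundle_h0_dim} gives $\dim(\ker(\Delta_\gF)) \leq 1$ on the connected $G$. If the kernel is trivial then $\mY = 0$ and is trivially not separable; otherwise let $\vh$ span it, so each column satisfies $\mY^k = \beta_k \vh$ for a scalar $\beta_k$, whence the feature of every node $v$ is $\mY_v = \vh_v \bm{\beta}$ with $\bm{\beta} = (\beta_1,\dots,\beta_{f_2})$, i.e. all node features are collinear. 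This is exactly the configuration exploited in the proof of Proposition~\ref{prop:impossible_separation}: choosing three nodes in three distinct classes with $\vh_v \leq \vh_u \leq \vh_w$, one writes $\vh_u$ as a convex combination of $\vh_v$ and $\vh_w$, which places $\mY_u$ in the convex hull of $\mY_v$ and $\mY_w$, and Lemma~\ref{lemma:convex_hull_separability} then precludes linear separation.

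I expect the only point needing care, rather than a genuine obstacle, is transferring Proposition~\ref{prop:impossible_separation} from its original diffusion-limit phrasing to the finite-depth output $\mY$. This transfer is immediate because that proof never used convergence per se: it used only that the relevant features lie in the at-most-one-dimensional kernel, which is precisely what the energy-decay step supplies here. No estimates beyond Theorem~\ref{theo:1dim_oversmoothing} and the facts assembled in Section~\ref{sec:diffusion_power} are required.
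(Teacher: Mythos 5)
Your proof is correct and takes essentially the same route as the paper's: Theorem~\ref{theo:1dim_oversmoothing} forces $E_\gF(\mY)=0$ so every column of $\mY$ lies in $\ker(\Delta_\gF)$, and the kernel-dimension/convex-hull argument from the proof of Proposition~\ref{prop:impossible_separation} then rules out linear separation of three or more classes. The extra details you spell out (layer-wise iteration of the energy bound, $\gH^1_+ \subseteq \gH^1$, and the trivial-kernel case) are exactly what the paper's two-sentence proof leaves implicit.
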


\begin{proof}
By Theorem \ref{theo:1dim_oversmoothing}, if $E_\gF(\mX) = 0$, then $E_\gF(\mY) = 0$ and, therefore, $\mY^i \in \ker(\Delta_\gF)$ for any column $i$. The proof of Proposition \ref{prop:impossible_separation} showed that the classes of such a graph cannot be linearly separated for any such feature matrix $\mY$. \end{proof}

\section{Sheaf Learning Proof}

\SheafLearning*
\begin{proof}
Assume that the node features are $k$-dimensional and, therefore, the graph feature matrix has shape $\mX \in \sR^{n \times k}$. Define the finite set $A := \{ (\vx_v, \vx_u) : v \rightarrow u \in E\} \subset \sR^{2k}$ containing the concatenated features of the nodes for all the oriented edges $v \rightarrow u$ of the graph. Then, because each $(\vx_v, \vx_u)$ is unique, for any dimension $d$, there exists a (well-defined) function $g: A \to \sR^{d \times d}$ sending $(\vx_v, \vx_u) \mapsto \gF_{v \tleq e = (v, u)}$. We now show that this function can be extended to a smooth function $f: \sR^{2k} \to \sR^{d \times d}$ and, therefore, it can be approximated by an MLP due to the Universal Approximation Theorem~\citep{hornik1989multilayer, hornik1991approximation}.

Let $I$ be an index set for the elements of $A$. Then, because $A$ is finite, for any $\va_{i \in I}$, we can find a sufficiently small neighbourhood $U_i \subset \sR^{2k}$ such that $\va_i \in U_i$ and $\va_j \notin U_i$ for $j \neq i \in I$. Furthermore, for each $i \in I$, we can find a (smooth) bump function $\psi_i: \sR^{2k} \to \sR$ such that $\psi_i(\va_i) = 1$ and $\psi_i(\va) = 0$ if $\va \notin U_i$. Then, the function $f(\va) := \sum_{i \in I} g(\va_i)\psi_i(\va)$ is smooth and $f\arrowvert_A = g$.
\end{proof}

\section{Additional model details and hyperparameters}\label{app:hyperparams}

\paragraph{Hybrid transport maps.} Consider the transport maps $-\gF_{v \tleq e}^\top\gF_{u \tleq e}$ appearing in the off-diagonal entires of the sheaf Laplacian $L_\gF$. When learning a sheaf Laplacian, there exists the risk that the features are not sufficiently good in the early layers (or in general) and, therefore, it might be useful to consider a hybrid transport map of the form $-\gF_{v \tleq e}^\top\gF_{u \tleq e} \bigoplus \mF$, where $\bigoplus$ is the direct sum of two matrices and $\mF$ represents a fixed (non-learnable map). In particular, we consider maps of the form $-\gF_{v \tleq e}^\top\gF_{u \tleq e} \bigoplus \mI_1 \bigoplus -\mI_1$ which essentially appends a diagonal matrix with $1$ and $-1$ on the diagonal to the learned matrix. From a signal processing perspective, these correspond to a low-pass and a high-pass filter that could produce generally useful features. We treat the addition of these fixed parts as an additional hyper-parameter.  

\paragraph{Adjusting the activation magnitudes.} We note that in practice we find it useful to learn an additional parameter $\varepsilon \in [-1, 1]^d$ (i.e. a vector of size $d$) in the discrete version of the models:
\begin{equation}
    \mX_{t+1} = (1 + \varepsilon)\mX_{t} - \sigma \Big(\Delta_{\gF(t)} (\mI \otimes \mW_1^t) \mX_t \mW_2^t \Big). 
\end{equation}
This allows the model to adjust the relative magnitude of the features in each stalk dimension. This is used across all of our experiments in the discrete models. 

\paragraph{Augmented normalised sheaf Laplacian.} Similarly to GCN which normalises the Laplacian by the augmented degrees (i.e. $(\mD + \mI_n)^{-1/2}$, where $\mD$ is the usual diagonal matrix of node degrees), we similarly use $(D + \mI_{nd})^{-1/2}$ for normalisation to obtain greater numerical stability. This is particularly helpful when learning general sheaves as it increases the numerical stability of SVD. 

\begin{table}[h]
    \centering
    \caption{Hyper-parameter ranges for the discrete and continous models.}
    \resizebox{0.9\textwidth}{!}{%
    \begin{tabular}{l| cc}
    \toprule 
    & 
    \textbf{Discrete Models} &
    \textbf{Continous Models} \\ 
    \midrule 
    
    Hidden channels & $(8, 16, 32)$ (WebKB) and $(8, 16, 32, 64)$ (others)        & $(8, 16, 32, 64)$ \\
    Stalk dim $d$             & $1-5$ & $1-5$ \\
    Layers          & $2-8$ & N/A \\
    Learning rate   & $0.02$ (WebKB) and $0.01$ (others) & Log-uniform $[0.01, 0.1]$ \\
    Activation      & ELU & ELU \\
    Weight decay (regular parameters) & Log-uniform $[-4.5, 11.0]$ & Log-uniform $[-6.9, 13.8]$ \\
    Weight decay (sheaf parameters) & Log-uniform $[-4.5, 11.0]$ & Log-uniform $[-6.9, 13.8]$ \\
    Input dropout & Uniform $[0, 0.9]$ & Uniform $[0, 0.9]$ \\
    Layer dropout & Uniform $[0, 0.9]$ & N/A \\
    Patience (epochs) & $100$ (Wiki) and $200$ (others) & 50 \\
    Max training epochs & $1000$ (Wiki) and $500$ (others) & 50. \\
    Integration time & N/A & Uniform $[1.0, 9.0]$. \\
    Optimiser & Adam~\citep{kingma2014adam} & Adam \\
    \bottomrule
    \end{tabular}
    }
    \label{tab:hyper-range}
\end{table}

\paragraph{Hyperparameters and training procedure.} We train all models for a fixed maximum number of epochs and perform early stopping when the validation metric has not improved for a pre-specified number of patience epochs. We report the results at the epoch where the best validation metric was obtained for the model configuration with the best validation score among all models. We use the hyperparameter optimisation tools provided by Weights and Biases~\citep{wandb} for this procedure. The complete hyperparameter ranges we optimised over can be found in Table \ref{tab:hyper-range}. All models were trained and fine-tuned on an Amazon AWS p2.xlarge machine containing 8 NVIDIA K80 GPUs and using a 2.3 GHz (base) and 2.7 GHz (turbo) Intel Xeon E5-2686 v4 Processor.

\subsection{Computational Complexity}\label{app:complexity}

We can split the computational complexity into the following computational steps:
\begin{enumerate}
    \item \textbf{The linear transformation $\mX' = (\mI \otimes \mW_1^t) \mX_t \mW_2^t$. } $\mW_1$ is a $d \times d$ matrix and $\mW_2$ is an $f \times f$ matrix. Therefore, the complexity is $\gO\left(n(d^2f + df^2)\right) = \gO\left(n(cd + cf)\right) = \gO(nc^2)$. 
    \item \textbf{Message Passing. } Since $\Delta_\gF$ is a sparse matrix, the message passing is implemented as a sparse-dense matrix multiplication $\Delta_\gF \mX'$. When the restriction maps are diagonal, the complexity of this operation is $\gO\left(mc\right)$, since the multiplication of each block matrix in $\Delta_\gF$ and block vector in $\mX'$ reduces to a an element-wise vector multiplication. When the restriction maps are non-diagonal, the complexity is $\gO\left(mdc\right)$ because each matrix-vector multiplication is $\gO(d^2)$ and we need to perform $f$ of them for each node and edge. 
    \item \textbf{Learning the Sheaf. } Assume we learn the restriction maps via $\Phi(\vx_v, \vx_u) = \sigma(\mV [\mathrm{vec}(\mX_v)|| \mathrm{vec}(\mX_u)])$, where $\mathrm{vec}(\cdot)$ converts the $d \times f$ matrix into a $df$-sized vector. This operation has to be performed for each incident node-edge pair. Therefore, the complexity is $\gO(md^2f) = \gO(mcd)$, when learning diagonal maps since $\mV$ is a $d \times 2df$ matrix. When learning a non-diagonal matrix, the number of rows of $\mV$ is $\gO(d^2)$ and the complexity becomes $\gO(md^3f) = \gO(mcd^2)$. Note, however, that in general, the complexity of learning the restriction maps can be significantly reduced to $\gO(mc)$ (in the diagonal case) and $\gO(m(c + d^2))$ (in the non-diagonal case) by, for instance, using an MLP with constant hidden-size. 
    \item \textbf{Constructing the Laplacian. } To build the Laplacian, we need to perform the matrix-matrix multiplications involved in computing each of the blocks. The complexity of that is $\gO(md)$ in the diagonal case and $\gO(md^3)$ in the non-diagonal case. Computing the normalisation of the Laplacian is $\gO(nd)$ in the diagonal case and $\gO(nd^3)$ in the non-diagonal case.
\end{enumerate}

Putting everything together, the final complexity is $\gO(nc^2 + mcd)$ in the diagonal case and $\gO\left(n(c^2 + d^3) + m(cd^2 + d^3)\right)$ in the non-diagonal case. When learning the sheaf via an MLP with constant hidden size, the complexity reduces to $\gO(nc^2 + mc)$ (same as GCN) and $\gO\left(n(c^2 + d^3) + m(c + d^3)\right)$, respectively. 

For learning orthogonal matrices, we rely on the library Torch Householder \citep{obukhov2021torchhouseholder} which provides support for fast transformations with large batch sizes. 
\begin{figure*}[t]
    \begin{subfigure}[b]{0.32\columnwidth}
        \centering
        \includegraphics[width=\columnwidth]{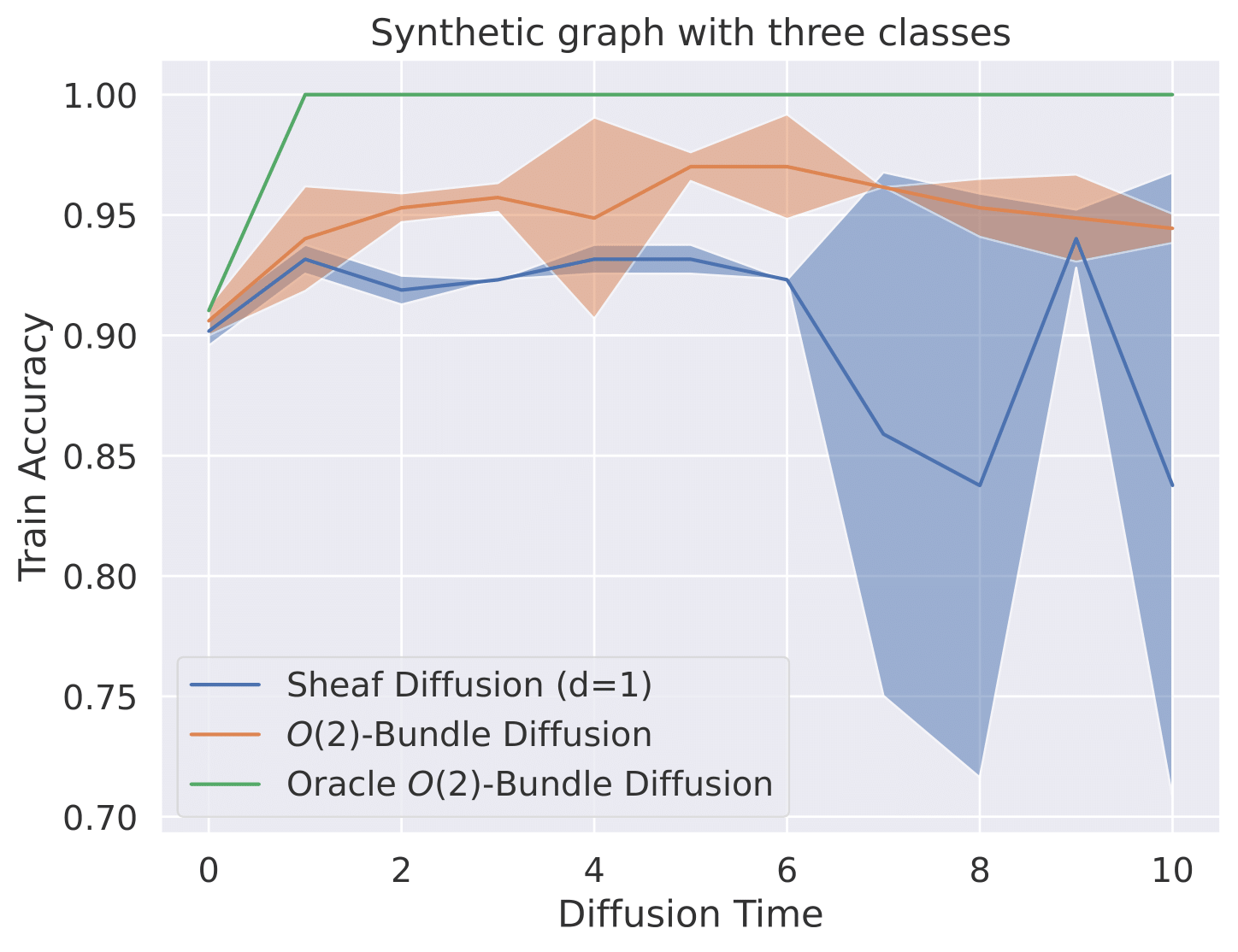}
    \end{subfigure}
    \hfill
    \begin{subfigure}[b]{0.32\columnwidth}
        \centering
        \includegraphics[width=\columnwidth]{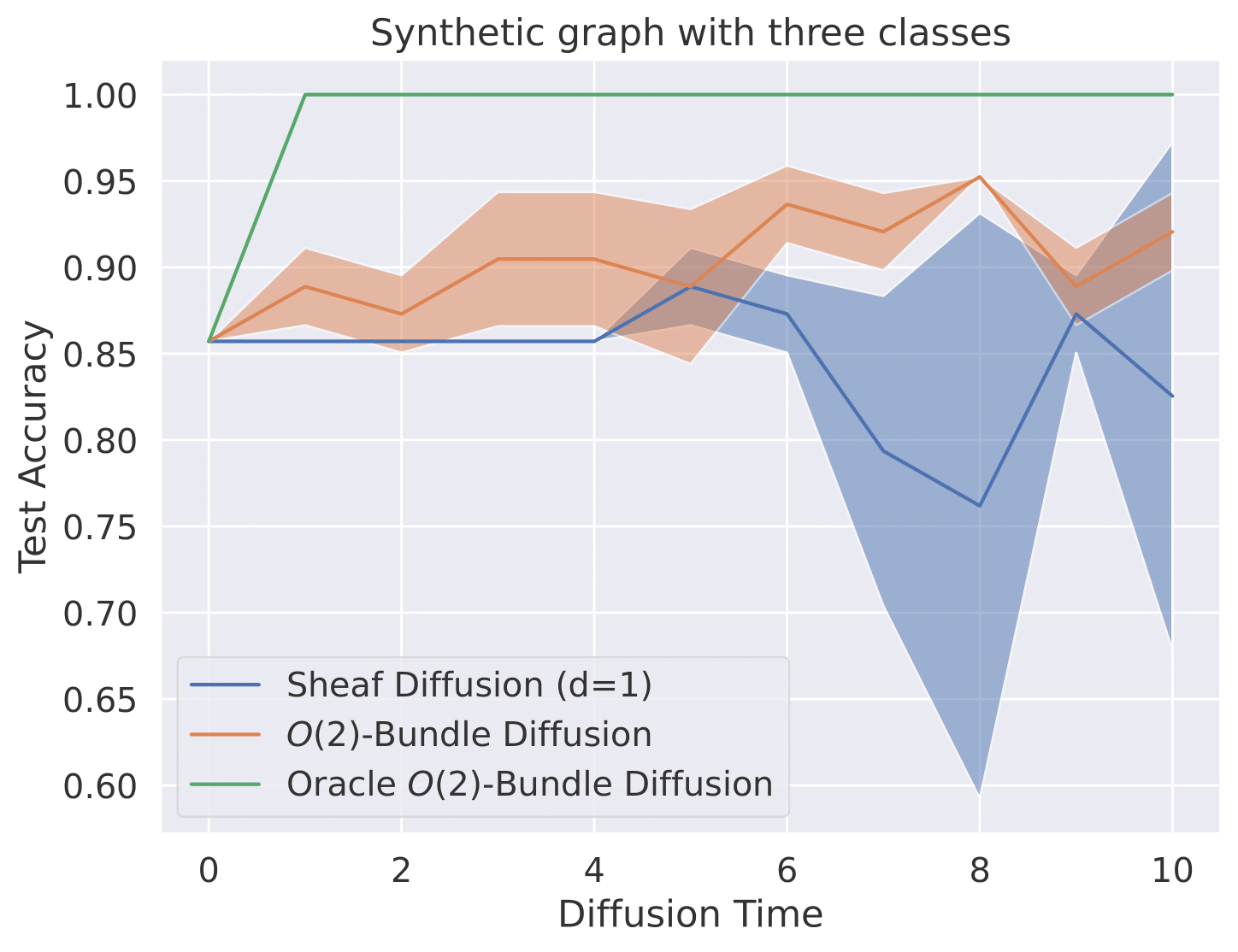}
    \end{subfigure}
    \hfill
    \begin{subfigure}[b]{0.32\columnwidth}
        \centering
        \includegraphics[width=\columnwidth]{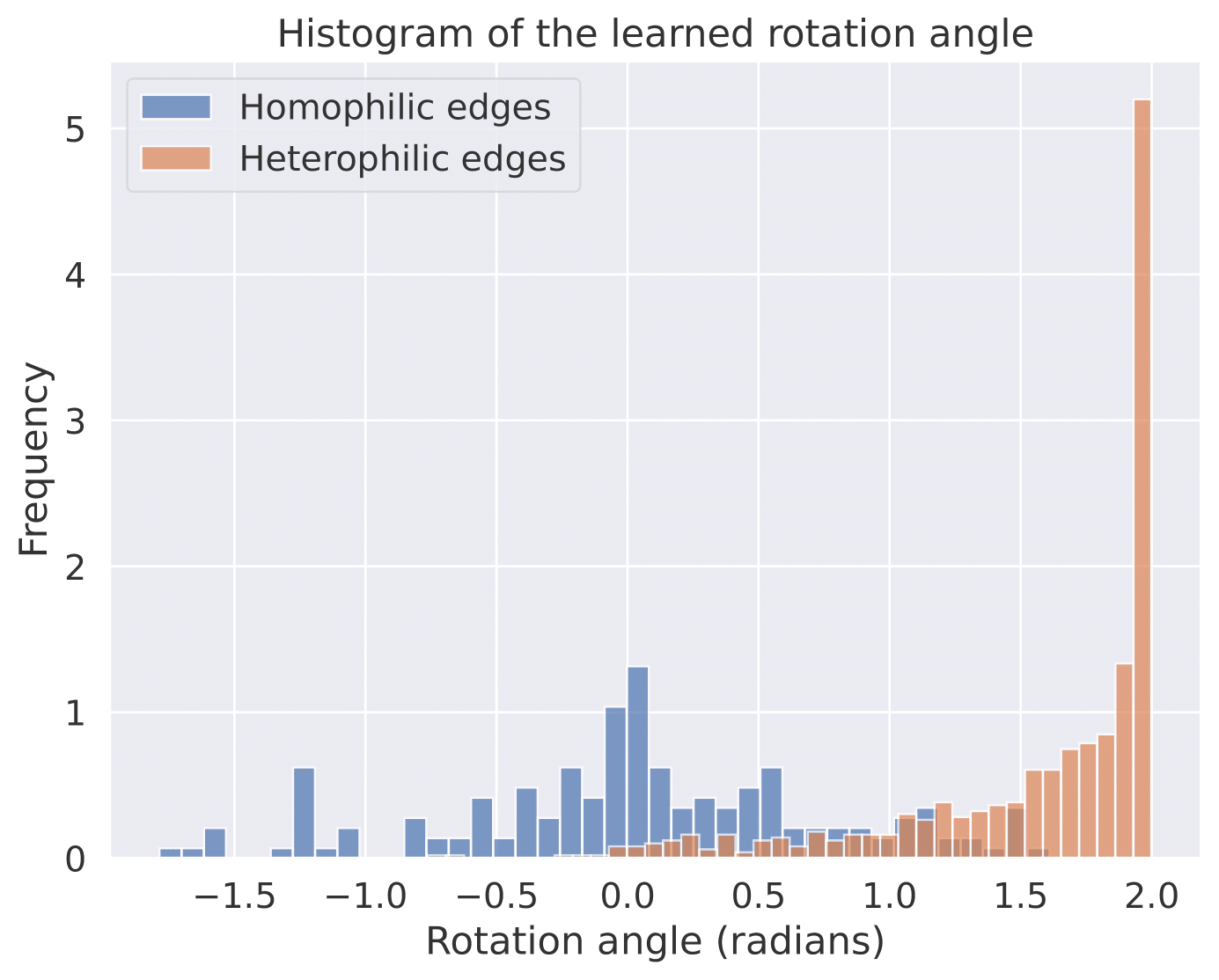}
    \end{subfigure}
    \caption{(\textit{Left}) Train accuracy as a function of diffusion time. (\textit{Middle}) Test accuracy as a function of diffusion time. (\textit{Right}) Histogram of the learned rotation angle of the $2D$ transport maps. The performance of the bundle model is superior to that of the one-dimensional sheaf. The transport maps learned by the model are aligned with our expectation: the model learns to rotate more (i.e. to move away) the neighbours belonging to different classes than the neighbours belonging to the same class. }
    \label{fig:3d_synthetic}
\end{figure*}

\section{Additional Experiments}\label{app:extra_experiments}

In this section, we provide a series of additional experiments and ablation studies. 

\paragraph{Two-dimensional synthetic experiment.} In the main text we focused on a synthetic example involving sheaves with one-dimensional stalks. We now consider a graph with three classes and two-dimensional features, with edge homophily level $0.2$. We use $80\%$ of the nodes for training and $20\%$ for testing. First, we know that a discrete vector bundle with two-dimensional stalks that can solve the task in the limit exists from Theorem \ref{theo:orth_separation}, while based on Proposition \ref{prop:impossible_separation} no sheaf with one-dimensional stalks can perfectly solve the tasks. 

Therefore, similarly to the synthetic experiment in the main text, we compare two similar models learning the sheaf from data: one using $1D$ stalks and another using $2D$ stalks. As we see from Figure \ref{fig:3d_synthetic}, the discrete vector bundle model has better training and test-time performance than the one-dimensional counterpart. Nonetheless, none of the two models manages to match the perfect performance of the ideal sheaf on this more challenging dataset. From the final subfigure, we also see that the model learns to rotate more across the heterophilic edges in order to push away the nodes belonging to other classes. The prevalent angle of this rotation is $2$ radians, which is just under $120^{\circ} = 360^{\circ} / C$, where $C=3$ is the number of classes. Thus the model learns to position the three classes at approximately equal arc lengths from each other for maximum linear separability.

\paragraph{Continous Models.} To also understand how the continuous version of our models performs against other PDE-based GNNs we include a category of such SOTA models:
CGNNs~\citep{xhonneux2020continuous}, GRAND~\citep{chamberlain2021grand}, and BLEND~\citep{chamberlain2021blend}. Results are included in Table~\ref{tab:cont_table}. Generally, continuous models do not perform as well as the discrete ones because they are constrained to use the same set of weights for the entire integration time and cannot use dropout. Therefore, the model capacity is difficult to increase without overfitting. Nonetheless, our continuous models generally outperform other state-of-the-art continuous models, which also share the same limitations. Finally, we note that the baselines were fine-tuned over the same hyper-parameter ranges as in Table \ref{tab:hyper-range}

\begin{table*}[t]
    \centering
    \caption{Results on node classification datasets sorted by their homophily level. Top three models are coloured by \textbf{\textcolor{red}{First}}, \textbf{\textcolor{blue}{Second}}, \textbf{\textcolor{violet}{Third}}. Our models are marked {\bf NSD}.}
    \resizebox{1.0\textwidth}{!}{%
    \begin{tabular}{l ccccccccc}
    \toprule 
         &
         \textbf{Texas} &  
         \textbf{Wisconsin} & 
         \textbf{Film} &
         \textbf{Squirrel} &
         \textbf{Chameleon} &
         \textbf{Cornell} &
         \textbf{Citeseer} & 
         \textbf{Pubmed} & 
         \textbf{Cora} \\
         
         Hom level &
         \textbf{0.11} &
         \textbf{0.21} & 
         \textbf{0.22} & 
         \textbf{0.22} & 
         \textbf{0.23} &
         \textbf{0.30} &
         \textbf{0.74} &
         \textbf{0.80} &
         \textbf{0.81} \\ 
         
         \#Nodes &
         183 &
         251 & 
         7,600 &
         5,201 & 
         2,277 &
         183 &
         3,327 &
         18,717 &
         2,708 \\
         
         \#Edges &
         295 &
         466 & 
         26,752 & 
         198,493 & 
         31,421 &
         280 &
         4,676 &
         44,327 &
         5,278 \\
         
         \#Classes &
         5 &
         5 & 
         5 &
         5 &
         5 & 
         5 &
         7 &
         3 &
         6 \\ 
         \midrule
         
        \textbf{Cont Diag-NSD} &
         $\third{82.97} {\scriptstyle \pm 4.37}$ &
         $\first{86.47} {\scriptstyle \pm 2.55}$ &
         $\second{36.85} {\scriptstyle \pm 1.21}$ & 
         $38.17 {\scriptstyle \pm 9.29}$ & 
         $\third{62.06} {\scriptstyle \pm 3.84}$ &
         $80.00 {\scriptstyle \pm 6.07}$ & 
         $76.56 {\scriptstyle \pm 1.19}$ &
         $\second{89.47} {\scriptstyle \pm 0.42}$ &
         $86.88 {\scriptstyle \pm 1.21}$\\
         
         \textbf{Cont} $\mathbf{O(d)}$\textbf{-NSD} &
         $82.43 {\scriptstyle \pm 5.95}$ &
         $\third{84.50} {\scriptstyle \pm 4.34}$ &
         $\third{36.39} {\scriptstyle \pm 1.37}$ & 
         $\third{40.40} {\scriptstyle \pm 2.01}$ & 
         $\second{63.18} {\scriptstyle \pm 1.69}$ &
         $72.16 {\scriptstyle \pm 10.40}$ & 
         $75.19 {\scriptstyle \pm 1.67}$ &
         $89.12 {\scriptstyle \pm 0.30}$ &
         $86.70 {\scriptstyle \pm 1.24}$\\
         
         \textbf{Cont Gen-NSD} &
         $\first{83.78} {\scriptstyle \pm 6.62}$ &
         $\second{85.29} {\scriptstyle \pm 3.31}$ &
         $\first{37.28} {\scriptstyle \pm 0.74}$ & 
         $\first{52.57} {\scriptstyle \pm 2.76}$ & 
         $\first{66.40} {\scriptstyle \pm 2.28}$ &
         $\second{84.60} {\scriptstyle \pm 4.69}$ & 
         $\first{77.54} {\scriptstyle \pm 1.72}$ &
         $\first{89.67} {\scriptstyle \pm 0.40}$ &
         $\second{87.45} {\scriptstyle \pm 0.99}$ \\  
         
         \midrule
         
        BLEND &
          $\second{83.24} {\scriptstyle \pm 4.65}$ &
          $84.12 {\scriptstyle \pm 3.56}$ &
          $35.63 {\scriptstyle \pm 0.89}$ & 
          $\second{43.06} {\scriptstyle \pm 1.39}$ & 
          $60.11 {\scriptstyle \pm 2.09}$ &
         $\first{85.95} {\scriptstyle \pm 6.82}$ & 
         $\third{76.63} {\scriptstyle \pm 1.60}$ &
         $\third{89.24} {\scriptstyle \pm 0.42}$ &
         $\first{88.09} {\scriptstyle \pm 1.22}$ \\ 
         
         GRAND &
          $75.68 {\scriptstyle \pm 7.25}$ &
          $79.41 {\scriptstyle \pm 3.64}$ &
          $35.62 {\scriptstyle \pm 1.01}$ & 
          $40.05 {\scriptstyle \pm 1.50}$ & 
          $54.67 {\scriptstyle \pm 2.54}$ &
          $\third{82.16} {\scriptstyle \pm 7.09}$ & 
         $76.46 {\scriptstyle \pm 1.77}$ &
         $89.02 {\scriptstyle \pm 0.51}$ &
         $\third{87.36} {\scriptstyle \pm 0.96}$\\
        
         CGNN &
          $71.35 {\scriptstyle \pm 4.05}$ &
          $74.31 {\scriptstyle \pm 7.26}$ &
          $35.95 {\scriptstyle \pm 0.86}$ & 
          $29.24 {\scriptstyle \pm 1.09}$ & 
          $46.89 {\scriptstyle \pm 1.66}$ &
          $66.22 {\scriptstyle \pm 7.69}$ & 
          $\second{76.91} {\scriptstyle \pm 1.81}$ &
          $87.70 {\scriptstyle \pm 0.49}$ &
          $87.10 {\scriptstyle \pm 1.35}$\\
         
         \bottomrule
         
    \end{tabular}
    }
    \label{tab:cont_table}
\end{table*}

\paragraph{Positional encoding ablation.} Based on Proposition \ref{prop:sheaf_learning} we proceed to analyse the impact of increasing the expressive power of the model by making the nodes more distinguishable. For that, we equip our datasets with additional features consisting of graph Laplacian positional encodings as originally done in \citet{dwivedi2020benchmarkgnns}. In Table \ref{tab:pos_enc} we see that positional encodings do indeed improve the performance of the continuous models compared to the numbers reported in the main table. Therefore, we conclude that the interaction between the problem of sheaf learning and that of the expressivity of graph neural networks represents a promising avenue for future research. 

\begin{table*}[ht]
    \centering
    \caption{Ablation study for positional encodings. Positional encodings improve performance on some of our models.}
    \resizebox{0.7\textwidth}{!}{%
    \begin{tabular}{l cccc}
    \toprule 
         & 
         \textbf{Eigenvectors} &
         \textbf{Texas} &  
         \textbf{Wisconsin} & 
         \textbf{Cornell} \\
    
    \midrule
         
    \multirow{3}{*}{Cont Diag-SD} & 
    0 & 82.97 $\pm$ 4.37 & $\mathbf{86.47 \pm 2.55}$ & 80.00 $\pm$ 6.07 \\ & 
    2 & $3.51 \pm 5.05$ & 85.69 $\pm$ 3.73 & 81.62 $\pm$ 8.00 \\ & 
    8 & $\mathbf{85.41 \pm 5.82}$ & 86.28 $\pm$ 3.40 & $\mathbf{82.16 \pm 5.57}$ \\ & 
    16 & 82.70 $\pm$ 3.86 & 85.88 $\pm$ 2.75 & 81.08 $\pm$ 7.25 \\ \midrule
    
    \multirow{3}{*}{Cont $O(d)$-SD} & 
    0 & 82.43 $\pm$ 5.95 & 84.50 $\pm$ 4.34 & 72.16 $\pm$ 10.40 \\ &
    2 & 84.05 $\pm$ 5.85 & 85.88 $\pm$ 4.62 & 83.51 $\pm$ 9.70\\ & 
    8 & $\mathbf{84.87 \pm 4.71}$ & $\mathbf{86.86 \pm 3.83}$ & $\mathbf{84.05 \pm 5.85}$ \\ & 
    16 & 83.78 $\pm$ 6.16 & 85.88 $\pm$ 2.88 & 83.51 $\pm$ 6.22 \\ \midrule
    
     \multirow{3}{*}{Cont Gen-SD} & 
    0 & $\mathbf{83.78 \pm 6.62}$ & 85.29 $\pm$ 3.31 & $\mathbf{84.60 \pm 4.69}$ \\ &
    2 & 83.24 $\pm$ 4.32 & 84.12 $\pm$ 3.97 & 81.08 $\pm $ 7.35 \\ & 
    8 & 82.70 $\pm$ 5.70 & 84.71 $\pm$ 3.80 & 83.24 $\pm$ 6.82 \\ & 
    16 & 82.16 $\pm$ 6.19 & $\mathbf{86.47 \pm 3.09}$ & 82.16 $\pm$ 6.07 \\
        \bottomrule
    \end{tabular}
    }
    \label{tab:pos_enc}
\end{table*}

\paragraph{Visualising diffusion.} To develop a better intuition of the limiting behaviour of sheaf diffusion for node classification tasks we plot the diffusion process using an oracle discrete vector bundle for two graphs with $C=3$ (Figure \ref{fig:sheaf_diffusion_3C}) and $C=4$ (Figure \ref{fig:sheaf_diffusion_4C}) classes. The diffusion processes converge in the limit to a configuration where the classes are rotated at $\frac{2\pi}{C}$ from each other, just like in the cartoon diagrams of Figure \ref{fig:orth_separation_all_proofs}. Note that in all cases, the classes are linearly separable in the limit. 

We note that this approach generalises to any number of classes, but beyond $C=4$ it is not guaranteed that they will be linearly separable in $2D$. However, they are still well separated. We include an example with $C=10$ classes in Figure \ref{fig:sheaf_diffusion_10C}.  

\begin{figure}[h]
    \centering
    \includegraphics[width=0.95\textwidth]{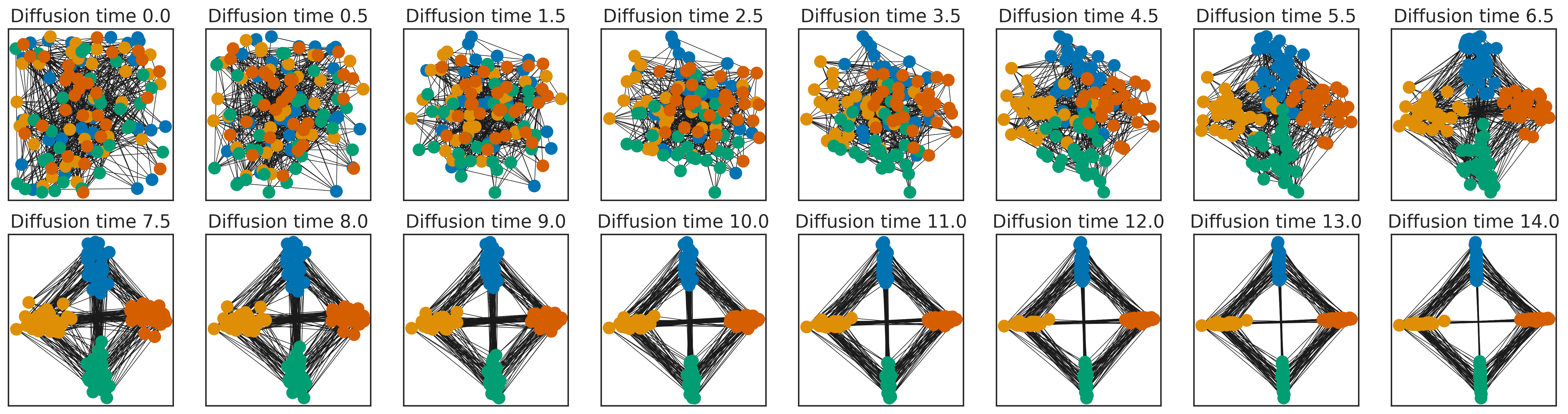}
    \caption{Sheaf diffusion process disentangling the $C=4$ classes over time. The nodes are coloured by their class.}
    \label{fig:sheaf_diffusion_4C}
\end{figure}

\begin{figure}[h]
    \centering
    \includegraphics[width=0.95\textwidth]{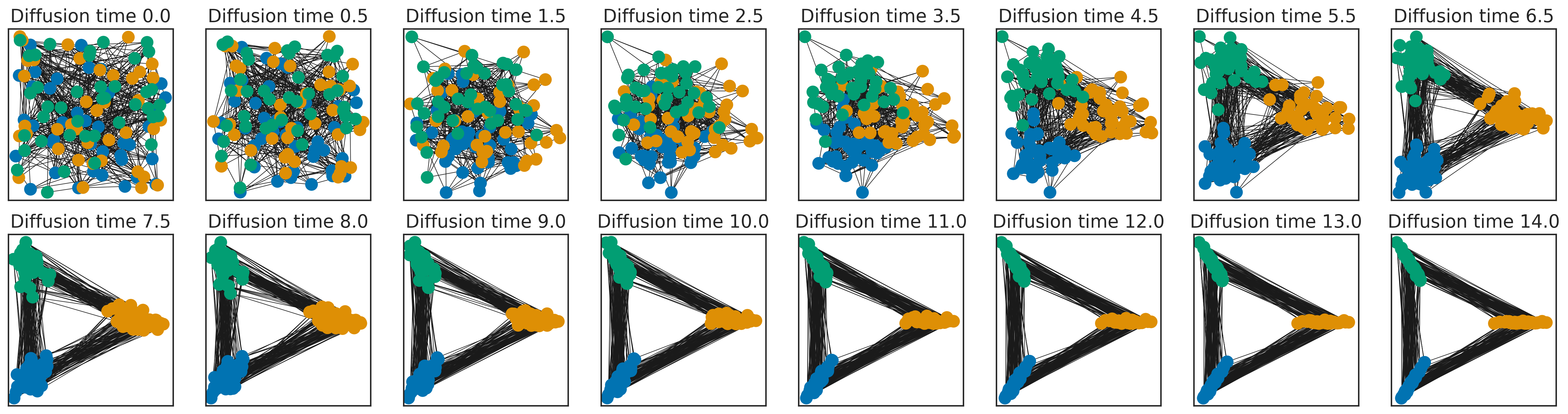}
    \caption{Sheaf diffusion process disentangling the $C=3$ classes over time. The nodes are coloured by their class.}
    \label{fig:sheaf_diffusion_3C}
\end{figure}

\begin{figure}[h]
    \centering
    \includegraphics[width=0.95\textwidth]{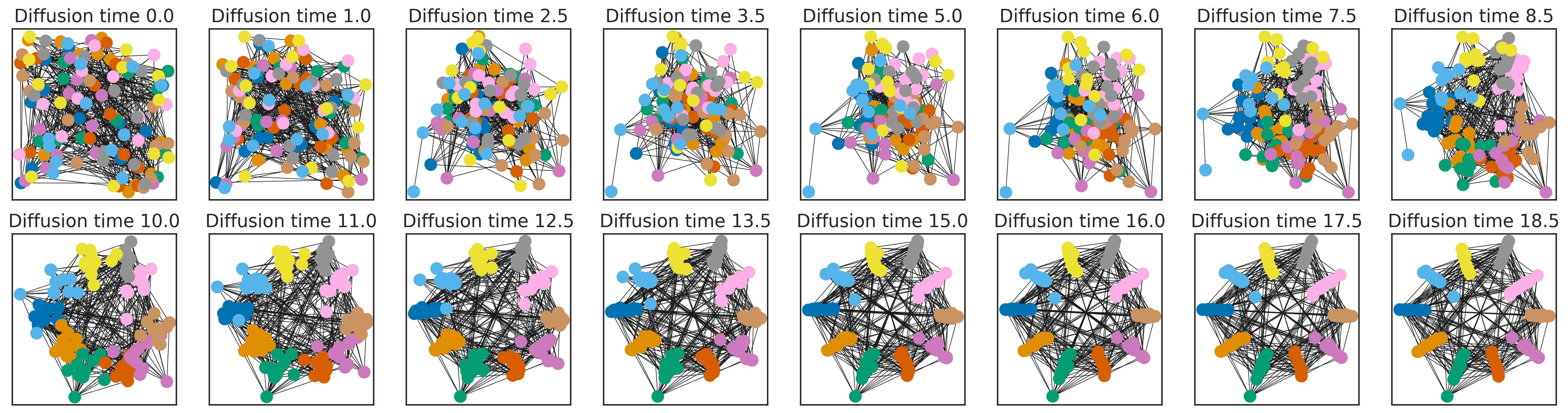}
    \caption{Sheaf diffusion process disentangling the $C=10$ classes over time. The nodes are coloured by their class.}
    \label{fig:sheaf_diffusion_10C}
\end{figure}

\end{document}